\documentclass[11pt, dvipsnames, DIV=12, headings=medium]{scrartcl}
\usepackage[english]{babel}
\usepackage[T1]{fontenc}
\usepackage[utf8]{inputenc}
\usepackage{graphicx}
\usepackage{natbib}
\bibliographystyle{abbrvnat}
\usepackage{adjustbox}
\usepackage{subcaption}
\usepackage{booktabs}
\usepackage{multirow}

\usepackage{url}
\usepackage{multirow}
\usepackage{tikz}
\usetikzlibrary{calc}
\usetikzlibrary{decorations.pathmorphing}
\usetikzlibrary{positioning}
\usetikzlibrary{shapes}
\usetikzlibrary{arrows.meta,backgrounds,automata}
\usetikzlibrary{positioning,shapes,matrix,calc}
\tikzstyle{vertex}=[circle, draw, fill=gray!80!white,thick,scale=1.2]
\tikzstyle{edge}=[draw=black, thick,-]
\usetikzlibrary{hobby,arrows,decorations.pathmorphing,backgrounds,positioning,fit,petri,calc}
\pgfdeclarelayer{background}
\pgfsetlayers{background,main}
\usepackage{csquotes}
\usepackage{paralist}

\usepackage{ifthen}
\newcommand{\CC}[1][]{$\text{C\hspace{-.25ex}}^{_{_{_{++}}}}
	\ifthenelse{\equal{#1}{}}{}{\text{\hspace{-.625ex}#1}}$}

\usepackage[framemethod=tikz]{mdframed}

\definecolor{mycolor}{rgb}{0.122, 0.435, 0.698}

\newmdenv[innerlinewidth=0.5pt, roundcorner=4pt,linecolor=mycolor,innerleftmargin=6pt,
innerrightmargin=6pt,innertopmargin=6pt,innerbottommargin=6pt]{mybox}

\usepackage{bm}

\newcommand{\oms}{\{\!\!\{}
\newcommand{\cms}{\}\!\!\}}

\usepackage{amsmath}
\usepackage{amssymb}
\usepackage{amsthm}
\usepackage{amsfonts}
\usepackage{thmtools}		
\usepackage{mleftright}
\usepackage{stmaryrd}
\usepackage{nicefrac}
\usepackage{algorithm}
\usepackage{algorithmicx}
\usepackage[noend]{algpseudocode}

\theoremstyle{definition}
\newtheorem{theorem}{Theorem}

\newtheorem{proposition}[theorem]{Proposition}

\newtheorem{observation}{Observation}

\newtheorem{lemma}[theorem]{Lemma}

\newtheorem{definition}[theorem]{Definition}

\usepackage{thm-restate}
\usepackage[mathic=true]{mathtools}
\usepackage{fixmath}
\usepackage{siunitx}

\usepackage{pifont}
\newcommand{\cmark}{\ding{51}}
\newcommand{\xmark}{\ding{55}}

\usepackage{enumitem}
\setlist[enumerate]{itemsep=0.2ex, topsep=0.5\topsep}
\setlist[description]{itemsep=0.2ex, topsep=0.5\topsep}
\setlist[itemize]{itemsep=0.2ex, topsep=0.5\topsep}

\usepackage{setspace}
\usepackage[protrusion=true,expansion=false, activate={true,nocompatibility},final,kerning=true,spacing=true]{microtype}
\microtypecontext{spacing=nonfrench}
\usepackage{ellipsis}
\usepackage{xspace}
\usepackage{hfoldsty}

\usepackage{tcolorbox}
\usepackage[scaled=0.86]{helvet}
\usepackage{lmodern}

\usepackage[
    pdfa,
    hidelinks,
    pdftex, 
    pdfdisplaydoctitle,
    pdfpagelabels,
    pdfauthor={Christopher Morris},
    pdftitle={},
    pdfsubject={},
    pdfkeywords={},
    pdfproducer={Latex with the hyperref package},
    pdfcreator={pdflatex}
]{hyperref}

% Let cleveref and thmtools work together
\makeatletter
\def\thmt@refnamewithcomma #1#2#3,#4,#5\@nil{%
	\@xa\def\csname\thmt@envname #1utorefname\endcsname{#3}%
	\ifcsname #2refname\endcsname
	\csname #2refname\expandafter\endcsname\expandafter{\thmt@envname}{#3}{#4}%
	\fi
}
\makeatother
\usepackage[capitalise,noabbrev]{cleveref}   

\newcommand{\new}[1]{\emph{#1}}

\newcommand{\cO}{\ensuremath{{\mathcal O}}\xspace}

\newcommand{\bbR}{\ensuremath{\mathbb{R}}}

\newcommand{\bbN}{\ensuremath{\mathbb{N}}}

\newcommand{\NN}{\mathbb{N}}

\newcommand{\wl}[2]{$(#1, #2)$\text{-}\textsf{LWL}\xspace}
\newcommand{\seq}[2]{$(#1, #2)$\text{-}\textsf{SpeqNet}\xspace}
\newcommand{\kwl}{$k$\text{-}\textsf{WL}\xspace}
\newcommand{\wlone}{$1$\text{-}\textsf{WL}\xspace}

\newcommand{\deltakwl}{$\delta$-$k$-\textsf{WL}\xspace}

\newcommand{\localkwl}{$\delta$-$k$-\textsf{LWL}\xspace}
\newcommand{\pluskwl}{$\delta$-$k$-\textsf{LWL}\xspace$\!\!^+$\xspace}
\newcommand{\pluswl}{$(k,s)$-\textsf{LWL}\xspace$\!\!^+$\xspace}

\newcommand{\kgnn}{$k$\textrm{-}\textsf{GNN}\xspace}
\newcommand{\kign}{$k$\textrm{-}\textsf{FGNN}\xspace}

\newcommand{\shp}{\textsf{SP}\xspace}
\newcommand{\gr}{\textsf{GR}\xspace}
\newcommand{\wloa}{\textsf{WLOA}\xspace}
\newcommand{\gin}{\textsf{GIN}\xspace}
\newcommand{\gine}{\textsf{GINE}\xspace}
\newcommand{\gineps}{\textsf{GIN-$\varepsilon$}\xspace}
\newcommand{\gineeps}{\textsf{GINE-$\varepsilon$}\xspace}

\newtheorem{claim}[theorem]{Claim}

\renewcommand{\vec}[1]{\mathbf{#1}}

\usepackage[auth-lg]{authblk}

\recalctypearea

\setcounter{Maxaffil}{0}

\title{\LARGE\normalfont\bfseries SpeqNets: Sparsity-aware Permutation-equivariant Graph Networks}
\author[1,2]{Christopher Morris\footnote{Email: \texttt{chris@christophermorris.info}}}
\author[2]{Gaurav Rattan}
\author[3]{Sandra Kiefer}
\author[1]{Siamak Ravanbakhsh}

\affil[1]{McGill University and Mila Quebec AI Institute}
\affil[2]{RWTH Aachen University}
\affil[3]{Max Planck Institute for Software Systems}
\date{\vspace{-30pt}}

\begin{document}
\maketitle

\begin{abstract}
While message-passing graph neural networks have clear limitations in approximating permutation-equivariant functions over graphs or general relational data, more expressive, higher-order graph neural networks do not scale to large graphs. They either operate on $k$-order tensors or consider all $k$-node subgraphs, implying an exponential dependence on $k$ in memory requirements, and do not adapt to the sparsity of the graph. By introducing new heuristics for the graph isomorphism problem, we devise a class of universal, permutation-equivariant graph networks, which, unlike previous architectures, offer a fine-grained control between expressivity and scalability and adapt to the sparsity of the graph. These architectures lead to vastly reduced computation times compared to standard higher-order graph networks in the supervised node- and graph-level classification and regression regime while significantly improving standard graph neural network and graph kernel architectures in terms of predictive performance.
\end{abstract}

\section{Introduction}
Graph-structured data is ubiquitous across application domains ranging from chemo- and bioinformatics~\citep{Barabasi2004,Jum+2021,Sto+2020} to image~\citep{Sim+2017} and social-network analysis~\citep{Eas+2010}. To develop successful machine-learning models in these domains, we need techniques that exploit the rich information inherent in the graph structure and the feature information within nodes and edges. In recent years, numerous approaches have been proposed for machine learning with graphs---most notably, approaches based on \new{graph kernels}~\citep{Borg+2020,Kri+2019} or using \new{graph neural networks} (GNNs)~\citep{Cha+2020,Gil+2017,Gro+2021,Mor+2022}. Here, graph kernels based on the \new{$1$-dimensional Weisfeiler--Leman algorithm} (\wlone)~\citep{Wei+1968}, a simple heuristic for the graph isomorphism problem, and corresponding GNNs~\citep{Mor+2019,Xu+2018b} have recently advanced the state-of-the-art in supervised node- and graph-level learning. However, the \wlone operates via simple neighborhood aggregation, and the purely local nature of the related approaches misses important patterns in the given data. Moreover, they are only applicable to binary structures and therefore cannot deal with general structures containing relations of higher arity, e.g., hypergraphs. A more powerful algorithm for graph isomorphism testing is the \emph{$k$-dimensional Weisfeiler--Leman algorithm} (\kwl)~\citep{Bab1979,Cai+1992}.\footnote{In~\citep{Bab+2016}, László Babai mentions that he introduced the algorithm in 1979 together with Rudolf Mathon.}
The algorithm captures more global, higher-order patterns by iteratively computing a coloring or labeling for $k$-tuples defined over the set of nodes of a given graph based on a certain notion of adjacency between tuples. See \citep{Kie2020b} for a survey and more background. 
However, since the algorithm considers all $n^k$ many $k$-tuples of an $n$-node graph, it does not scale to large real-world graphs. Moreover, the cardinality of the considered neighborhood is always $k\cdot n$. Hence, a potential \emph{sparsity} of the input graph does not reduce the running time.

New neural architectures that possess the same power as the \kwl in terms of separating non-isomorphic graphs~\citep{Azi+2020,Gee+2020b,Mar+2019b} suffer from the same drawbacks, i.e., their memory requirement is lower-bounded by $n^k$ for an $n$-node graph, and they have to resort to dense matrix multiplication. Recently,~\citet{Morris2020b} introduced the local variant (\localkwl) of the \kwl considering only a subset of the neighborhoods in \kwl. However, like the original algorithm, the local variant operates on the set of all possible $k$-tuples, again resulting in the same (exponential) memory requirements, rendering the algorithm not practical for large, real-world graphs. 

\paragraph{Present work}
To address the described drawbacks, we introduce a new set of heuristics for the graph isomorphism problem, denoted \wl{k}{s}, which only considers a subset of all $k$-tuples, namely those \emph{inducing subgraphs with at most $s$ connected components}. We study the effect of $k$ and $s$ on the expressive power of the heuristics. Specifically, we show that the \wl{k}{1} induces a hierarchy of provably expressive heuristics for the graph isomorphism problem, i.e., with increasing $k$, the algorithm becomes strictly more expressive. Additionally, we prove that the \wl{k}{2} is strictly more expressive than the \wl{k}{1}. Further, we separate the \wl{k}{2} and \wl{k}{k} by showing that the \wl{k}{k} is strictly more expressive than the  \wl{k}{2}. Building on these combinatorial insights, we derive corresponding provably expressive, permutation-equivariant neural architectures, denoted $(k,s)$-\textsf{SpeqNets}, which offer a more fine-grained trade-off between scalability and expressivity compared to previous architectures based on the \kwl, see~\cref{overview} for a high-level overview of the theoretical results. Empirically, we show how our architectures offer vastly reduced computation times while beating baseline GNNs and other higher-order graph networks in terms of predictive performance on well-known node- and graph-level prediction benchmark datasets.

\begin{figure}[t]
	\begin{center}
		\resizebox{0.8\textwidth}{!}{
			
			\tikzset{
				treenode/.style = {shape=rectangle, rounded corners,
					draw, align=center,
					minimum width=50pt,
				}
			}
			\trimbox{0pt 20pt 0pt 10pt}{
				\begin{tikzpicture}[scale=1.4,font=\footnotesize,>=stealth', thick,sibling distance=15mm, level distance=30pt,minimum size=18pt, sibling distance=50pt]
						
					\node(aaa) at (-5.0,0) [treenode,fill=YellowGreen!60, minimum width=60pt] {$1$-\textsf{WL}};
					\node(a) at (-3.0,0) [treenode,fill=YellowGreen!60, minimum width=60pt] {\wl{1}{1}};
					\node(aa) at (-3.0,1) [treenode,fill=RubineRed!50, minimum width=60pt] {\seq{1}{1}};
					
					\node(aaaa) at (-5.0,1) [treenode,fill=RubineRed!50, minimum width=60pt] {\textsf{GNN}s};
					
					\node(b) at (-1.0,0) [treenode,fill=YellowGreen!60, minimum width=60pt] {\wl{2}{1}};
					\node(bb) at (-1.0,1) [treenode,fill=RubineRed!50, minimum width=60pt] {\seq{2}{1}};
					
					\node(c) at (1.0,0) [treenode,fill=YellowGreen!60, minimum width=60pt] {\wl{3}{1}};
					\node(cc) at (1.0,1) [treenode,fill=RubineRed!50, minimum width=60pt] {\seq{3}{1}};
					
					\node(d) at (4.0,0) [treenode,fill=YellowGreen!60, minimum width=60pt] {\wl{k}{1}};
					\node(dd) at (4.0,1) [treenode,fill=RubineRed!50, minimum width=60pt] {\seq{k}{1}};

					\node(f) at (4.0,-1.0) [treenode,fill=YellowGreen!60, minimum width=60pt] {\wl{k}{2}};
						    
					\node(g) at (1.0,-1.0) [treenode,fill=YellowGreen!60, minimum width=60pt] {\wl{k}{k}};
					
					\node(h) at (-1.0,-1.0) [treenode,fill=gray!30, minimum width=60pt] {\deltakwl};
					
					\node(i) at (-3.0,-1.0) [treenode,fill=gray!30, minimum width=60pt] {\pluskwl};
					
					\node(j) at (-5.0,-1.0) [treenode,fill=gray!30, minimum width=60pt] {\kwl};

					\node(e) at (2.5,0) [] {$\cdots$};
							
					\draw[->] (a) -- (b) node[midway,label={[shift={(0.0,-.39)}]\small $\sqsupset$}]{};
					\draw[->] (b) -- (c) node[midway,label={[shift={(0.0,-.39)}]\small $\sqsupset$}]{};
					\draw[->] (c) -- (e) node[midway,label={[shift={(0.0,-.39)}]\small $\sqsupset$}]{};
					\draw[->] (e) -- (d) node[midway,label={[shift={(0.0,-.39)}]\small $\sqsupset$}]{};
					
					\draw[->] (d) -- (f) node[midway,label={[shift={(-.25,-0.65)}]\small $\sqsupset$}]{};
							
					\draw[->] (f) -- (g) node[midway,label={[shift={(0.1,-0.39)}]\small $\sqsupset$}]{};
					
					\draw[<->] (g) -- (h) node[midway,label={[shift={(0.0,-0.39)}]\small $\equiv$}]{};
					
					\draw[<->] (a) -- (aa) node[midway,label={[shift={(-.25,-0.65)}]\small $\equiv$}]{};
					\draw[<->] (b) -- (bb) node[midway,label={[shift={(-.25,-0.65)}]\small $\equiv$}]{};
					\draw[<->] (c) -- (cc) node[midway,label={[shift={(-.25,-0.65)}]\small $\equiv$}]{};
					\draw[<->] (d) -- (dd) node[midway,label={[shift={(-.25,-0.65)}]\small $\equiv$}]{};
					
					\draw[<-] (i) -- (j) node[midway,label={[shift={(0.0,-.34)}]\small $\sqsupset^*$}]{};
					
					\draw[<-] (i) -- (h) node[midway,label={[shift={(0.0,-.34)}]\small $\sqsubset^*$}]{};	
					
					\draw[<->] (a) -- (aaa) node[midway,label={[shift={(0.0,-.39)}]\small $\equiv$}]{};
					
					\draw[<->] (aaa) -- (aaaa) node[midway,label={[shift={(-.25,-0.65)}]\small $\equiv$}]{};
			
					\draw[->, thick] (-5.7,1.5) -- (4.7,1.5) node [midway,fill=white] {\rotatebox{0}{\textbf{\footnotesize Approximate more functions}}};
				\end{tikzpicture}}}
	\end{center}
	\caption{Overview of the power of the proposed algorithms and neural architectures. The green and red nodes represent algorithms proposed in the present work. Forward arrows point to more powerful algorithms or neural architectures. $^*$---Proven in~\citep{Morris2020b}. $A \sqsubset B$ ($A \equiv B$): algorithm $A$ is strictly more powerful than (equally powerful as) $B$.}\label{overview}
\end{figure}
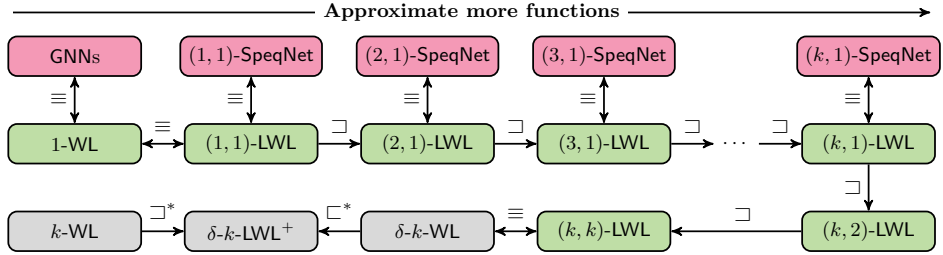

\subsection{Related work}

In the following, we review related work and background on graph kernels, GNNs, and graph theory.

\paragraph{Graph kernels}
Historically, kernel methods---which implicitly or explicitly map graphs to elements of a Hilbert space---have been the dominant approach for supervised learning on graphs. Important early work in this area includes random-walk based kernels~\citep{Gae+2003,Kas+2003,Kri+2017b} and kernels based on shortest paths~\citep{Bor+2005}. More recently, developments in the field have emphasized scalability, focusing on techniques that bypass expensive Gram matrix computations by using explicit feature maps, see, e.g.,~\citep{She+2011}. \citet{Mor+2017} devised a local, set-based variant of the \kwl and a corresponding kernel. However, the approach is weaker than the tuple-based algorithm. Further, \citet{Mor+2020} proposed kernels based on the \localkwl.

\citet{Yan+2015} successfully employed Graphlet~\citep{She+2009} and Weisfeiler--Leman kernels within frameworks for smoothed~\citep{Yan+2015} and deep graph kernels~\citep{Yan+2015a}. Other recent work focuses on assignment-based~\citep{Joh+2015,Kri+2016,Nik+2017}, spectral~\citep{Kon+2016,Ver+2017}, graph decomposition-based~\citep{Nik+2018}, randomized binning approaches~\citep{Hei+2019}, and the extension of kernels based on the \wlone~\citep{Mor+2016,Rie+2019,Tog+2019}. For a theoretical investigation of graph kernels, see~\citep{Kri+2018}, and for a thorough survey of graph kernels, see~\citep{Borg+2020,Kri+2019}.

\paragraph{GNNs}
Recently, GNNs~\citep{Gil+2017,Sca+2009} emerged as an alternative to graph kernels. Instances of this architecture are described in, e.g.,~\citep{Duv+2015,Ham+2017,Vel+2018}, which can be subsumed under the message-passing framework introduced in~\citep{Gil+2017}. In parallel, approaches based on spectral information were introduced in, e.g.,~\citep{Defferrard2016,Bru+2014,Kip+2017,Mon+2017}---all of which descend from early work in~\citep{Kir+1995,bas+1997,Spe+1997,mic+2005,Mer+2005,mic+2009,Sca+2009}.
Recent extensions and improvements to the GNN framework include approaches to incorporate different local structures (around subgraphs), e.g.,~\citep{Hai+2019,Fla+2020,Jin+2020,Nie+2016,Xu+2018}, novel techniques for pooling node representations in order to perform graph classification, e.g.,~\citep{Can+2018,Gao+2019,Yin+2018,Zha+2018}, incorporating distance information~\citep{You+2019}, and non-Euclidean geometry approaches~\citep{Cha+2019}. Moreover, recently, empirical studies on neighborhood-aggregation functions for continuous node features~\citep{Cor+2020}, edge-based GNNs leveraging physical knowledge~\citep{And+2019,Kli+2020,Kli+2021}, and sparsification methods~\citep{Ron+2020} emerged. Surveys of recent advancements in GNN techniques can be found, e.g., in~\citep{Cha+2020,Wu+2019,Zho+2018}. 

\paragraph{Limits of GNNs and more expressive architectures} 
Recently, connections of GNNs to Weisfeiler--Leman type algorithms have been shown~\citep{,Azi+2020,Bar+2020,Che+2019,Gee+2020a,Gee+2020b,Mae+2019,Mar+2019,Mor+2019,Xu+2018b}. Specifically,~\citep{Mor+2019,Xu+2018b} showed that the expressive power of any possible GNN architecture is limited by the \wlone in terms of distinguishing non-isomorphic graphs. 

Triggered by the above results, a large set of papers proposed architectures to overcome the expressivity limitations of the \wlone.
\citet{Mor+2019} introduced \emph{$k$-dimensional} GNNs (\kgnn) which rely on a message-passing scheme between subgraphs of cardinality~$k$. Similarly to~\citep{Mor+2017}, the paper employed a local, set-based (neural) variant of the \kwl.
Later, this was refined in~\citep{Mar+2019,Azi+2020} by introducing \emph{$k$-order folklore graph neural networks} (\kign), which are equivalent to the folklore or oblivious variant of the \kwl~\citep{Gro+2021,Mor+2022} in terms of distinguishing non-isomorphic graphs. Subsequently, \citet{Morris2020b} introduced neural architectures based on the \localkwl, which only considers a subset of the neighborhood from the \kwl, taking the potential sparsity of the underlying graph (to some extent) into account. 
Although more scalable, the algorithm reaches computational exhaustion on large graphs since it considers all $n^k$ tuples of size $k$. \citet{Che+2019} connected the theory of universal approximations of permutation-invariant functions and the graph isomorphism viewpoint and introduced a variation of the $2$-\textsf{WL}. See~\citep{Mor+2022} for an in-depth survey on this topic. 

Recent work has extended the expressive power of GNNs, e.g., by encoding node identifiers~\citep{Mur+2019b, Vig+2020}, leveraging random features~\citep{Abb+2020,Das+2020,Sat+2020}, subgraph information~\citep{Bev+2021,botsas2020improving,Cot+2021,Pap+2021,Thi+2021,you2021identity,Zha+2021,Zha+2021b}, homomorphism counts~\citep{Bar+2021,Hoa+2020}, spectral information~\citep{Bal+2021}, simplicial and cellular complexes~\citep{Bod+2021,Bod+2021b}, random walks~\citep{Toe+2021}, graph decompositions~\citep{Tal+2021}, distance~\citep{li2020distance} and directional information~\citep{beaini2020directional}.

However, all of the above approaches mentioned in the previous paragraph only overcome limitations of the \wlone, $2$-\textsf{WL}, or $3$-\textsf{WL}, and do not induce a hierarchy of provably powerful, permutation-equivariant neural architectures aligned with the \kwl hierarchy. 

\paragraph{Theory}
The Weisfeiler--Leman algorithm constitutes one of the earliest and most natural approaches to isomorphism testing~\citep{Wei+1976,Wei+1968}, having been heavily investigated by the theory community over the last few decades~\citep{Gro2017}. Moreover, the fundamental nature of the \kwl is evident from a variety of connections to other fields such as logic, optimization, counting complexity, and quantum computing. The power and limitations of the \kwl can be neatly characterized in terms of logic and descriptive complexity~\citep{Bab1979,Imm+1990}, Sherali-Adams relaxations of the natural integer linear program for the graph isomorphism problem~\citep{Ast+2013,GroheO15,Mal2014}, homomorphism counts~\citep{Del+2018}, and quantum isomorphism games~\citep{Ats+2019}. In their seminal paper,~\citet{Cai+1992} showed that, for each $k$, there exists a pair of non-isomorphic graphs of size $\cO(k)$ that are not distinguished by the \kwl. \citep{Kie2020a} gives a thorough survey of more background and related results concerning the expressive power of the \kwl. For $k=1$, the power of the algorithm has been completely characterized~\citep{Arv+2015,Kie+2015}.  Moreover, upper bounds on the running time~\citep{Ber+2017a} and the number of iterations for $k=1$~\citep{Kie+2020} and for the folklore $k=2$~\citep{Kie+2016,Lic+2019} have been shown. For $k$ in $\{1,2\}$,~\citet{Arv+2019} studied the abilities of the (folklore) \kwl to detect and count fixed subgraphs, extending the work of~\citet{Fue+2017}. The former was refined in~\citep{Che+2020a}. \citet{Kie+2019} showed that the folklore $3$-$\mathsf{WL}$ completely captures the structure of planar graphs. The algorithm (for logarithmic $k$) plays a prominent role in the recent result of \cite{Bab+2016} improving the best known running time for the graph isomorphism problem. Recently,~\citet{Gro+2020a} introduced the framework of Deep Weisfeiler--Leman algorithms, which allow the design of a more powerful graph isomorphism test than Weisfeiler--Leman type algorithms. Finally, the emerging connections between the Weisfeiler--Leman paradigm and graph learning are described in two recent surveys~\citep{Gro+2020,Mor+2022}.

\section{Preliminaries}\label{prelim_ext}

As usual, for $n \geq 1$, let $[n] \coloneqq \{ 1, \dotsc, n \} \subset \NN$. We use $\{\!\!\{ \dots\}\!\!\}$ to denote multisets, i.e., the generalization of sets allowing for multiple instances for each of its elements.

\paragraph{Graphs} A \new{graph} $G$ is a pair $(V(G),E(G))$ with \emph{finite} sets of
\new{nodes} $V(G)$ and \new{edges} $E(G) \subseteq \{ \{u,v\}
\subseteq V(G) \mid u \neq v \}$. If not otherwise stated, we set $n \coloneqq |V(G)|$. For ease of
notation, we denote the edge $\{u,v\}$ in $E(G)$ by $(u,v)$ or
$(v,u)$. In the case of \emph{directed graphs}, $E \subseteq \{ (u,v)
\in V \times V \mid u \neq v \}$. A \new{labeled graph} $G$ is a triple
$(V,E,\ell)$ with a label function $\ell \colon V(G) \cup E(G) \to \bbN$. Then $\ell(v)$ is a
\new{label} of $v$ for $v$ in $V(G) \cup E(G)$. The \new{neighborhood} 
of $v$ in $V(G)$ is denoted by $\delta(v) = \{ u \in V(G) \mid \{ v, u \} \in E(G) \}$ and the \new{degree} of a node $v$ is  $|\delta(v)|$. For $S \subseteq
V(G)$, the graph $G[S] = (S,E_S)$ is the \new{subgraph induced by $S$}, where
$E_S = \{ (u,v) \in E(G) \mid u,v \in S \}$. 
A \new{connected component} of a graph $G$ is an inclusion-wise maximal subgraph of $G$ in which every two nodes are connected by paths. A \new{tree} is a connected graph without
cycles. A \new{rooted tree} is an oriented tree with a designated node called \new{root}, in which the edges are directed away from the root. 
Let $p$ be a node in a rooted tree. Then we call its out-neighbors \new{children} with \new{parent} $p$. 
We denote an undirected \new{cycle} on $k$ nodes by $C_k$.  
Given two graphs $G$ and $H$ with disjoint node sets, we denote their disjoint union by $G \,\dot\cup\, H$.

Two graphs $G$ and $H$ are \new{isomorphic} and we write $G \simeq H$ if there exists a bijection $\varphi \colon V(G) \to V(H)$ preserving the adjacency relation, i.e., $(u,v)$ is in $E(G)$ if and only if
$(\varphi(u),\varphi(v))$ is in $E(H)$. Then $\varphi$ is an \new{isomorphism} between
$G$ and $H$. Moreover, we call the equivalence classes induced by
$\simeq$ \emph{isomorphism types}, and denote the isomorphism type of $G$ by
$\tau_G$. In the case of labeled graphs, we additionally require that
$\ell(v) = \ell(\varphi(v))$ for $v$ in $V(G)$ and $\ell((u,v)) = \ell((\varphi(u), \varphi(v)))$ for $(u,v)$ in $E(G)$. 
Let $\vec{v}$ be a \emph{tuple} in $V(G)^k$ for $k > 0$, then $G[\vec{v}]$ is the subgraph induced by the multiset of elements of $\vec{v}$, where the nodes are labeled with integers from $\{ 1, \dots, k \}$ corresponding to their positions in $\vec{v}$.

\paragraph{Equivariance} For $n > 0$, let $S_n$ denote the set of all permutations of $[n]$, i.e., the set of all bijections from $[n]$ to itself. For $\sigma$ in $S_n$ and a graph $G$ such that $V(G) = [n]$, let $\sigma \cdot G$ be the graph such that $V(\sigma \cdot G) = \{ v_{\sigma^{-1}(1)}, \dots, v_{\sigma^{-1}(n)} \}$ and $E(\sigma \cdot G) = \{ (v_{\sigma^{-1}(i)},v_{\sigma^{-1}(j)}) \mid (v_i, v_j) \in E(G)  \}$. That is, applying the permutation $\sigma$ reorders the nodes. Hence, for two isomorphic graphs $G$ and $H$ on the same vertex set, i.e., $G \simeq H$, there exists $\sigma$ in $S_n$ such that $\sigma \cdot G = H$.   
Let $\mathcal{G}$ denote the set of all graphs, and let $\mathcal{G}_n$ denote the set of all graphs on $n$ nodes. A function $f\colon \mathcal{G} \rightarrow \bbR$ is \new{invariant} if for every $n>0$ and every $\sigma$ in $S_n$ and every graph $G$ of order $n$, it holds that $f(\sigma \cdot G) = f(G)$. A function $f \colon\mathcal{G} \to \mathcal{G}$ is \new{equivariant} if for every $n>0$, $f(\mathcal{G}_n) \subseteq \mathcal{G}_n$
and for every $\sigma$ in $S_n$, $f(\sigma \cdot G) = \sigma \cdot f(G)$. 

\paragraph{Kernels} A \emph{kernel} on a non-empty set $\mathcal{X}$ is a symmetric, positive semidefinite function 
$k \colon \mathcal{X} \times \mathcal{X} \to \mathbb{R}$.
Equivalently, a function $k\colon \mathcal{X} \times \mathcal{X} \to \mathbb{R}$ is a kernel if there is a \emph{feature map}
$\phi \colon \mathcal{X} \to \mathcal{H}$ to a Hilbert space $\mathcal{H}$ with inner product 
$\langle \cdot, \cdot \rangle$ such that 
$k(x,y) = \langle \phi(x),\phi(y) \rangle$ for all $x$ and $y$ in $\mathcal{X}$. A \emph{graph kernel} is a kernel on the set $\mathcal{G}$ of all graphs. 

\subsection{Node-refinement algorithms}\label{vr_ext}

In the following, we briefly describe the Weisfeiler--Leman algorithm and related variants~\citep{Morris2020b}. Let $k$ be a fixed positive integer. There exist two definitions of the \kwl, the so-called oblivious \kwl and the folklore or non-oblivious \kwl, see, e.g.,~\citep{Gro+2021}. There is a subtle difference in how they aggregate neighborhood information. Within the graph learning community, it is customary to abbreviate the oblivious \kwl as \kwl, a convention that we follow in this paper. 

We proceed to the definition of the \kwl. Let $V(G)^k$ denote the set of $k$-tuples of nodes of the graph $G$. A \new{coloring} of $V(G)^k$ is a mapping $C \colon V(G)^k \to \mathbb{N}$, i.e., we assign a number (color) to every tuple in $V(G)^k$. The \new{initial coloring} $C_{0}$ of $V(G)^k$ is specified by the atomic types of the tuples. So two tuples $\vec{v}$ and $\vec{w}$ in $V(G)^k$ have the same initial color iff the mapping $v_i \mapsto w_i$ induces an isomorphism between the labeled subgraphs $G[\vec{v}]$ and $G[\vec{w}]$. Note that, given a tuple $\vec{v}$ in $V(G)^k$, we can upper-bound the running time of the computation of this initial coloring for $\vec{v}$ by $\cO(k^2)$. A \new{color class} corresponding to a color $c$ is the set of all tuples colored $c$, i.e., the set $C^{-1}(c)$.

For $j$ in $[k]$ and $w$ in $V(G)$, let $\phi_j(\vec{v},w)$ be the $k$-tuple obtained by replacing the 
$j{\text{th}}$ component of $\vec{v}$ with the node $w$. That is, $\phi_j(\vec{v},w) = (v_1, \dots, v_{j-1}, w, v_{j+1}, \dots, v_k)$. If $\vec{w} = \phi_j(\vec{v},w)$ for some $w$ in $V(G)$, call $\vec{w}$ a $j$-\new{neighbor} of $\vec{v}$. The \new{neighborhood} of $\vec{v}$ is  the set of all $\vec{w}$ such that $\vec{w} = \phi_j(\vec{v},w)$ for some $j$ in $[k]$ and a $w \in V(G)$. 

The \new{refinement} of a coloring $C \colon V(G)^k \to \mathbb{N}$, denoted by $\widehat{C}$, is the coloring $\widehat{C} \colon V(G)^k \to \mathbb{N}$ defined as follows. 
For each $j$ in $[k]$, collect the colors of the $j$-neighbors of $\vec{v}$ in a multiset $S_j \coloneqq \{\!\! \{  C(\phi_j(\vec{v},w)) \mid w \in V(G) \} \!\!\}$.
Then, for a tuple $\vec{v}$, define
\[
	\widehat{C}(\vec{v}) \coloneqq (C(\vec{v}), M(\vec{v})),
\]
where $M(\vec{v})$ is the $k$-tuple $(S_1,\dots,S_k)$. For consistency, the obtained strings $\widehat{C}(\vec{v})$ are lexicographically sorted and renamed as integers, not used in previous iterations. Observe that the new color $\widehat{C}(\vec{v})$ is solely dictated by the color histogram of the neighborhood of $\vec{v}$. In general, a different mapping $M(\cdot)$ could be used, depending on the neighborhood information that we would like to aggregate. We will refer to a mapping $M(\cdot)$ as an \new{aggregation map}. 

\paragraph{$\boldsymbol{k}$-dimensional Weisfeiler--Leman}\label{wl_app} For $k\geq 2$, the \kwl computes a coloring $C_\infty \colon V(G)^k \to \mathbb{N}$ of a given graph $G$, as described next.\footnote{We define the $1$-\textsf{WL} in the next subsection.} To begin with, the initial coloring $C_0$ is computed. Then, starting with $C_0$, successive refinements $C_{i+1} = \widehat{C_i}$ are computed until convergence. More precisely,
\[
	C_{i+1}(\vec{v}) \coloneqq (C_i(\vec{v}), M_i(\vec{v})),
\]
where 
\begin{equation}\label{app:mi_ext}
	M_i(\vec{v}) =   \big( \{\!\! \{  C_i(\phi_1(\vec{v},w)) \mid w \in V(G) \} \!\!\}, \dots, \{\!\! \{  C_i(\phi_k(\vec{v},w)) \mid w \in V(G) \} \!\!\} \big).
\end{equation}
The successive refinement steps are also called \new{rounds} or \new{iterations}. Since the color classes form a partition of $V(G)^k$, there must exist a finite $\ell \leq |V(G)|^k$ such that $C_{\ell}$ and $\widehat{C_{\ell}}$ induce the same partition on the vertex tuples, i.e., the partition induced by $C_\ell$ cannot be refined further. The \kwl outputs this $C_\ell$ as the \emph{stable coloring} $C_\infty$. 

The \kwl \new{distinguishes} two graphs $G$ and $H$ if, upon running the \kwl on their disjoint union $G \,\dot\cup\, H$, there exists a color $c$ in $\mathbb{N}$ in the stable coloring such that the corresponding color class $S_c$ satisfies
\begin{equation*}
	|V(G)^k \cap S_c| \neq |V(H)^k \cap S_c|,
\end{equation*}
i.e., the numbers of $c$-colored tuples in $V(G)^k$ and $V(H)^k$ differ. Two graphs that are distinguished by the \kwl must be non-isomorphic, because the algorithm is defined in an isomorphism-invariant way. 

Finally, the application of different aggregation maps $M$ yields related versions of \kwl. For example, setting $M(\cdot)$ to be 
\begin{equation*}
	M^F(\vec{v}) =  \{\!\! \{ \big( C(\phi_1(\vec{v},w)) , \dots,   C(\phi_k(\vec{v},w)) \big)  \mid w \in V(G) \} \!\!\},
\end{equation*}
yields the so-called folklore version of \kwl (see e.g.,~\citep{Cai+1992}). It is known that the oblivious version of the \kwl is as powerful as the folklore version of the $(k\!-\!1)$-\textsf{WL}~\citep{Gro+2021}.

\paragraph{Local $\boldsymbol{\delta}$-$\boldsymbol{k}$-dimensional Weisfeiler--Leman algorithm}\label{lwl_ext}

\citet{Morris2020b} introduced a more efficient variant of the \kwl, the \new{local $\delta$-$k$-dimensional Weisfeiler--Leman algorithm} (\localkwl). In contrast to the \kwl, the  \localkwl considers only a subset of the entire neighborhood of a node tuple. Let the tuple $\vec{w} = \phi_j(\vec{v},w)$ be a $j$-{neighbor} of $\vec{v}$. We say that $\vec{w}$ is a \new{local} $j$-neighbor of $\vec{v}$ if $w$ is adjacent to the replaced node $v_j$. Otherwise, the tuple $\vec{w}$ is a \new{global} $j$-neighbor of $\vec{v}$. The \localkwl considers only local neighbors during the neighborhood aggregation process, and discards any information about the global neighbors. Formally, the \localkwl refines a coloring $C^{k,\delta}_i$ (obtained after $i$ rounds of the \localkwl) via the aggregation function 
\begin{equation}\label{eqnmidd_ext}
	\begin{split}
		M^{\delta}_i(\vec{v}) =   \big( \{\!\! \{ C^{k, \delta}_{i}(\phi_1(\vec{v},w)) \mid w \in \delta(v_1) \} \!\!\}, \dots, \{\!\! \{  C^{k, \delta}_{i}(\phi_k(\vec{v},w)) \mid w \in \delta(v_k) \}  \!\!\} \big),
	\end{split}
\end{equation}		
hence considering only the local $j$-neighbors of the tuple $\vec{v}$ in each iteration.  The coloring functions for the iterations of the \localkwl are then defined by 
\begin{equation}\label{wlsimple_ext}
	C^{k,\delta}_{i+1}(\vec{v}) = (C^{k,\delta}_{i}(\vec{v}), M^{\delta}_i(\vec{v})).
\end{equation}
We define the $1$-\textsf{WL} to be the $\delta$-1-\textsf{LWL}, which is commonly known as Color Refinement or Naive Node Classification.\footnote{Strictly speaking, the \wlone and Color Refinement are two different algorithms. That is, the \wlone considers neighbors and non-neighbors to update the coloring, resulting in a slightly higher expressivity when distinguishing nodes in a given graph, see~\citep{Gro+2021} for details.} Hence, we can equivalently define 
\begin{equation}\label{ck_ext}
	C^{1,\delta}_{i+1}(v) \coloneqq \left(C^{1,\delta}_{i}(v), \{\!\!\{ C^{1,\delta}_{i}(w) \mid w \in \delta(v) \}\!\!\}\right),
\end{equation}
for a node $v$ in $V(G)$. 

\citet{Morris2020b} also defined the \pluskwl, a minor variation of the \localkwl.  Formally, the \pluskwl refines a coloring $C_i$ (obtained after $i$ rounds) via the aggregation function
\begin{equation}\label{middp_ext}
	\begin{split}
		M^{\delta,+}(\vec{v}) =   \big( &\{\!\! \{ (C^{k, \delta}_i(\phi_1(\vec{v},w)), \#_{i}^1(\vec{v},\phi_1(\vec{v},w))) \hspace{0.4pt}\mid w \in \delta(v_1) \} \!\!\}, \dots, \\ &\{\!\! \{  (C^{k, \delta}_i(\phi_k(\vec{v},w)), \#_{i}^k(\vec{v},\phi_k(\vec{v},w))) \mid w \in \delta(v_k) \}  \!\!\} \big),
	\end{split}
\end{equation}
instead of the \localkwl aggregation defined in~\cref{eqnmidd_ext}. 
Here, we set
\begin{equation}\label{sharp_ext}
	\#_{i}^j(\vec{v}, \vec{x}) \coloneqq \big|\{ \vec{w} \colon  \vec{w} \sim_j \vec{v}, \, C^{k, \delta}_{i}(\vec{w}) = C^{k, \delta}_{i}(\vec{x})   \} \big|,
\end{equation}
where $\vec{w} \sim_j \vec{v}$ denotes that $\vec{w}$ is a $j$-neighbor of $\vec{v}$, for $j$ in $[k]$. Essentially, $\#_{i}^j(\vec{v}, \vec{x})$ counts the number of (local or global) $j$-neighbors of $\vec{v}$ which have the same color as $\vec{x}$ under the coloring $C_i$ (i.e., after $i$ rounds). \citet{Morris2020b} showed that the \pluskwl is slightly more powerful than the \kwl in distinguishing non-isomorphic graphs. 

\paragraph{The Weisfeiler--Leman hierarchy and permutation-invariant function approximation}\label{connect}
The Weisfeiler--Leman hierarchy is a purely combinatorial algorithm for testing graph isomorphism. However,  the graph isomorphism function, mapping non-isomorphic graphs to different values, is the hardest to approximate permutation-invariant function. Hence, the Weisfeiler--Leman hierarchy has strong ties to GNNs' capabilities to approximate permutation-invariant or equivariant functions over graphs. For example,~\citet{Mor+2019,Xu+2018b} showed that the expressive power of any possible GNN architecture is limited by \wlone in terms of distinguishing non-isomorphic graphs. \citet{Azi+2020} refined these results by showing that if an architecture is capable of simulating \kwl and allows the application of universal neural networks on vertex features, it will be able to approximate any permutation-equivariant function below the expressive power of \kwl; see also~\citep{Che+2019}. Hence, if one shows that one architecture distinguishes more graphs than another, it follows that the corresponding GNN can approximate more functions. These results were refined in \citep{geerts2022} for Color Refinement and taking into account the number of iterations of \kwl.

\section{The \texorpdfstring{\wl{k}{s}}{(k,s)-LWL} algorithm}
Since both \kwl and its local variant \localkwl consider all $k$-tuples of a graph, they do not scale to large graphs for larger $k$. Specifically, for an $n$-node graph, the memory requirement is in $\Omega(n^k)$. Further, since the \kwl considers the graph structure only at initialization, it does not adapt to its sparsity, i.e., it does not run faster on sparser graphs. To address this issue, we introduce the \wl{k}{s}. The algorithm offers more fine-grained control over the trade-off between expressivity and scalability by only considering a subset of all $k$-tuples, namely those inducing subgraphs with at most $s$ \emph{connected components}. This combinatorial algorithm will be the basis of the permutation-equivariant neural architectures of~\cref{gn}.

Let $G$ be a graph. Then $\mathsf{\#com}(G)$ denotes the number of (connected) components of $G$. Further, let $k \geq 1$ and $1 \leq s \leq k$, then
\begin{equation*}
	V(G)^k_s \coloneqq \{ \vec{v} \in V(G)^k \mid \mathsf{\#com}(G[\vec{v}]) \leq s  \}
\end{equation*}
is the set of \new{$(k,s)$-tuples} of nodes, i.e, $k$-tuples which induce (sub-)graphs with at most $s$ (connected) components.

In contrast to the algorithms of~\cref{vr_ext}, the \wl{k}{s} colors tuples from  $V(G)^k_s$ instead of the entire $V(G)^k$. Hence, analogously to~\cref{vr_ext}, a coloring of $V(G)^k_s$ is a mapping $C^{k,s} \colon V(G)^k_s \to \mathbb{N}$, assigning a number (color) to every tuple in $V(G)^k_s$. The initial coloring $C^{k,s}_{0}$ of $V(G)^k_s$ is defined in the same way as before, i.e., specified by the isomorphism types of the tuples, see~\cref{vr_ext}.
Subsequently, the coloring is updated using the \localkwl aggregation map, see~\cref{eqnmidd_ext}. Hence, the  \wl{k}{s} is a variant of the \localkwl considering only $(k,s)$-tuples, i.e.,~\cref{eqnmidd_ext} is replaced with 
\begin{equation}\label{eqnmiddd}
	\begin{split}
		M^{\delta,k,s}_i(\vec{v}) \coloneqq  \big( &\{\!\! \{ C^{k,s}_{i}(\phi_1(\vec{v},w)) \mid w \in \delta(v_1) \text{  and } \phi_1(\vec{v},w) \in V(G)^k_s  \} \!\!\},\dots, \\ 
		&\{\!\! \{  C^{k,s}_{i}(\phi_k(\vec{v},w)) \mid w \in \delta(v_k) \text{ and }  \phi_k(\vec{v},w) \in V(G)^k_s  \}  \!\!\} \big),
	\end{split}
\end{equation}		
i.e., $M^{\delta}_i(\vec{v})$ restricted to colors of $(k,s)$-tuples. The stable coloring $C^{k,s}_\infty$ is defined analogously to the stable coloring  $C^{k}_\infty$. In the following two subsections, we investigate the properties of the algorithm in detail.

Analogously to the \pluskwl, we also define the \pluswl using 
\begin{equation*}\label{middp2}
	\begin{split}
		M^{\delta,+}(\vec{v}) =   \big( &\{\!\! \{ (C^{k,s}_i(\phi_1(\vec{v},w)), \#_{i,s}^1(\vec{v},\phi_1(\vec{v},w))) \hspace{0.4pt}\mid w \in \delta(v_1) \text{  and } \phi_1(\vec{v},w) \in V(G)^k_s \} \!\!\}, \dots, \\ &\{\!\! \{  (C^{k,s}_i(\phi_k(\vec{v},w)), \#_{i,s}^k(\vec{v},\phi_k(\vec{v},w))) \mid w \in \delta(v_k) \text{  and } \phi_k(\vec{v},w) \in V(G)^k_s \}  \!\!\} \big),
	\end{split}
\end{equation*}
where the function
\begin{equation*}\label{sharp2}
	\#_{i,s}^j(\vec{v}, \vec{x}) = \big|\{ \vec{w} \colon \vec{w} \sim_j \vec{v}, \, C^{k,s}_{i}(\vec{w}) = C^{k,s}_{i}(\vec{x}) \text{ and }  \vec{w} \in  V(G)^k_s \} \big|,
\end{equation*}
restricts $\#_{i}^j(\vec{v}, \vec{x})$ to $(k,s)$-tuples.

\subsection{Expressivity}

Here, we investigate the expressivity of the \wl{k}{s}, i.e., its ability to distinguish non-isomorphic graphs, for different choices of $k$ and $s$. In~\cref{gn}, we will leverage these results to devise universal, permutation-equivariant graph networks. We start off with the following simple observation. Since the \wl{k}{k} colors all $k$-tuples, it is equal to the \localkwl.
\begin{observation}
	Let $k \geq 1$, then 
	\begin{equation*}
		\text{\wl{k}{k}} \equiv \text{\localkwl}\quad\text{and} \quad \text{\wl{1}{1}} \equiv \text{$\delta$-$1$-\textsf{LWL}\xspace} \equiv \text{\wlone}.
	\end{equation*}
\end{observation}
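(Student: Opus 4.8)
The plan is to argue directly from the definitions, showing that in each claimed case the \wl{k}{s} and the comparison algorithm produce \emph{identical} colorings on every graph $G$; equivalence in distinguishing power is then immediate.

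First I would record the combinatorial fact underlying the first equivalence: for any tuple $\vec{v} \in V(G)^k$, the induced subgraph $G[\vec{v}]$ has at most $k$ nodes and hence at most $k$ connected components, so $\mathsf{\#com}(G[\vec{v}]) \leq k$. Consequently $V(G)^k_k = V(G)^k$; that is, the \wl{k}{k} colors exactly the same set of tuples as the \localkwl.

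Next I would check that the two refinement operators coincide on this common set. Since $V(G)^k_k = V(G)^k$, the side condition $\phi_j(\vec{v},w) \in V(G)^k_s$ in the aggregation map~\cref{eqnmiddd} is satisfied for every $j \in [k]$ and every $w \in \delta(v_j)$, so $M^{\delta,k,k}_i$ reduces to the \localkwl map $M^{\delta}_i$ of~\cref{eqnmidd_ext}. The initial colorings also agree, as both are defined by the isomorphism types of the induced labeled subgraphs. A routine induction on the iteration count then shows that the two algorithms produce the same coloring at every round, hence the same stable coloring; this gives \wl{k}{k} $\equiv$ \localkwl.

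For the second chain, \wl{1}{1} $\equiv$ $\delta$-$1$-\textsf{LWL}\xspace is simply the $k=1$ instance of the above (a single node always induces a one-component subgraph, so $V(G)^1_1 = V(G)$), while $\delta$-$1$-\textsf{LWL}\xspace $\equiv$ \wlone holds by the very definition of the $1$-\textsf{WL} adopted in~\cref{lwl_ext}. There is no genuine obstacle here; the only point demanding care is verifying that the membership restriction in~\cref{eqnmiddd} becomes vacuous precisely when $s = k$, which is exactly the bound $\mathsf{\#com}(G[\vec{v}]) \leq k$.
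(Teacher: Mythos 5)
Your proposal is correct and matches the paper's own (one-line) justification, which simply notes that the \wl{k}{k} colors all $k$-tuples---i.e., $V(G)^k_k = V(G)^k$ since a $k$-tuple induces at most $k$ components---and that the \wlone is by definition the $\delta$-$1$-\textsf{LWL}. Your version just spells out the routine induction that the paper leaves implicit.
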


The following result shows that the \wl{k}{1} form a \emph{hierarchy}, i.e., the algorithm becomes more expressive as $k$ increases.
\begin{theorem}\label{theorem:one}
	Let $k \geq 1$, then 
	\begin{equation*}
		\text{\wl{k+1}{1}} \sqsubset \text{\wl{k}{1}}.
	\end{equation*}
\end{theorem}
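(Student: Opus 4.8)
The plan is to unfold the single symbol $\sqsubset$ into two claims and prove them separately: (\textbf{simulation}) \wl{k+1}{1} is at least as powerful as \wl{k}{1}, i.e.\ every pair of graphs separated by the latter is separated by the former; and (\textbf{strictness}) there is a pair of graphs that \wl{k+1}{1} separates but \wl{k}{1} does not. Since distinguishing is defined through the stable colorings and color-class sizes, the simulation claim is equivalent to: if $G$ and $H$ have identical $C^{k+1,1}_\infty$-histograms, then they have identical $C^{k,1}_\infty$-histograms.

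For the simulation direction I would work entirely at the level of stable colorings, using the ``repeat-the-last-coordinate'' embedding of connected $k$-tuples into connected $(k+1)$-tuples: for $\vec{v} = (v_1,\dots,v_k)\in V(G)^k_1$ set $\vec{v}^+ \coloneqq (v_1,\dots,v_k,v_k)$. Because $G[\vec{v}^+]$ and $G[\vec{v}]$ have the same underlying node set, they have the same number of components, so $\vec{v}^+\in V(G)^{k+1}_1$; moreover $\vec v\mapsto\vec v^+$ is injective with image exactly the connected $(k+1)$-tuples whose last two coordinates coincide, a property visible in the atomic type and hence preserved in $C^{k+1,1}_\infty$. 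I would then define a partition $Q$ of the connected $k$-tuples by $\vec v\sim_Q\vec w$ iff $C^{k+1,1}_\infty(\vec v^+)=C^{k+1,1}_\infty(\vec w^+)$ and show that $Q$ (i) refines the initial coloring $C^{k,1}_0$, since the isomorphism type of $G[\vec v^+]$ determines that of $G[\vec v]$, and (ii) is stable under the \wl{k}{1} refinement operator. As $C^{k,1}_\infty$ is the coarsest stable refinement of $C^{k,1}_0$, property (ii) forces $Q$ to refine $C^{k,1}_\infty$, so $C^{k+1,1}_\infty(\vec v^+)$ determines $C^{k,1}_\infty(\vec v)$ through a fixed map $\chi$. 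Finally, because the embedding is a color-recognizable bijection onto its image, the number of connected $k$-tuples of color $c$ equals the number of connected $(k+1)$-tuples with equal last two coordinates whose color lies in $\chi^{-1}(c)$; hence equal $C^{k+1,1}_\infty$-histograms transfer to equal $C^{k,1}_\infty$-histograms, as required.

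The main obstacle is verifying stability (ii), and it stems from a coordinate mismatch. For $j\in[k-1]$, replacing the $j$-th coordinate of $\vec v^+$ by a neighbor $x\in\delta(v_j)$ produces exactly $(\phi_j(\vec v,x))^+$, and connectivity is preserved on the nose, so the $j$-th \wl{k}{1}-neighbor multiset of $\vec v$ is recovered verbatim from the $j$-th local neighbor multiset of $\vec v^+$. The delicate case is position $k$: the neighbor $\phi_k(\vec v,x)=(v_1,\dots,v_{k-1},x)$ lifts to $(v_1,\dots,v_{k-1},x,x)$, whereas replacing a single one of the coordinates $k,k{+}1$ of $\vec v^+$ only yields $(v_1,\dots,v_{k-1},x,v_k)$ or $(v_1,\dots,v_{k-1},v_k,x)$. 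I would bridge this gap using the fixpoint property of the stable coloring: since $x\in\delta(v_k)$, the target $(v_1,\dots,v_{k-1},x,x)$ is a legal local $(k{+}1)$-neighbor of $(v_1,\dots,v_{k-1},x,v_k)$, so the stable color of the latter already encodes the stable color of the former, and one argues that the multiset $\oms C^{k,1}_\infty(\phi_k(\vec v,x)) : x\in\delta(v_k),\ \phi_k(\vec v,x)\in V(G)^k_1 \cms$ is a function of $C^{k+1,1}_\infty(\vec v^+)$ alone. Making this precise---so that equal colors among the intermediate tuples induce consistent multisets and no connected tuple is over- or under-counted---is the technical heart of the argument and the reason for phrasing everything through the stable coloring rather than round by round.

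For strictness it suffices, by the above, to exhibit for each $k$ a pair $G_k,H_k$ whose $C^{k,1}_\infty$-histograms agree while their $C^{k+1,1}_\infty$-histograms differ. I would take these from a family in which the distinguishing substructure genuinely requires $k{+}1$ mutually connected pebbles, e.g.\ a connectivity-aware variant of the Cai--F\"{u}rer--Immerman gadgets or, for small $k$, explicit cycle-based examples such as $C_3\,\dot\cup\,C_3$ versus $C_6$, which the \wlone (equivalently \wl{1}{1}) cannot separate but \wl{2}{1} can, since the latter detects the common neighbor of adjacent vertices in a triangle. The routine part is checking that \wl{k+1}{1} separates $G_k$ and $H_k$; the more delicate part is certifying \wl{k}{1}-indistinguishability, where one must show that restricting to connected $k$-tuples and to local moves does not accidentally recover the missing distinction---most cleanly via a pebble-game or automorphism argument tailored to connected tuples.
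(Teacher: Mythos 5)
Your decomposition of $\sqsubset$ into a simulation half and a strictness half is the right reading, but the half that carries the entire content of the paper's proof---strictness---is exactly the part you defer. The paper's argument is a concrete counterexample family valid for every $k$: take $G = C_{2(k+2)}$ and $H = C_{k+2}\,\dot\cup\,C_{k+2}$ (and, to obtain connected witnesses, the graphs $A_{k+2}$ and $B_{k+2}$ built by joining, respectively gluing, two such cycles). Distinguishability by \wl{k+1}{1} is explicit: a non-repeating connected $(k+1)$-tuple tracing a path in $H$ admits a non-repeating local $1$-neighbor in $V(H)^{k+1}_1$ (close up the $(k+2)$-cycle), whereas in $G$ every such replacement either repeats a vertex or disconnects the induced subgraph. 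Indistinguishability by \wl{k}{1} follows because all cycles involved have length at least $k+2$, so the number and atomic types of the admissible local $j$-neighbors of any connected $k$-tuple are determined by that tuple's own atomic type; hence \wl{k}{1} never refines the initial coloring on either graph, and the atomic-type histograms agree. Your $C_3\,\dot\cup\,C_3$ versus $C_6$ example is correct but settles only $k=1$, and your fallback to Cai--F\"urer--Immerman gadgets for general $k$ is both unsubstantiated and unnecessary here (the paper reserves a CFI-style construction for separating \wl{k}{2} from \wl{k}{k}). Without a witness family for all $k$ together with the indistinguishability argument, the theorem is not proved.

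On the simulation half: the paper does not spell this direction out, so your repeat-the-last-coordinate embedding is additional work rather than a reproduction of its proof. The plan is plausible, and the position-$k$ mismatch you flag can be bridged: $(v_1,\dots,v_{k-1},x,x)$ is the \emph{unique} local $(k+1)$-neighbor of $(v_1,\dots,v_{k-1},x,v_k)$ whose last two entries coincide, a property recorded in the atomic type and hence in the stable color, so the stable color of the intermediate tuple determines the color of the diagonal tuple (or records that it is disconnected, in which case the corresponding $k$-tuple is likewise excluded from the \wl{k}{1} aggregation). But as written this step is only announced as ``the technical heart'' rather than carried out, so even the half on which you spend most effort remains a sketch.
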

Moreover, we also show that the \wl{k}{2} is more expressive than the \wl{k}{1}.
\begin{proposition}\label{theorem:two}
	For $k \geq 2$, it holds that 
	\begin{equation*}
		\text{\wl{k}{2}} \sqsubset \text{\wl{k}{1}}.
	\end{equation*}
\end{proposition}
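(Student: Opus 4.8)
The plan is to prove the two directions implicit in the claim separately. First I would show that the \wl{k}{2} is at least as powerful as the \wl{k}{1}, i.e., it distinguishes every pair of graphs that the latter distinguishes; this is a structural refinement argument. Second, I would exhibit a pair of non-isomorphic graphs that the \wl{k}{2} separates but the \wl{k}{1} does not. Together these yield that the \wl{k}{2} is strictly more powerful. I expect the second part to be the main obstacle.

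For the first direction, I would prove by induction on the round $i$ that, restricted to the connected tuples in $V(G)^k_1 \subseteq V(G)^k_2$, the coloring $C^{k,2}_i$ refines $C^{k,1}_i$: for all $\vec v, \vec w \in V(G)^k_1$, equality $C^{k,2}_i(\vec v) = C^{k,2}_i(\vec w)$ implies $C^{k,1}_i(\vec v) = C^{k,1}_i(\vec w)$. The key observation is that $\mathsf{\#com}(G[\vec v])$ is part of the atomic type of $\vec v$, hence encoded in the initial color $C^{k,2}_0(\vec v)$; since refinement only splits color classes, the number of components can be read off from $C^{k,2}_i(\vec v)$ in every round. The base case is immediate, as both algorithms share the same initial coloring. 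In the inductive step, the local $j$-neighbors of a connected tuple lying in $V(G)^k_1$ are exactly the $1$-component tuples among its local $j$-neighbors lying in $V(G)^k_2$; therefore, from $M^{\delta,k,2}_i(\vec v)$ one can discard all entries whose color marks a $2$-component tuple and relabel the rest through the map $C^{k,2}_i \mapsto C^{k,1}_i$, which is well defined by the induction hypothesis, thereby reconstructing $M^{\delta,k,1}_i(\vec v)$. Hence equal $(k,2)$-colors after round $i+1$ force equal $(k,1)$-colors after round $i+1$. Running both algorithms on $G \,\dot\cup\, H$, each $(k,1)$-color class splits into $(k,2)$-color classes consisting only of connected tuples, so any count difference detected by the \wl{k}{1} persists at the finer $(k,2)$-level; thus the \wl{k}{2} distinguishes $G$ and $H$ whenever the \wl{k}{1} does.

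For strictness, I would produce witnesses on which the \wl{k}{2} succeeds and the \wl{k}{1} fails. A transparent prototype for $k = 2$ is $G = C_8$ and $H = C_4 \,\dot\cup\, C_4$: both are $2$-regular and triangle-free and contain equally many diagonal, edge, and non-edge tuples. The \wl{2}{1} colors only the connected $2$-tuples (diagonals and edges), and as both graphs are vertex- and edge-transitive with identical connected local structure, it assigns all diagonals one color and all edges another in both graphs and therefore fails to separate them. The \wl{2}{2} additionally colors the non-edge ($2$-component) tuples, and a single refinement round already separates the graphs: a distance-$2$ non-edge of $C_8$ has one edge-neighbor and one non-edge-neighbor, whereas in $C_4 \,\dot\cup\, C_4$ the within-component distance-$2$ non-edges receive two edge-neighbors and the cross-component non-edges two non-edge-neighbors, so a refined color occurs in one graph but not the other. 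For general $k \geq 2$ I would use the analogous family $G = C_{2n}$ and $H = C_n \,\dot\cup\, C_n$ with $n > k$: no connected induced subgraph on at most $k$ nodes wraps around a cycle of length exceeding $k$, so every connected $k$-tuple induces the same arc-shaped subgraph in both graphs with equal multiplicity and the \wl{k}{1} cannot separate them, while the two-arc ($2$-component) $k$-tuples accessible to the \wl{k}{2} distinguish a single long cycle from two shorter ones.

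The main obstacle is the rigorous verification of strictness for general $k$. The refinement-inclusion argument of the first part is robust, but confirming that the \wl{k}{2} genuinely separates $C_{2n}$ from $C_n \,\dot\cup\, C_n$ requires tracking the refinement of $k$-tuples whose induced subgraph consists of two arcs, including tuples with repeated entries, rather than the one-line first-round count that suffices for $k=2$. I would address this either by singling out a subfamily of $2$-component tuples supported on few distinct vertices, on which the $(k,2)$-refinement reproduces the $(2,2)$-computation above, or by a direct argument that the stable $(k,2)$-coloring yields different color-class sizes on the two graphs; in either case $n$ must be chosen large enough relative to $k$ to preserve $(k,1)$-indistinguishability.
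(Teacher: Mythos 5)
Your proposal is correct and follows essentially the same route as the paper: the paper also uses the witness pair $C_{2(k+2)}$ and $C_{k+2}\,\dot\cup\,C_{k+2}$ (your family with $n=k+2$), invokes \cref{lem:cyco} for the \wl{k}{1}-indistinguishability, shows that the \wl{2}{2} already separates the pair through the local neighborhoods of the $2$-component $2$-tuples, and lifts this to general $k$ by appealing to the fact that the \wl{k}{2} is at least as powerful as the \wl{2}{2} --- which is precisely the padding/subfamily argument you flag as your remaining obstacle. The only substantive addition on your side is the explicit inductive argument that the \wl{k}{2} refines the \wl{k}{1} on connected tuples, a step the paper leaves implicit.
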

Further, the following theorem yields that increasing the parameter $s$ results in higher expressivity. Formally, we show that the \wl{k}{k} is strictly more expressive than the \wl{k}{2}.
\begin{theorem}\label{theorem:three}
	For $k \geq 2$, it holds that 
	\begin{equation*}
		\text{\wl{k}{k}} \sqsubset \text{\wl{k}{2}}.
	\end{equation*}
\end{theorem}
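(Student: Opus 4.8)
The relation $\sqsubset$ denotes \emph{strictly more powerful}, so the statement decomposes into two parts: (a) the \wl{k}{k} is at least as powerful as the \wl{k}{2}, and (b) some pair of non-isomorphic graphs is distinguished by the former but not the latter. A preliminary remark: for $k=2$ every $2$-tuple induces at most two components, so $V(G)^2_2 = V(G)^2$ and the two algorithms literally coincide; the genuine content is therefore $k \geq 3$, on which I would focus. Part (a) is the routine direction. By the preceding Observation, \wl{k}{k} $\equiv$ \localkwl, and since $V(G)^k_k = V(G)^k$ it colors all $k$-tuples and aggregates, on any $(k,2)$-tuple, over a \emph{superset} of the local neighbors seen by the \wl{k}{2}.

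For (a) I would show by induction on the round $i$ that, on every graph, $C^{k,k}_i$ refines $C^{k,2}_i$ on the shared domain $V(G)^k_2$: if $C^{k,k}_i(\vec v)=C^{k,k}_i(\vec w)$ for two $(k,2)$-tuples, then $C^{k,2}_i(\vec v)=C^{k,2}_i(\vec w)$. The base case holds since both initial colorings equal the atomic type of $G[\vec v]$. The step rests on one observation: $\mathsf{\#com}(G[\vec x])$ is a function of the atomic type of $\vec x$, hence of $C^{k,k}_0(\vec x)$ and a fortiori of $C^{k,k}_i(\vec x)$; thus the predicate ``$\phi_j(\vec v,w)\in V(G)^k_2$'' is determined by the \wl{k}{k}-color of the neighbor. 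Equality of the full neighbor multisets $\oms C^{k,k}_i(\phi_j(\vec v,w)) \mid w\in\delta(v_j)\cms$ for $\vec v$ and $\vec w$ therefore restricts to equality of the submultisets obtained by keeping only entries whose color encodes a $\le 2$-component type, and the induction hypothesis lets me replace each $C^{k,k}_i$-color by the induced $C^{k,2}_i$-color. This gives $M^{\delta,k,2}_i(\vec v)=M^{\delta,k,2}_i(\vec w)$ and closes the induction. Running this on $G\,\dot\cup\,H$ for $i$ large enough that both colorings are stable, each \wl{k}{2}-color class of $(k,2)$-tuples is a disjoint union of \wl{k}{k}-color classes, and since membership in $V(\cdot)^k_2$ is color-determined none of those classes leaks into tuples with more components; hence any difference in $(k,2)$-color-class sizes forces a difference in some $(k,k)$-color-class size, so the \wl{k}{k} distinguishes whatever the \wl{k}{2} distinguishes.

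For the strictness in (b) I would exhibit, for each $k\geq 3$, non-isomorphic graphs $G_k,H_k$ separated by the \wl{k}{k} but not the \wl{k}{2}. The design principle is to make the pair ``$2$-component-indistinguishable'' while concealing the difference in a configuration visible only to a $k$-tuple inducing three connected components. Concretely, I would arrange a color-preserving bijection $V(G_k)^k_2\to V(H_k)^k_2$ at every refinement round, forcing identical stable $(k,2)$-colorings and hence equal color-class sizes, yet ensure that some $k$-tuple with exactly three components has an induced type occurring with different multiplicity in $G_k$ and $H_k$, which the \wl{k}{k} detects. A robust source of such pairs is a CFI/gadget-style construction in which all two-piece correlations are deliberately balanced but a global parity or count is carried only by three simultaneously placed probes; for $k=3$ the relevant regime is precisely triples of pairwise non-adjacent nodes, the only $3$-tuples excluded from $V(\cdot)^3_2$.

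The real obstacle is (b), and within it the hard half is proving that the \wl{k}{2} \emph{fails}: exhibiting one distinguishing three-component type settles the \wl{k}{k} side immediately, but ruling out the \wl{k}{2} requires controlling the entire iterated refinement on all $\le 2$-component tuples. The cleanest route is to verify that the proposed bijection respects atomic types and commutes with the local-neighbor relation \emph{restricted to} $V(\cdot)^k_2$, so that an easy induction shows it is preserved under refinement. Checking this compatibility — i.e., that no three-component witness can be smuggled in through a chain of two-component refinements — is the delicate, example-specific core of the argument.
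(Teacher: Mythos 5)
Your part (a) is correct and is a sensible way to make explicit the containment that the paper leaves implicit: the number of connected components of $G[\vec{x}]$ is determined by the atomic type and hence by any \wl{k}{k} color, so the restricted neighbor multisets used by the \wl{k}{2} can be recovered from the unrestricted ones, and your induction goes through. Your observation that the statement only has genuine content for $k \geq 3$ (since $V(G)^2_2 = V(G)^2$, so for $k=2$ the two algorithms coincide) is also a fair point about the theorem as stated.

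The gap is part (b), which is the entire substance of the theorem: your proposal stops at a description of what a witness family ought to look like rather than producing one. You correctly identify the design principle --- a CFI-style pair in which the twist is visible only to tuples inducing at least three components --- but you neither construct the graphs nor prove either half of the separation, and you yourself flag the \wl{k}{2}-indistinguishability argument as the ``delicate, example-specific core.'' That core is where all the work lies. For reference, the paper takes the CFI-type pair $G_k, H_k$ built over the complete graph on $k+1$ vertices from \citep{Morris2020b}, assigns each vertex cloud and edge cloud its own color, and subdivides every edge into a path of length $\Delta > 3k$. The padding is the key quantitative idea you are missing: it forces any $k$-tuple with at most two components to meet at most two of the $k+1$ vertex clouds, so in a suitably localized bijective pebble game the twisted edge can always be hidden among the remaining $k-1$ base vertices, and the \wl{k}{2} fails to separate the pair. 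Strictness on the other side is then obtained by exhibiting colored-distance-$(2\Delta+1)$-cliques of size $k+1$ in $X_k$ but not in $Y_k$, which the \localkwl $=$ \wl{k}{k} detects. Without some such concrete construction and both verifications, your proposal establishes only the easy inclusion, not the strictness the theorem asserts.
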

See~\cref{time_ext} for an analysis of the asymptotic running time of the \wl{k}{s}, showing that it only depends on $k$, $s$, and the sparsity of the graph. In particular, the running time of the \wl{k}{s} on an $n$-vertex graph of bounded degree is $\tilde{\cO}(n^s)$ instead of the usual $\tilde{\cO}(n^k)$ for the \kwl, for fixed $k$ and $s$.

\subsubsection{Proofs of~\cref{theorem:one,theorem:two,theorem:three}}

To prove~\cref{theorem:one}, we introduce the \new{$(k,s)$-tuple graph}. It essentially contains all $(k,s)$-tuples as nodes, where each node $v_\vec{t}$ is labeled by the isomorphism type of the $(k,s)$-tuple $\vec{t}$. We join two nodes by an edge, labeled $j$, if the underlying $(k,s)$-tuples are $j$-neighbors. The formal definition of the $(k,s)$-tuple graph is as follows. Recall that $\tau$ denotes an isomorphism type.
\begin{definition}\label{def:ktuplegraph}
	Let $G$ be a graph and let $k \geq 1$, and $s$ in $[k]$. Further, let $\vec{s}$ and $\vec{t}$ be tuples in $V(G)_s^k$. Then the
	directed, labeled \new{$(k,s)$-tuple graph} $T^k_s(G) = (V_T, E_T, \ell_T)$ has node set $V_T = \{ v_\vec{t} \mid \vec{t} \in V(G)^k_s \}$, and
	\begin{equation}\label{ktuple}
		(v_\vec{s},v_\vec{t}) \in E_T \iff \vec{t} = \phi_j(\vec{s},w) \ \text{holds for some $j$ in $[k]$ and some $w$ in $V(G)$.}
	\end{equation} 
	We set $\ell_T((v_\vec{s}, v_\vec{t}))
	\coloneqq j$ if $\vec{t}$ is a local $j$-neighbor of $\vec{s}$, and let $\ell_T(v_\vec{s}) \coloneqq \tau_{G[\vec{s}]}$.
\end{definition}
Given a graph $G$ and the corresponding $(k,s)$-tuple graph $T^k_s(G)$, we define a variant of the \wlone, which takes into account edge labels. Namely, for $v_\vec{t}$ in $V_T$, the new algorithm uses the colorings $C^{1,\delta,*}_{0}(v_\vec{t}) = \tau_{G[\vec{t}]}$ and
\begin{equation}\label{wlv}
	C^{1,\delta,*}_{i+1}(v_\vec{t}) = (C^{1,\delta,*}_{i}(v_\vec{t}), \{\!\!\{ (C^{1,\delta,*}_{i}(v_\vec{s}), \ell(v_\vec{t}, v_\vec{s})) \mid v_\vec{s} \in \delta(v_\vec{t}) \} \!\!\}
\end{equation}
for $i >0$. Note that the \wlone, see~\cref{ck_ext}, and the variant defined via~\cref{wlv} have the same asymptotic running time. The following lemma states that the \wl{k}{s} can be simulated on the $(k,s)$-tuple graph using the above variant of the $1$-WL.
\begin{lemma}
	\label{lemma:wlk}
	Let $G$ be a graph, $k \geq 1$, and $s$ in $[k]$. Then
	\begin{equation*}
		C^{k,s}_{i}(\vec{t}) = C^{k,s}_{i}(\vec{u}) \iff C^{1,\delta,*}_{i}(v_\vec{t}) = C^{1,\delta,*}_{i}(v_\vec{u}),
	\end{equation*}
	for all $i \geq 0$, and all $(k,s)$-tuples $\vec{t}$ and $\vec{u}$ in $V(G)^k_s$.
\end{lemma}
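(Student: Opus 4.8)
The plan is to prove both directions of the biconditional simultaneously by induction on the iteration count $i$, exploiting the fact that the $(k,s)$-tuple graph $T^k_s(G)$ was constructed precisely so that the local neighborhood structure of the \wl{k}{s} is encoded in the edge-labeled adjacency of $T^k_s(G)$. The base case $i = 0$ is immediate: by definition $C^{k,s}_0(\vec{t})$ is the isomorphism type $\tau_{G[\vec{t}]}$, and $C^{1,\delta,*}_0(v_\vec{t}) = \tau_{G[\vec{t}]}$ as well, so the two initial colorings agree tuple-for-tuple and the equivalence of colors is trivial.

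For the inductive step, I would assume the claim holds for $i$ and establish it for $i+1$. The key observation is that both refinement steps have the same two-part structure: each new color is the pair consisting of the old color together with a multiset of neighbor information. For the \wl{k}{s}, the new color $C^{k,s}_{i+1}(\vec{t})$ is $(C^{k,s}_i(\vec{t}), M^{\delta,k,s}_i(\vec{t}))$, where the aggregation map in~\cref{eqnmiddd} collects, separately for each coordinate $j \in [k]$, the multiset of colors $C^{k,s}_i(\phi_j(\vec{t},w))$ over local $j$-neighbors $\phi_j(\vec{t},w) \in V(G)^k_s$. For the $1$-WL variant on $T^k_s(G)$, the new color $C^{1,\delta,*}_{i+1}(v_\vec{t})$ is $(C^{1,\delta,*}_i(v_\vec{t}), \{\!\!\{ (C^{1,\delta,*}_i(v_\vec{s}), \ell(v_\vec{t},v_\vec{s})) \mid v_\vec{s} \in \delta(v_\vec{t}) \}\!\!\})$. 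The crux is a bijection between the two neighbor-multisets: by~\cref{ktuple} and the edge-labeling convention of~\cref{def:ktuplegraph}, the neighbors $v_\vec{s}$ of $v_\vec{t}$ in $T^k_s(G)$ carrying edge label $j$ correspond exactly to the local $j$-neighbors $\phi_j(\vec{t},w) \in V(G)^k_s$ of $\vec{t}$. The edge label $j$ in the $1$-WL multiset plays precisely the role of the coordinate index that separates the $k$ component multisets $S_1,\dots,S_k$ in $M^{\delta,k,s}_i$, so the single labeled multiset and the $k$-tuple of multisets encode the same information.

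With this correspondence in hand, I would argue the refinement is a \emph{faithful} simulation: two tuples $\vec{t}, \vec{u}$ receive the same $(i{+}1)$-th \wl{k}{s} color if and only if they have the same $i$-th color (which by induction is equivalent to $C^{1,\delta,*}_i(v_\vec{t}) = C^{1,\delta,*}_i(v_\vec{u})$) and for each coordinate $j$ their local $j$-neighbor color multisets coincide. Using the induction hypothesis to translate neighbor colors between the two colorings, and the edge-label/coordinate correspondence above, this holds if and only if $v_\vec{t}$ and $v_\vec{u}$ agree in their $i$-th $1$-WL color and in the multiset of (neighbor-color, edge-label) pairs, which is exactly the condition $C^{1,\delta,*}_{i+1}(v_\vec{t}) = C^{1,\delta,*}_{i+1}(v_\vec{u})$. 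Here I must be slightly careful that I am comparing \emph{equality of colors across two tuples} rather than the literal color values, since the colorings are independently renamed to integers after each round; the induction hypothesis is stated exactly as such an equivalence, so this is the right formulation to propagate.

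The main obstacle I anticipate is not conceptual but bookkeeping: making the translation between the two neighbor-multisets fully rigorous. Specifically, I need to verify that the induction hypothesis lets me replace colors of neighbors $\phi_j(\vec{t},w)$ by colors of the corresponding nodes $v_{\phi_j(\vec{t},w)}$ \emph{inside} a multiset equality, which requires that the map $\vec{w} \mapsto v_\vec{w}$ restricted to the relevant neighbors is a well-defined bijection respecting multiplicities, and that the restriction to $(k,s)$-tuples built into~\cref{eqnmiddd} exactly matches the fact that $T^k_s(G)$ only contains nodes for tuples in $V(G)^k_s$ (so no spurious or missing neighbors arise). Once this bijection-of-multisets argument is pinned down, both directions of the biconditional follow together, since every step in the equivalence chain is itself an ``if and only if.''
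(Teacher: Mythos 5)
Your proposal is correct and matches the paper's own argument, which is only sketched there as ``induction on the number of iterations using the definition of the $(k,s)$-tuple graph''; you fill in exactly the intended details, namely the agreement of initial colorings via isomorphism types and the edge-label-$j$/coordinate-$j$ correspondence between the two neighbor multisets in the inductive step. The bookkeeping point you flag (propagating an equivalence of color classes rather than literal color values) is handled correctly by stating the induction hypothesis as a biconditional.
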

\begin{proof}[Proof sketch]
	Induction on the number of iterations using~\cref{def:ktuplegraph}. 
\end{proof}

The \new{unrolling} of a neighborhood around a node of a given graph to a tree is defined as follows, see~\cref{unroll} for an illustration.	
\begin{figure}[t]
	\begin{center}
		\subfloat[]{
			\centering
			\begin{tikzpicture}[node distance = {.5cm}, v/.style = {draw, circle},  minimum size=25pt,every node/.style={inner sep=0pt,outer sep=0}]
  \node (a) [v] {$a$};
  \node (b) [v, left = of a] {$b$};
  \node (c) [v, right = of a] {$c$};
  \node (d) [v, above = of a] {$d$};

  \draw (a) to node[above] {} (c);
  \draw (a) to node[above] {} (b);
  \draw (a) to node[above] {} (d);

  \draw (d) to node[above] {} (b);
\end{tikzpicture}}\qquad\qquad
		\subfloat[]{
			\centering
			\begin{tikzpicture}[node distance = {.5cm}, v/.style = {draw, circle}, minimum size=25pt,every node/.style={inner sep=0pt,outer sep=0}]
  \node (a) [v] {$a_{(0,a)}$};
  \node (b) [v, below  = .4cm of a] {$c_{(1,a)}$};
\node (c) [v, below right = 1.cm of a] {$b_{(1,a)}$};
\node (d) [v, below left = 1.cm of a] {$d_{(1,a)}$};

\node (e) [v, below =  .4cm of d] {$a_{(2,d)}$};
\node (f) [v, below left =1.0cm of d] {$b_{(2,d)}$};
\node (g) [v, below = .4cm of b] {$a_{(2,c)}$};
\node (i) [v, below  =  .4cm of c] {$a_{(2,b)}$};
\node (j) [v, below right = 1.0cm of c] {$d_{(2,b)}$};

\draw[->] (a) to node[above] {} (b);
\draw[->] (a) to node[above] {} (c);
\draw[->] (a) to node[above] {} (d);

\draw[->] (d) to node[above] {} (e);
\draw[->] (d) to node[above] {} (f);
\draw[->] (b) to node[above] {} (g);

\draw[->] (c) to node[above] {} (i);
\draw[->] (c) to node[above] {} (j);
\end{tikzpicture}}
	\end{center}
	\caption{Illustration of the unrolling operation around the node $a$ for $i=2$.}\label{unroll}
\end{figure}
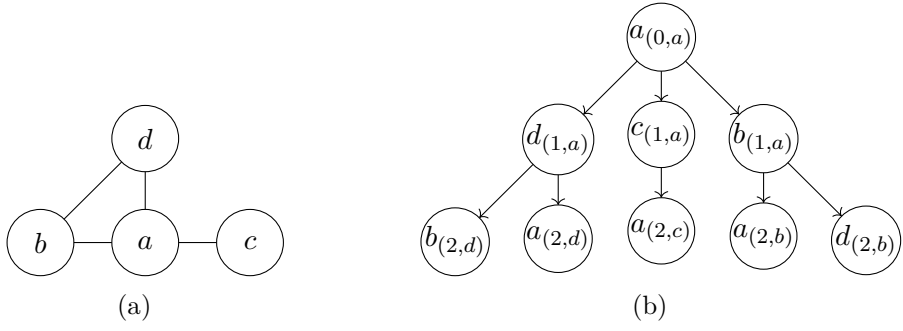
\begin{definition}
	Let $G=(V,E,\ell)$ be a labeled (directed) graph and let $v$ be in
	$V$. Then $U^i_{G,v} = (W_i, F_i, l_i)$ for $i \geq 0$ denotes the \new{unrolled tree} $G$
	around $v$ at depth $i$, where
	\begin{align*}                                                                                                                                                                                & W_i =                 
		\begin{cases} 
		\{v_{(0,v)}\}                                                                                                                                                                                 & \text{\;\; if } i = 0  \\ 
		\mathrlap{W_{i-1} \cup \{u_{(i,w_{(i-1,p)})} \mid u \in \delta(w) \text{ for } w_{(i-1,p)} \in W_{i-1} \}} \phantom{ F_{i-1} \cup \{ (w_{(i,p)},u_{(i,w)}) \mid u \in \delta(w) \text{ for } w_{(i-1,p)} \in W_{i-1} \}} & \text{\;\; otherwise,} 
		\end{cases}\\
		\text{and} &   \\                                                                                                                                                                        & F_i \;=               
		\begin{cases} 
		\emptyset                                                                                                                                                                             & \text{if } i = 0  \\ 
		F_{i-1} \cup \{ (w_{(i-1,p)},u_{(i,w)}) \mid u \in \delta(w)   \text{ for } w_{(i-1,p)} \in W_{i-1} \}                                                                                                   & \text{otherwise}. 
		\end{cases} 
	\end{align*}
	The label function is defined as $l_i(u_{(j,p)}) = \ell(u)$ for $u$ in $V$, and $l_i(u_{(j,w)}) = \ell((w,u))$. For notational convenience, we usually omit the subscript $i$.
\end{definition}	
In the following, we use the unrolled tree for the above defined $(k,s)$-tuple graph. 
For $k\ge 2$ and $s$ in $[k]$, we denote the \emph{directed}, unrolled tree of the
$(k,s)$-tuple graph of $G$ around the node $v_\vec{t}$ at depth $i$ for the tuple $\vec{t}$ in $V(G)^k_s$ 
by $\mathbf{U}^i_{T^k_s(G),v_\vec{t}}$. 
For notational convenience, we
write $\vec{U}^i_{T,v_\vec{t}}$ for $\mathbf{U}^i_{T^k_s(G),v_\vec{t}}$. Further, for two $(k,s)$-tuples $\vec{t}$ and $\vec{u}$, we write 
\begin{equation}\label{treeiso}
	\vec{U}^i_{T,v_\vec{t}} \simeq_{v_\vec{t} \to v_\vec{u}} \vec{U}^i_{T,v_\vec{u}}
\end{equation} 
if there exists a (labeled) isomorphism $\varphi$ between the two unrolled trees, mapping the (root) node $v_\vec{t}$ to $v_\vec{u}$. Moreover, we need the following two results. 
\begin{theorem}[\citep{Bus+1965, Val2002}]\label{tiso}
	The \wlone distinguishes any two directed, labeled non-isomorphic trees.
\end{theorem}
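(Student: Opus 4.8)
The plan is to show that color refinement assigns to each node of a rooted tree a \emph{complete} invariant of the rooted subtree dangling below it; distinguishing two non-isomorphic trees then reduces to comparing the colors of their roots. First I would observe that a directed, labeled tree is rooted at its unique source --- the single node with in-degree $0$, all edges being oriented away from it --- so I may work with rooted trees throughout, and the root of each tree is canonically identifiable. The relevant refinement is the edge-label-aware variant of~\cref{wlv}, so that both node labels and the labels/directions of edges are respected.

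The heart of the argument is the following completeness claim, proved by induction on the height of the subtree: in the stable coloring, two nodes $u$ and $v$ satisfy $C^{1,\delta,*}_\infty(u) = C^{1,\delta,*}_\infty(v)$ if and only if the rooted subtrees $T_u$ and $T_v$ hanging below them are isomorphic via a root-preserving, node-label- and edge-label-preserving isomorphism. The base case is a single node (height $0$): two such subtrees are isomorphic iff the nodes carry the same label, which is exactly what the initial coloring $C_0$ records. For the inductive step, recall that at a fixed point the color of $u$ is consistent with the aggregation map, so $C^{1,\delta,*}_\infty(u)$ determines, and is determined by, the node label of $u$ together with the multiset $\{\!\!\{ (C^{1,\delta,*}_\infty(c), \ell(u,c)) \mid c \text{ a child of } u \}\!\!\}$ of (child color, connecting edge label) pairs. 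If $u$ and $v$ have equal stable colors, they share a label and this multiset, hence there is a bijection between their children matching equal colors and equal connecting edge labels; by the induction hypothesis each matched pair of children roots isomorphic subtrees, and gluing these child isomorphisms along the root-to-root map yields $T_u \simeq T_v$. Conversely, any such isomorphism induces a color- and edge-label-preserving bijection of the children, forcing the multisets, and hence the stable colors, to agree.

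I would then conclude as follows. Applying the completeness claim to the two roots shows that the roots of two directed, labeled trees receive the same stable color exactly when the trees are isomorphic. To see that a difference in root colors is actually witnessed in the histogram-based distinguishing criterion, note that within any single tree the root is the unique node whose subtree is the entire tree, so its color occurs exactly once. If the two trees had identical color histograms on their disjoint union, then the color of the root of one tree would also be attained by some node of the other, and by completeness that node would root a subtree isomorphic to the whole first tree; a symmetric argument together with a size comparison forces the two trees to have equal order and isomorphic root-subtrees, i.e.\ $T_1 \simeq T_2$, contradicting non-isomorphism. Hence the color histograms differ, and the \wlone distinguishes any two non-isomorphic directed, labeled trees.

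I expect the main obstacle to be the inductive step of the completeness claim, specifically upgrading the equality of the child multisets into an honest isomorphism. The matching of equal (color, edge-label) pairs is immediate from multiset equality, but one must check that the induction-supplied child-subtree isomorphisms piece together into a single well-defined bijection on all of $T_u$ that preserves labels, edge labels, and edge directions at every level; this compatibility across levels is precisely why the edge-label-aware refinement of~\cref{wlv}, rather than plain Color Refinement, is needed. The reduction to root colors and the histogram bookkeeping in the final step are then routine.
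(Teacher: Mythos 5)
The paper does not actually prove \cref{tiso}: it imports the statement from the literature \citep{Bus+1965, Val2002} and uses it as a black box in the proof of \cref{ktrees}, so there is no in-paper argument to compare against. Your proposal supplies the standard AHU-style proof, and it is correct. The key points are all present: the stable color of a node under the children-only, edge-label-aware refinement of \cref{wlv} is a complete invariant of the rooted subtree below it, and the histogram bookkeeping at the end correctly converts inequality of root colors into the multiset-based distinguishing criterion, using that the root's color class has cardinality one inside each tree (no proper subtree can be isomorphic to the whole tree, by a node count). Two small points would tighten the write-up. First, ``directed, labeled tree'' should be read as the paper reads it --- a rooted tree with all edges oriented away from the root, which is exactly what the unrolled trees $\vec{U}^i_{T,v_\vec{t}}$ are; for an arbitrary orientation of a tree there need not be a unique source, but that generality is not needed here. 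Second, the induction in the completeness claim is cleanest if run on $\max(\mathrm{height}(T_u),\mathrm{height}(T_v))$, after observing separately that nodes whose subtrees have different heights are forced into different stable colors (a leaf reports an empty child multiset at every round, an internal node does not, and this propagates upward), so the base case and the inductive step never have to compare nodes of unequal height. Note also that your final step only invokes the ``equal colors implies isomorphic subtrees'' direction of the completeness claim, which makes the argument robust to refinements finer than the one you analyze.
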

Using the first result, the second one states that the \wl{k}{s} can be simulated by the variant of the \wlone of~\cref{wlv} on the unrolled tree of the $(k,s)$-tuple graph, and hence can be reduced to a tree-isomorphism problem.
\begin{restatable}{lemma}{ktrees}
	\label{ktrees}
	Let $G$ be a \emph{connected} graph, then the \wl{k}{s} colors the tuples $\vec{t}$ and  $\vec{u}$ in $V(G)^{k}_s$ equally if and only if the corresponding unrolled $(k,s)$-tuple trees are isomorphic, i.e., 
	\begin{equation*}
		C^{k,s}_{i}(\vec{t}) = C^{k,s}_{i}(\vec{u}) \iff  \vec{U}^i_{T,v_\vec{t}}  \simeq_{v_\vec{t} \to v_\vec{u}} \vec{U}^i_{T,v_\vec{u}},
	\end{equation*}
	for all $i \geq 0 $. 
\end{restatable}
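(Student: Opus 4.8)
The plan is to combine the reduction already provided by~\cref{lemma:wlk} with the classical correspondence between color-refinement colors and unfolding trees, closing the tight direction with~\cref{tiso}. First I would invoke~\cref{lemma:wlk} to replace the left-hand side: for all $i$, we have $C^{k,s}_{i}(\vec{t}) = C^{k,s}_{i}(\vec{u})$ if and only if the edge-labeled \wlone variant of~\cref{wlv} assigns equal colors to $v_\vec{t}$ and $v_\vec{u}$ in the $(k,s)$-tuple graph $T^k_s(G)$. Since $\vec{U}^i_{T,v_\vec{t}}$ is by definition the depth-$i$ unrolling of $T^k_s(G)$ around $v_\vec{t}$, it then suffices to establish the following generic statement: for the variant of~\cref{wlv} run on any directed, labeled graph, two nodes receive the same color after $i$ rounds if and only if their depth-$i$ unrolled trees are isomorphic via a root- and label-preserving isomorphism. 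This isolates the combinatorial core from the $(k,s)$-specific machinery.

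I would prove this generic statement by induction on $i$, which also immediately yields the ``isomorphic trees $\Rightarrow$ equal colors'' direction. In the base case $i=0$, the initial color is the root label $\tau_{G[\cdot]}$ and the unrolled tree is a single labeled root, so the two sides coincide. For the inductive step, I unfold~\cref{wlv}: equality of the $(i{+}1)$-st colors of two nodes is equivalent to equality of their $i$-th colors together with equality of the decorated multisets $\oms (C^{1,\delta,*}_{i}(v_\vec{s}), \ell_T(\cdot,v_\vec{s})) \mid v_\vec{s} \in \delta(\cdot) \cms$. By the induction hypothesis, children with equal $i$-th colors have isomorphic depth-$i$ subtrees, so equality of these labeled multisets is precisely the existence of an edge-label-preserving bijection between the children that pairs up isomorphic depth-$i$ subtrees. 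This is exactly a root-preserving isomorphism of the depth-$(i{+}1)$ unrolled trees, namely a root together with its matched child subtrees, and the argument is reversible, giving the equivalence at level $i{+}1$.

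For the converse direction (``equal colors $\Rightarrow$ isomorphic trees''), I would argue the contrapositive using~\cref{tiso}. The unrolled objects $\vec{U}^i_{T,v_\vec{t}}$ and $\vec{U}^i_{T,v_\vec{u}}$ are directed, labeled trees whose roots are exactly the unique in-degree-zero nodes, so non-isomorphism as \emph{rooted} trees is equivalent to non-isomorphism as labeled directed trees after marking the roots. If they are non-isomorphic, \cref{tiso} guarantees that \wlone distinguishes them, which forces the stable colors of the marked roots to differ; via the unfolding identity this means $C^{1,\delta,*}_{i}(v_\vec{t}) \neq C^{1,\delta,*}_{i}(v_\vec{u})$, as desired. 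Connectivity of $G$ enters here to ensure that the local neighborhoods aggregated by the \wl{k}{s} are faithfully represented as the out-neighborhoods of $T^k_s(G)$ (cf.~\cref{def:ktuplegraph}), so that the depth-$i$ unrolling genuinely records the full depth-$i$ computation rather than a truncated one.

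The hard part will be the bookkeeping in the inductive step: one must carry the edge labels $\ell_T$ and the distinguished root through simultaneously, verifying that equality of the \emph{decorated} color multisets produces a bijection of children that extends to a \emph{root-preserving} tree isomorphism—rather than merely an unlabeled or root-agnostic one—and dually that any such isomorphism reproduces those multisets. A secondary technical point is to state cleanly the unfolding identity that lets me read off the root color of $\vec{U}^i_{T,v_\vec{t}}$ as $C^{1,\delta,*}_{i}(v_\vec{t})$, which is what glues the induction to the application of~\cref{tiso}; I expect this to be routine once the correspondence is phrased for the edge-labeled variant, but it is the place where the directedness and the depth-indexing of the unrolling must be handled with care.
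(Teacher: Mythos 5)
Your proposal is correct and takes essentially the same route as the paper's own proof: both reduce via \cref{lemma:wlk} to the edge-labeled \wlone variant on the $(k,s)$-tuple graph, identify its round-$i$ colors with the depth-$i$ unrolled trees, and invoke \cref{tiso} to close the equivalence. Your inductive elaboration of the color/unrolling correspondence simply fills in the detail that the paper's proof sketch leaves implicit.
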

\begin{proof}[Proof sketch]
	First, by~\cref{lemma:wlk}, we can simulate the \wl{k}{s} for the graph $G$ using the $(k,s)$-tuple graph $T^k_s(G)$. Secondly, consider a node $v_\vec{t}$ in the $(k,s)$-tuple graph $T^k_s(G)$ and a corresponding node in the unrolled tree around $v_\vec{t}$. Observe that the neighborhoods for both nodes are identical. By definition, this holds for all nodes (excluding the leaves) in the unrolled tree. Hence, by \cref{lemma:wlk}, we can simulate the \wl{k}{s} for each tuple $\vec{t}$ by running the \wlone in the unrolled tree around $v_\vec{t}$ in the $(k,s)$-tuple graph. Since the  \wlone decides isomorphism for trees, see~\cref{tiso}, the result follows. 
\end{proof}

The following lemma shows that the \wl{k+1}{1} is strictly more expressive than the \wl{k}{1} for every $k \geq 2$. 
\begin{lemma}\label{lem:cyco}
	Let $k\geq 2$. Let $G \coloneqq C_{2(k+2)}$ and 
	$H \coloneqq C_{(k+2)} \,\dot\cup\, C_{(k+2)}$. 
	Then, the graphs $G$ and $H$ are distinguished by \wl{k+1}{1}, but they are not distinguished by \wl{k}{1}. 
\end{lemma}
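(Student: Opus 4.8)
The plan is to handle both directions through the $(k,s)$-tuple graph of \cref{def:ktuplegraph}, using \cref{lemma:wlk} to reduce the behaviour of the \wl{k+1}{1} and \wl{k}{1} on $G$ and $H$ to the edge-labelled \wlone on the corresponding tuple graphs. The single geometric fact driving everything is this: if the distinct entries of a $(k',1)$-tuple in a cycle $C_N$ form an arc (a block of consecutive vertices) with $m$ distinct vertices, then the two endpoints of the arc are \emph{adjacent} in $C_N$ exactly when $m = N-1$, i.e.\ when precisely one vertex is omitted. Since a $(k',1)$-tuple has at most $k'$ distinct entries, its arc satisfies $m \le k'$, so this wrap-around adjacency can arise only if $k' = N-1$. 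Among the four relevant cases this occurs in exactly one: the \wl{k+1}{1} on a component $C_{k+2}$ of $H$, where $m = k+1 = N-1$. For the \wl{k}{1} on $C_{k+2}$ one has $m \le k = N-2$, and on $G = C_{2(k+2)}$ the arcs are far too short to wrap in either case.

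For the \textbf{separation} by the \wl{k+1}{1}, I would exhibit a colour with different multiplicity. Consider the full-arc tuples $\vec{t} = (x, x+1, \dots, x+k)$, whose $k+1$ distinct entries form a path; these share one initial colour (isomorphism type) across $G$ and $H$, and both graphs contain exactly $2(k+2)$ of them (one such arc starts at each vertex). I would then compute the degree of $v_\vec{t}$ in the tuple graph, i.e.\ the number of valid local $(k+1,1)$-neighbours. Moving an interior coordinate always disconnects the induced subgraph; the two inward endpoint moves are always valid; the only discrepancy lies in the two \emph{outward} endpoint moves. These are invalid in $G$ (the new vertex sits at cycle-distance $k+3$ from the arc) but valid in $H$, where the omitted vertex is adjacent to both endpoints and the arc closes up. Hence each full-arc tuple has degree $2$ in $T^{k+1}_1(G)$ but degree $4$ in $T^{k+1}_1(H)$. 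After a single refinement step of the tuple-graph \wlone, the colour ``path-type with four labelled neighbours'' has multiplicity $2(k+2)$ in $H$ and $0$ in $G$, so \cref{lemma:wlk} shows the \wl{k+1}{1} distinguishes $G$ and $H$.

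For the \textbf{non-separation} by the \wl{k}{1}, I would argue that both tuple graphs are locally indistinguishable from that of the two-way infinite path $\mathbb{Z}$. Every $(k,1)$-tuple has arc-width at most $k-1$, strictly below the length of either cycle, so the arc fits inside its cycle without wrapping; moreover, by the threshold above, no local move encountered during unrolling can create a wrap-around, since the arc length never exceeds $k \le N-2$. Consequently the unrolled $(k,1)$-tuple tree around $v_\vec{t}$ depends only on the \emph{shape} of $\vec{t}$ — its rotation class, recorded as the pattern of coordinate-to-offset assignments lifted to $\mathbb{Z}$ — and not on whether $\vec{t}$ lives in $C_{2(k+2)}$ or in a $C_{k+2}$. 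Applying \cref{ktrees} (to $G$, and to each component of $H$ separately, which is legitimate since every $(k,1)$-tuple lies in one component) together with \cref{tiso}, two tuples of the same shape receive the same \wl{k}{1} colour, and the map $\text{shape} \mapsto \text{colour}$ is identical for $G$ and $H$. Because the cyclic action on placements of a fixed shape is free once the arc-width is below the cycle length, each shape occurs exactly $2(k+2)$ times in $G$ and exactly $(k+2)+(k+2) = 2(k+2)$ times in $H$. Summing over the shapes mapping to a given colour, the two histograms coincide, so the \wl{k}{1} fails to distinguish $G$ and $H$.

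The main obstacle I anticipate is the bookkeeping around \emph{repeated entries}: a local move can merge or split coordinates and thereby alter the number of connected components, so both the degree count ($2$ versus $4$) and the claim that shape determines the unrolled tree must be checked uniformly over all multiplicity patterns, not only for the all-distinct representatives $(x,\dots,x+k)$. The second delicate point is making the counting bijection rigorous — in particular verifying that no non-trivial rotation fixes a tuple whose arc is a proper sub-arc, so that every shape genuinely has multiplicity $N$ in $C_N$ and hence equal total multiplicity in $G$ and $H$. Beyond these two points, the remaining content is a routine induction on the unrolling depth via \cref{lemma:wlk}.
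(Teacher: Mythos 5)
Your argument is correct and rests on the same combinatorial fact as the paper's proof: for a $(k+1)$-tuple traversing an arc of $k+1$ consecutive vertices, the outward move at an endpoint stays connected in $C_{k+2}$ (only one cycle vertex is omitted, and it is adjacent to both ends of the arc) but not in $C_{2(k+2)}$, whereas for $k$-tuples the arcs are too short for any wrap-around in either graph. Your separation step is essentially the paper's, just phrased as a degree count ($2$ versus $4$ valid local moves) in the tuple graph rather than as the existence of a non-repeating local $1$-neighbour. Where you genuinely diverge is the non-separation step: the paper observes that, absent wrap-around, the multiset of valid local $j$-neighbours of any $(k,1)$-tuple is determined by its atomic type, so the \wl{k}{1} never refines the initial colouring and one only needs the initial histograms to match (a step the paper leaves implicit); you instead lift to the universal cover, argue that the unrolled $(k,1)$-tuple trees depend only on the translation class of the offset pattern, and invoke \cref{ktrees} plus a free-action orbit count to equate the colour histograms. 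Your route is heavier but makes the final counting explicit, which is a genuine improvement in rigour. Two small points: the ``geometric fact'' in your opening paragraph is misstated --- the endpoints of an arc on $m=N-1$ vertices of $C_N$ are \emph{not} adjacent (they are at distance two through the unique omitted vertex); what you actually need, and correctly use later, is that the outward endpoint move lands on that omitted vertex and hence preserves connectivity precisely when $m = N-1$. And since \cref{ktrees} is stated for connected graphs, comparing colours across $G$ and $H$ formally requires running the simulation on the tuple graph of $G \,\dot\cup\, H$, whose components for $s=1$ are exactly the tuple graphs of the individual cycles; this is the routine extension you allude to.
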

\begin{proof}
	We first show that the \wl{k+1}{1} distinguishes the graphs $G$ and $H$.
	Let $\vec{v}$ in $V(G)^{k+1}_1$ be a tuple $(v_1,\dots,v_{k+1})$ such that $v_1,\dots,v_{k+1}$ is a path of length $k$ in $G$. 
	Let $\vec{w}$ in $V(H)^{k+1}_1$ be a tuple $(w_1,\dots,w_{k+1})$ such that $w_1,\dots,w_{k+1}$ is a path of length $k$ in $H$. By the structure of $H$, there exists a vertex $w_{k+2}$ in $V(H)$ such that $w_1,\dots,w_{k+2}$ forms a cycle of length $k+2$. 
	We claim that $\vec{v}$ does not have any local $1$-neighbor $\vec{x}$ in $V(G)^{k+1}_1$ such that $\vec{x}$ is non-repeating, i.e., every vertex in $\vec{x}$ is distinct. This holds because replacing the first vertex of $\vec{v}$ with any other vertex of $G$ will yield a disconnected tuple. On the other hand, $\vec{w}$ admits a non-repeating, local $1$-neighbor, obtained by replacing the first vertex $w_1$ by $w_{k+2}$. Hence, the \wl{k+1}{1} distinguishes $G$ and $H$.
	
	\begin{figure}[t]
		\begin{center}
			\subfloat[]{
				\centering
				\resizebox{!}{1.5cm}{\begin{tikzpicture}

\tikzset{line/.style={draw,thick}}
\tikzset{arrow/.style={line,->,>=stealth}}
\tikzset{node/.style={circle,inner sep=0pt,minimum width=15pt}}

\node[line,node,fill=gray] (x1)  at (0.75,  0.00) {};
\node[line,node,fill=gray] (x2)  at (1.25,  0.75) {};
\node[line,node,fill=gray] (x3)  at (2.25,  0.75) {};
\node[line,node,fill=gray] (x4)  at (2.75,  0.00) {};
\node[line,node,fill=gray] (x5)  at (2.25, -0.75) {};
\node[line,node,fill=gray] (x6)  at (1.25, -0.75) {};
\node[line,node,fill=gray] (x7)  at (3.75,  0.00) {};
\node[line,node,fill=gray] (x8)  at (4.25,  0.75) {};
\node[line,node,fill=gray] (x9)  at (5.25,  0.75) {};
\node[line,node,fill=gray] (x10) at (5.75,  0.00) {};
\node[line,node,fill=gray] (x11) at (5.25, -0.75) {};
\node[line,node,fill=gray] (x12) at (4.25, -0.75) {};

\path[line] (x1) to (x2);
\path[line] (x2) to (x3);
\path[line] (x3) to (x4);
\path[line] (x4) to (x5);
\path[line] (x5) to (x6);
\path[line] (x6) to (x1);
\path[line] (x4) to (x7);
\path[line] (x7) to (x8);
\path[line] (x8) to (x9);
\path[line] (x9) to (x10);
\path[line] (x10) to (x11);
\path[line] (x11) to (x12);
\path[line] (x12) to (x7);

\end{tikzpicture}}}\qquad\qquad
			\subfloat[]{
				\centering
				\resizebox{!}{1.5cm}{\begin{tikzpicture}

\tikzset{line/.style={draw,thick}}
\tikzset{arrow/.style={line,->,>=stealth}}
\tikzset{node/.style={circle,inner sep=0pt,minimum width=15pt}}

\node[line,node,fill=gray] (x1)  at (0.75,   0.00) {};
\node[line,node,fill=gray] (x2)  at (1.25,   0.75) {};
\node[line,node,fill=gray] (x3)  at (2.25,   0.75) {};
\node[line,node,fill=gray] (x4)  at (3.25,   0.50) {};
\node[line,node,fill=gray] (x5)  at (3.25,  -0.50) {};
\node[line,node,fill=gray] (x6)  at (2.25,  -0.75) {};
\node[line,node,fill=gray] (x7)  at (1.25,  -0.75) {};
\node[line,node,fill=gray] (x8)  at (4.25,  0.75) {};
\node[line,node,fill=gray] (x9)  at (5.25,  0.75) {};
\node[line,node,fill=gray] (x10) at (5.75,  0.00) {};
\node[line,node,fill=gray] (x11) at (5.25, -0.75) {};
\node[line,node,fill=gray] (x12) at (4.25, -0.75) {};

\path[line] (x1) to (x2);
\path[line] (x2) to (x3);
\path[line] (x3) to (x4);
\path[line] (x4) to (x5);
\path[line] (x5) to (x6);
\path[line] (x6) to (x7);
\path[line] (x7) to (x1);
\path[line] (x4) to (x8);
\path[line] (x8) to (x9);
\path[line] (x9) to (x10);
\path[line] (x10) to (x11);
\path[line] (x11) to (x12);
\path[line] (x12) to (x5);

\end{tikzpicture}}}
		\end{center}
		\caption{The graphs $A_{k+2}$ and $B_{k+2}$ for $k=4$.\label{fig:ab}}
	\end{figure}
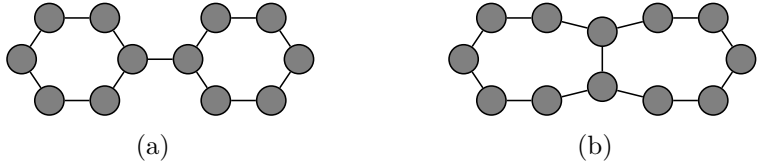
	
	Next, we show that the \wl{k}{1} does not distinguish the graphs $G$ and $H$. 
	Indeed, for every $j$ in $[k]$, every $k$-tuple $\vec{x}$ in $V(G)^k_1$ or $V(H)^k_1$ has exactly two local $j$-neighbors, corresponding to the two neighbors $y,z$ of the vertex $x_j$. The exact number of local $j$-neighbors of $\vec{x}$ which additionally lie in $V(G)^k_1$ (or $V(H)^k_1$) depends only on the atomic type of $\vec{x}$, since the length of cycles in $G$ and $H$ is at least $k+2$. Hence, the \wl{k}{1} neighborhood of every tuple in $G$ or $H$ depends only on its atomic type. This implies that the \wl{k}{1} does not refine the initial coloring for $G$ as well as $H$, and hence it does not distinguish $G$ and $H$. 
\end{proof}

Although~\cref{lem:cyco} already implies~\cref{theorem:one}, the construction hinges on the fact that the graphs $G$ and $H$ are not connected. To address this, for $k\geq 2$, we introduce two connected graphs $A_{k+2}$ and $B_{k+2}$ defined as follows. The graph $A_{k+2}$ has $2(k+2)$ nodes and $2(k+2)+1$ edges, and consists of two disjoint cycles on $k+2$ nodes connected by a single edge. The graph $B_{k+2}$ also has the same number of nodes and edges, and consists of two cycles on $k+3$ nodes, each, sharing exactly two adjacent nodes. See~\cref{fig:ab} for an illustration of the graphs $A_{k+2}$ and $B_{k+2}$ for $k=4$. We obtain the following result for the two graphs. 

\begin{lemma}\label{lem:cycox}
	For $k\geq 2$, the \wl{k}{1} does not distinguish the graphs $A_{k+2}$ and $B_{k+2}$, while the \wl{k+1}{1} does.
\end{lemma}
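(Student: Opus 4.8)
The plan is to treat the two directions separately, in both cases exploiting the single structural difference between $A_{k+2}$ and $B_{k+2}$: the shortest cycle of $A_{k+2}$ has length $k+2$ (its two constituent cycles, since the connecting edge is a bridge that lies on no cycle), whereas every cycle of $B_{k+2}$ has length at least $k+3$ (the two cycles sharing the distinguished edge have length $k+3$, and the only other cycle is the long $2(k+2)$-cycle). Equivalently, $A_{k+2}$ has girth $k+2$ and $B_{k+2}$ has girth $k+3$. Since both graphs are connected, I would freely invoke \cref{ktrees} to reduce statements about the \wl{k}{s} to statements about the induced (tree-shaped) subgraphs of $(k,s)$-tuples and their local neighbors.

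For the distinguishing direction, I would mirror the closing-cycle argument of \cref{lem:cyco}. The key observation is that a non-repeating $(k+1,1)$-tuple is a simple path $P$ on $k+1$ vertices, and such a tuple admits a \emph{non-repeating} local $j$-neighbor only if some neighbor $w$ of $v_j$ attaches to another path vertex $v_i$; since the edge $v_j w$ survives in the graph, this forces a cycle of length at most $\lvert i-j\rvert + 2 \leq k+2$ through $v_j,\dots,v_i,w$ (triangles are also excluded, so $w$ cannot attach to a path vertex adjacent to $v_j$). In $A_{k+2}$, a tuple running along $k+1$ of the $k+2$ vertices of one cycle has exactly such a closing neighbor: replacing an endpoint by the unique missing cycle vertex yields a non-repeating, connected $(k+1,1)$-tuple. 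In $B_{k+2}$, no non-repeating $(k+1,1)$-tuple can have a non-repeating local neighbor, since girth $k+3$ forbids the required short cycle. Consequently, after a single refinement round the color assigned to these special path-tuples in $A_{k+2}$ records a local neighbor of simple-path atomic type, a color that cannot occur for any tuple of $B_{k+2}$; this color class has positive count in $A_{k+2}$ and zero count in $B_{k+2}$, so \wl{k+1}{1} distinguishes the two graphs.

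For the non-distinguishing direction, I would show that \wl{k}{1} assigns $A_{k+2}$ and $B_{k+2}$ the same multiset of colors by exhibiting a bijection between $V(A_{k+2})^k_1$ and $V(B_{k+2})^k_1$ that preserves the isomorphism type of the unrolled $(k,1)$-tuple tree; by \cref{ktrees} this suffices. Because $k$ is strictly smaller than either girth, every $(k,1)$-tuple induces a \emph{tree}, and the local neighborhood of a tuple in the $(k,1)$-tuple graph (which tuples are its local $j$-neighbors, together with their atomic types) is determined purely by the local geometry around its at most $k$ distinct vertices. Both graphs present identical local geometry everywhere: away from the distinguished edge they consist of long degree-$2$ arcs (exactly as in \cref{lem:cyco}), and around the distinguished edge they both exhibit two adjacent degree-$3$ vertices, each carrying two long degree-$2$ arcs. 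I would match tuples according to their position relative to this distinguished edge and verify, by a case analysis on the induced tree and its placement, that corresponding tuples have label-preserving isomorphic neighborhoods in the respective tuple graphs. Since the unrolling of \cref{def:ktuplegraph} never closes a cycle, isomorphic local neighborhoods propagate outward to isomorphic unrolled trees of every depth, and the global reconnection---where $A_{k+2}$ and $B_{k+2}$ genuinely differ---is never reached.

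The main obstacle is precisely this last point. The two $(k,1)$-tuple graphs are \emph{not} isomorphic: sliding a window of at most $k$ consecutive vertices around a cycle produces a closed walk whose length equals the cycle length, so $T^k_1(A_{k+2})$ and $T^k_1(B_{k+2})$ inherit the differing girths and cannot be identified directly. The argument must therefore establish that this girth discrepancy is invisible to the unrolled trees---equivalently, that the two tuple graphs share a common universal cover---which is exactly what the tree-based reduction of \cref{ktrees} is designed to deliver, but which still requires the careful, purely local matching above to be carried through without ever invoking a global identification of the two tuple graphs.
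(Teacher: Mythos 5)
Your proposal is correct and follows essentially the same route as the paper: the distinguishing direction uses the same girth-based observation that only in $A_{k+2}$ does a non-repeating connected $(k+1)$-tuple admit a non-repeating connected local neighbor (closing the short cycle), and the non-distinguishing direction builds a locally neighborhood-preserving bijection between $(k,1)$-tuples of the two graphs and lifts it to an isomorphism of unrolled tuple trees via \cref{ktrees}, exactly as the paper does with its explicit vertex bijection $\theta$ and \cref{claimplus}. The only difference is presentational: the paper pins down $\theta$ concretely via a vertex coloring, while you argue the local matching more abstractly (and correctly note that the tuple graphs themselves are non-isomorphic, so the detour through unrolled trees is genuinely needed).
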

\begin{proof}
	We first show the second part, i.e., that the \wl{k+1}{1} distinguishes the graphs $A_{k+2}$ and $B_{k+2}$. Without loss of generality, assume that $V(A_{k+2}) = \{ a_1, \dots, a_{2(k+2)} \}$ and that $E(A_{k+2})$ consists of the edges $(a_i, a_{i+1})$ for $1 \leq i \leq k+1$, $(a_1, a_{(k+2)})$, $(a_i, a_{i+1})$ for $(k+3) \leq i \leq 2(k+2)-1$, and $(a_{k+3}, a_{2(k+2)})$. 
	The two cycles are connected by the edge $(a_{(k+2)}, a_{(k+3)})$ in $E(A_{k+2})$. Further, assume  $V(B_{k+2}) = \{ b_1, \dots, b_{2(k+2)} \}$ where $(b_i, b_{i+1})$ in $E(B_{k+2})$ for $1 \leq i \leq 2(k+2)-1$ and $(b_1, b_{2(k+2)+2})$ in $E(B_{k+2})$. Finally, the edge $(b_1, b_{k+3}) \in E(B_{k+2})$ is shared by the two cycles of length $k+3$ each.
	
	Now, let $\vec{t} = (a_1, \dots, a_{k+1})$ in $V(A_{k+2})^{k+1}_1$. Observe that the tuple $(a_1, \dots, a_{k+2})$ is a $(k+1)$-neighbor of the tuple $\vec{t}$, inducing a graph on $k+1$ nodes. Further, since the two cycles in the graph $B_{k+2}$ have length $k+3$, there is no tuple without repeated nodes that has a $(k+1)$-neighbor without repeated nodes. Hence, the two graphs are distinguished by \wl{k+1}{1}. 
	
	We now show that the \wl{k}{1} does not distinguish the graphs $A_{k+2}$ and $B_{k+2}$. First, we construct a bijection $\theta \colon V(A_{k+2}) \to V(B_{k+2})$ as induced by the following coloring:
	\begin{center}
		\resizebox{!}{1.5cm}{\begin{tikzpicture}

\tikzset{line/.style={draw,thick}}
\tikzset{arrow/.style={line,->,>=stealth}}
\tikzset{node/.style={circle,inner sep=0pt,minimum width=15pt}}

\node[] (x1)  at (0.75,  0.00) {\large\textbf{\dots}};
\node[line,node,fill=blue] (x2)  at (1.25,  0.75) {};
\node[line,node,fill=green] (x3)  at (2.25,  0.75) {};
\node[line,node,fill=yellow] (x4)  at (2.75,  0.00) {};
\node[line,node,fill=pink] (x5)  at (2.25, -0.75) {};
\node[line,node,fill=gray] (x6)  at (1.25, -0.75) {};
\node[line,node,fill=purple] (x7)  at (3.75,  0.00) {};
\node[line,node,fill=white] (x8)  at (4.25,  0.75) {};
\node[line,node,fill=olive] (x9)  at (5.25,  0.75) {};
\node[] (x10) at (5.75,  0.00) {\large\textbf{\dots}};
\node[line,node,fill=lime] (x11) at (5.25, -0.75) {};
\node[line,node,fill=orange] (x12) at (4.25, -0.75) {};

\path[line] (x1) to (x2);
\path[line] (x2) to (x3);
\path[line] (x3) to (x4);
\path[line] (x4) to (x5);
\path[line] (x5) to (x6);
\path[line] (x6) to (x1);
\path[line] (x4) to (x7);
\path[line] (x7) to (x8);
\path[line] (x8) to (x9);
\path[line] (x9) to (x10);
\path[line] (x10) to (x11);
\path[line] (x11) to (x12);
\path[line] (x12) to (x7);

\node[] (xx1)  at (8.75,   0.00) {\large\textbf{\dots}};
\node[line,node,fill=blue] (xx2)  at (9.25,   0.75) {};
\node[line,node,fill=green] (xx3)  at (10.25,   0.75) {};
\node[line,node,fill=yellow] (xx4)  at (11.25,   0.50) {};
\node[line,node,fill=purple] (xx5)  at (11.25,  -0.50) {};
\node[line,node,fill=white] (xx6)  at (10.25,  -0.75) {};
\node[line,node,fill=olive] (xx7)  at (9.25,  -0.75) {};
\node[line,node,fill=pink] (xx8)  at (12.25,  0.75) {};
\node[line,node,fill=gray] (xx9)  at (13.25,  0.75) {};
\node[] (xx10) at (13.75,  0.00) {\large\textbf{\dots}};
\node[line,node,fill=lime] (xx11) at (13.25, -0.75) {};
\node[line,node,fill=orange] (xx12) at (12.25, -0.75) {};

\path[line] (xx1) to (xx2);
\path[line] (xx2) to (xx3);
\path[line] (xx3) to (xx4);
\path[line] (xx4) to (xx5);
\path[line] (xx5) to (xx6);
\path[line] (xx6) to (xx7);
\path[line] (xx7) to (xx1);
\path[line] (xx4) to (xx8);
\path[line] (xx8) to (xx9);
\path[line] (xx9) to (xx10);
\path[line] (xx10) to (xx11);
\path[line] (xx11) to (xx12);
\path[line] (xx12) to (xx5);

\end{tikzpicture}}
	\end{center}
	Based on the bijection $\theta$, we define the bijection $\theta^k \colon V(A_{k+2})^k_1 \to (A_{B+2})^k_1$, by applying $\theta$ component-wise to $(k,s)$-tuples. Observe that $G[\vec{s}] \simeq G[\theta^k(\vec{s})]$  for $\vec{s}$ in $V(A_{k+2})^k_1$.  
	
	\begin{claim}\label{claimplus}
		Let $\vec{s}$ be a tuple in $V(G)^k_1$ and $\vec{t} = \theta^k(\vec{s})$ in $V(H)^k_1$. Let $N_j(\vec{s})$ and $N_j(\vec{t})$ be the $j$-neighbors of the  tuple $\vec{s}$ and  $\vec{t}$, respectively, for $j$ in $[k]$. 
		Then $\theta^k$ yields a one-to-one correspondence between $N_j(\vec{s})$ and $N_j(\vec{t})$.  
		Consequently, $G[\vec{u}] \simeq G[\theta^k(\vec{u})]$ for $\vec{u}$ in $N_j(\vec{s})$ and $\theta^k(\vec{u})$ in $N_j(\vec{t})$.
		
	\end{claim}
	\begin{proof}
	    The desired claim follows by observing that the bijective map $\theta\colon V(A_{k+2}) \to V(B_{k+2})$ preserves neighborhoods, i.e. for every $x$ in $V(A_{k+2})$, $\theta(N_{F}(x)) = N_K(\theta(x))$. 
	\end{proof}
	
	We now again leverage the above claim to show  that $C^{k,s}_i(\vec{s}) = C^{k,s}_i(\theta^k(\vec{s}))$ for $i \geq 0$, implying the required result. By a straightforward inductive argument, using~\cref{claimplus}, we can inductively construct a tree isomorphism between the unrolled trees  around the node $v_s$ and $v_t$ in the corresponding $(k,s)$-tuple graph such that $\vec{U}^i_{T,v_\vec{s}}  \simeq_{v_\vec{s} \to v_\vec{t}} \vec{U}^i_{T,v_\vec{t}}$. By~\cref{ktrees}, this implies $C^{k,s}_i(\vec{s}) = C^{k,s}_i(\theta(\vec{t}))$ for $i \geq 0$. 
	This shows that the \wl{k}{s} does not distinguish $A_{k+2}$ and $B_{k+2}$. 
\end{proof}

Hence, \cref{theorem:one} directly follows from \cref{lem:cycox}. Moreover, we also show that the \wl{k}{2} is more expressive than the \wl{k}{1}.
\begin{proposition}
	Let $k \geq 2$. Then 
	\begin{equation*}
		\text{\wl{k}{2}} \sqsubset \text{\wl{k}{1}}.
	\end{equation*}
\end{proposition}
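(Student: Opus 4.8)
The plan is to establish the two inequalities implicit in $\sqsubset$ separately: first, that the \wl{k}{2} is at least as powerful as the \wl{k}{1} (every pair of graphs separated by the latter is separated by the former), and second, strictness, i.e., a pair of graphs that the \wl{k}{2} distinguishes but the \wl{k}{1} does not. Since $V(G)^k_1 \subseteq V(G)^k_2$, the \wl{k}{2} colors a superset of the tuples and in its aggregation additionally ``sees'' those $j$-neighbors $\phi_j(\vec{v},w)$ that induce two components; the first part formalizes that this extra information can only help, and the second exhibits a case where it strictly helps.

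For the monotonicity direction, I would prove by induction on $i$ that, restricted to $(k,1)$-tuples, the coloring $C^{k,2}_i$ refines $C^{k,1}_i$, i.e., for all $\vec{v},\vec{w} \in V(G)^k_1$, $C^{k,2}_i(\vec{v}) = C^{k,2}_i(\vec{w})$ implies $C^{k,1}_i(\vec{v}) = C^{k,1}_i(\vec{w})$. The base case is immediate since both initial colorings equal the isomorphism type $\tau_{G[\cdot]}$. For the inductive step, the crucial observation is that the component count $\mathsf{\#com}(G[\vec{t}])$ is a function of the isomorphism type, hence of $C^{k,2}_i(\vec{t})$ (as $C^{k,2}_i$ refines $C^{k,2}_0$); in particular each \wl{k}{2} color class is ``pure'', consisting only of $1$-component or only of $2$-component tuples. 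Therefore, from the \wl{k}{2} neighborhood multiset of a $(k,1)$-tuple $\vec{v}$ one can filter out exactly the entries whose target is a genuine $(k,2)$-tuple, recovering the multiset of $C^{k,2}_i$-colors of the $j$-neighbors lying in $V(G)^k_1$. Equality of the full \wl{k}{2} colors of $\vec{v}$ and $\vec{w}$ forces equality of these filtered multisets for every $j$; applying the well-defined map from $C^{k,2}_i$-colors to $C^{k,1}_i$-colors of $(k,1)$-tuples supplied by the induction hypothesis turns them into the \wl{k}{1} neighborhood multisets of $\vec{v}$ and $\vec{w}$, which are thus equal. Together with refinement of the first color component, this yields $C^{k,1}_{i+1}(\vec{v}) = C^{k,1}_{i+1}(\vec{w})$, and purity of color classes then upgrades this refinement to the ``at least as powerful'' conclusion.

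For strictness, I would reuse the pair from \cref{lem:cyco}: $G \coloneqq C_{2(k+2)}$ and $H \coloneqq C_{k+2} \,\dot\cup\, C_{k+2}$, which is already known not to be distinguished by the \wl{k}{1}. It remains to show the \wl{k}{2} separates them. The point is that $H$ admits $(k,2)$-tuples whose two induced components lie in different cycles and are hence mutually unreachable, whereas in the connected graph $G$ the two components of any $(k,2)$-tuple lie on the single cycle and can be ``merged'': walking one component toward the other by local $j$-neighbor steps eventually produces a connected $(k,1)$-tuple. Since component counts are recorded in the node labels of the unrolled $(k,2)$-tuple trees, the unrolled tree of a cross-cycle tuple in $H$ never contains a $(k,1)$-tuple node, a property shared by no tuple of $G$. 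By \cref{ktrees} and \cref{lemma:wlk} this yields a stable \wl{k}{2} color occurring in $H$ but not in $G$, so the color-class-size profiles differ; the smallest instance $k=2$ already reduces to the fact that the local $2$-dimensional algorithm separates connected from disconnected vertex pairs in $C_8$ versus $C_4 \,\dot\cup\, C_4$.

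The main obstacle is the strictness argument, specifically making rigorous that mutual unreachability of the two components of a tuple is faithfully recorded by the local, component-bounded \wl{k}{2} coloring. The delicate points are that $H$ is disconnected, so \cref{ktrees} does not apply to $H$ as a whole and one must argue at the level of the $(k,2)$-tuple graph and its unrolled trees, and that one must verify the merging walk in $G$ stays within $V(G)^k_2$ at every step and terminates at a depth bounded by the cycle length, so that the distinguishing color is visible after finitely many iterations. I would handle this by analyzing the $(k,2)$-tuple graph of each graph directly: showing that in $G$ every node of a $2$-component tuple has a bounded-length directed path to a node of a $1$-component tuple, while the cross-cycle tuples of $H$ form a set closed under local neighbors containing no $1$-component tuple, and then invoking \cref{lemma:wlk} to transfer this structural dichotomy to the colorings.
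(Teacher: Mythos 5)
Your proposal is correct and rests on the same counterexample and the same structural insight as the paper: it reuses $G = C_{2(k+2)}$ and $H = C_{k+2}\,\dot\cup\,C_{k+2}$ from \cref{lem:cyco} and observes that the cross-cycle $(k,2)$-tuples of $H$ form a set closed under local neighbors that contains no connected tuple, whereas in $G$ every $2$-component tuple reaches a connected tuple. The execution differs in two respects. First, the paper proves strictness only for the $(2,2)$-case---by counting connected components of the $(2,2)$-tuple graphs ($T^2_2(H)$ has four, $T^2_2(G)$ has one)---and then lifts to general $k$ by asserting monotonicity in $k$; you instead argue directly at level $k$ via a merging walk in $T^k_2(G)$ (which does go through, provided one first contracts one arc of the tuple onto a single vertex before sliding it toward the other arc, so that no intermediate step exceeds two components---exactly the delicate point you flag). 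Second, you explicitly prove the ``at least as powerful'' direction by showing $C^{k,2}_i$ restricted to $V(G)^k_1$ refines $C^{k,1}_i$, using that component counts are determined by colors so the extra $2$-component neighbors can be filtered out; the paper leaves this direction implicit. Your version is more self-contained (it avoids the unproven monotonicity-in-$k$ step) at the cost of a longer reachability verification; your handling of the disconnectedness of $H$ by working in the $(k,s)$-tuple graph via \cref{lemma:wlk} rather than through \cref{ktrees} is the right fix.
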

\begin{proof}
	As in~\cref{lem:cyco}, let $G \coloneqq C_{2(k+2)}$ and $H \coloneqq C_{(k+2)} \,\dot\cup\, C_{(k+2)}$.
	By~\cref{lem:cyco}, $G$ and $H$ are not distinguished by the \text{\wl{k}{1}} for $k \geq 2$. We claim that the \wl{k}{2} distinguishes $G$ and $H$ for $k=2$ already. Since the \wl{k}{2} is at least as powerful as the \wl{2}{2}, this yields the desired claim. 
	
	With respect to the \wl{2}{2}, observe that the $(2,2)$-tuple graph $T^2_2(H)$ consists of four connected components while the $(2,2)$-tuple graph $T^2_2(G)$ consists of a single connected component. More precisely, there exist two connected components of $T^2_2(H)$ that consist only of $2$-tuples containing two non-adjacent nodes which are in the same connected component of the graph $H$. Note that none of these $2$-tuples is adjacent to any $2$-tuples in $V(H)^2_1$. Moreover, there exists no such connected component in $T^2_2(G)$. Also, note that the number of neighbors of each $2$-tuple of the graphs is exactly $4$, excluding self loops. Hence, the \wl{2}{2} will distinguish the two graphs.
\end{proof}

Moreover, the following result shows that increasing the parameter $s$ results in higher expressivity. Formally, we show that the \wl{k}{k} is strictly more expressive than the \wl{k}{2}. Note that we use vertex-colored graphs (rather than simple undirected graphs) in our proofs.

\begin{theorem}\label{thm:main}
	Let $k \geq 2$, then 
	\begin{equation*}
		\text{\wl{k}{k}} \sqsubset \text{\wl{k}{2}}.
	\end{equation*}
\end{theorem}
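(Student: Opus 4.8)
The plan is to prove the two inequalities separately: that \wl{k}{k} is at least as powerful as \wl{k}{2}, and that the containment is strict for $k \geq 3$. (For $k = 2$ the two algorithms coincide, since $V(G)^2_2 = V(G)^2$, so the strictness can only appear once $k \geq 3$.)

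For the easy direction I would show that the stable coloring of \wl{k}{k}, restricted to the $(k,2)$-tuples, refines the stable coloring of \wl{k}{2}. The key observation is that the number of connected components $\mathsf{\#com}(G[\vec{v}])$ of an induced subgraph is an isomorphism invariant, hence already determined by the initial color and therefore by every $C^{k,k}_i(\vec{v})$. Consequently, from the full local-neighbor multiset that \wl{k}{k} aggregates at each position $j$, one can isolate exactly those neighbors $\phi_j(\vec{v},w)$ lying in $V(G)^k_2$. By induction on $i$ I would then show that $C^{k,k}_i(\vec{v}) = C^{k,k}_i(\vec{w})$ implies $C^{k,2}_i(\vec{v}) = C^{k,2}_i(\vec{w})$ for all $(k,2)$-tuples $\vec{v}, \vec{w}$, the inductive hypothesis allowing me to translate the filtered \wl{k}{k}-neighborhood into the \wl{k}{2}-neighborhood of \cref{eqnmiddd}. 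Since every \wl{k}{k}-color class restricted to $(k,2)$-tuples then lies in a single \wl{k}{2}-color class, any discrepancy in \wl{k}{2}-class sizes between $G$ and $H$ forces a discrepancy in some \wl{k}{k}-class size, so \wl{k}{k} distinguishes whatever \wl{k}{2} distinguishes.

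For the strict separation I would exhibit, for each $k \geq 3$, a pair of vertex-colored graphs $G$ and $H$ that \wl{k}{k} tells apart but \wl{k}{2} does not, mirroring the two-part structure of the proof of \cref{lem:cycox}. The distinguishing power of \wl{k}{k} must come entirely from $k$-tuples whose induced subgraph has at least three components, i.e.\ at least three pairwise non-adjacent vertices, since these are precisely the tuples that \wl{k}{2} never colors. Concretely, I would build $G$ and $H$ from cycle-type gadgets whose sizes grow with $k$, using the vertex colors to rigidify the graphs, so that some three-component $k$-tuple of $G$ admits a non-repeating local neighbor whose isomorphism type has no counterpart among the $k$-tuples of $H$; this yields the \wl{k}{k} distinction exactly as in the first half of \cref{lem:cyco}.

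The hard part, and the step I expect to dominate the argument, is proving $(k,2)$-indistinguishability. Following the template of \cref{lem:cycox}, I would construct a bijection $\theta$ between $V(G)^k_2$ and $V(H)^k_2$ that preserves isomorphism types and commutes with the $j$-neighbor relation restricted to $(k,2)$-tuples, and then invoke \cref{ktrees}: if $\theta$ induces, at every depth $i$, a labeled isomorphism between the unrolled $(k,2)$-tuple trees around $v_\vec{t}$ and $v_{\theta(\vec{t})}$, then $C^{k,2}_i(\vec{t}) = C^{k,2}_i(\theta(\vec{t}))$ for all $i$, and \wl{k}{2} cannot separate $G$ and $H$. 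The delicate point is that $\theta$ must respect the $(k,2)$-restricted neighborhoods through every refinement round, so the gadgets and colorings have to guarantee enough symmetry on the $\le 2$-component tuples while breaking it only at the three-component level; reconciling these two requirements simultaneously for all $k \geq 3$ is the main obstacle.
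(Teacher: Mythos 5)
Your proposal correctly identifies the two-part structure (monotonicity in $s$ plus an explicit separating family), and your argument for the easy direction---that $\mathsf{\#com}(G[\vec{v}])$ is determined by the initial color, so the \wl{k}{k} neighborhood multisets can be filtered down to the \wl{k}{2}-restricted ones by induction on $i$---is sound. Your observation that for $k=2$ the two algorithms literally coincide, so strictness can only hold for $k\geq 3$, is also correct and in fact exposes an off-by-one in the theorem as stated. However, the proposal has a genuine gap: the separating family, which is the entire substance of the theorem, is never constructed. You acknowledge this yourself (``reconciling these two requirements simultaneously for all $k\geq 3$ is the main obstacle''), but an acknowledged obstacle is still an obstacle; as written, there is no pair $(G,H)$ and no proof of either the \wl{k}{k}-distinguishability or the \wl{k}{2}-indistinguishability.

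Moreover, the direction you gesture at---``cycle-type gadgets'' rigidified by colors, mirroring \cref{lem:cyco,lem:cycox}---is unlikely to succeed. The cycle constructions in those lemmas separate $s=1$ from $s=2$ because connectivity of the tuple is the binding constraint; once two components are allowed, unions of long cycles become too homogeneous, and it is unclear how a third component would create an isomorphism type present in $G$ but absent in $H$ while keeping all $\leq 2$-component tuples matched through every refinement round. The paper instead takes a different route: it starts from the Cai--F\"urer--Immerman-style pair $(G_k,H_k)$ of \citep{Morris2020b} (which \localkwl, i.e.\ \wl{k}{k}, separates but \kwl does not), colors the vertex and edge clouds, and subdivides every edge into a path of length $\Delta>3k$. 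The subdivision forces any single connected component of a tuple-induced subgraph to meet at most one cloud, so a $(k,2)$-tuple can probe at most two clouds; the CFI ``twist'' can then always be hidden among the remaining $k-1$ base vertices, which is made precise via a component-restricted bijective pebble game. Distinguishability by \wl{k}{k} is recovered by replacing distance-two-cliques with colored-distance-$(2\Delta+1)$-cliques. If you want to complete your proof, you will almost certainly need a construction of this CFI type rather than cycles, since the separation hinges on a global parity invariant that only tuples spread over many components can detect.
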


For the proof of \cref{thm:main}, we modify the construction employed in~\citep{Morris2020b}, Appendix C.1.1., where they provide an infinite family of graphs $(G_k, H_k)_{k \in \mathbb{N}}$ such that (a) \kwl does not distinguish $G_k$ and $H_k$, although (b) \localkwl distinguishes $G_k$ and $H_k$. Since our proof closely follows theirs, let us recall some relevant definitions from their paper. 

\emph{Construction of $G_k$ and $H_k$.} Let $K$ denote the complete graph on $k+1$ vertices (without any self-loops). The vertices of $K$ are indexed from $0$ to $k$. Let $E(v)$ denote the set of edges incident to $v$ in $K$: clearly, $|E(v)| = k$ for all $v$ in $V(K)$.
We call the elements of $V(K)$ \emph{base vertices}, and the elements of $E(K)$ \emph{base edges}.
Define the graph $G_k$ as follows:
\begin{enumerate}
	\item For the vertex set $V(G_k)$, we add   
	      \begin{enumerate}
	      	\item[(a)] $(v,S)$ for each $v$ in $V(K)$ and for each \emph{even} subset $S$ of $E(v)$, 
	      	\item[(b)] two vertices $e^1,e^0$ for each edge $e$ in $E(K)$.  
	      \end{enumerate} 
	\item For the edge set $E(G_k)$, we add 
	      \begin{enumerate}
	      	\item[(a)] an edge $\{e^0,e^1\}$ for each $e$ in $ E(K)$, 
	      	\item[(b)] an edge between $(v,S)$ and $e^1$ if $v$ in $ e$ and $e$ in $ S$,  
	      	\item[(c)] an edge between $(v,S)$ and $e^0$ if $v$ in $ e$ and $e$ not in $S$,  
	      \end{enumerate} 
\end{enumerate} 
For every $v$ in $ K$, the set of vertices of the form $(v,S)$ is called the \emph{vertex cloud} for $v$. Similarly, for every edge $e$ in $E(K)$, the set of vertices of the form $\{e^0,e^1\}$ is called the \emph{edge cloud} for $e$. 

Define a companion graph $H_k$, in a similar manner to $G_k$, with the following exception: in Step 1(a), for the vertex $0$ in $V(K)$, we choose all \emph{odd} subsets of $E(0)$. Counting vertices, we find that $|V(G)| = |V(H)| = (k+1)\cdot 2^{k-1} + \binom{k+1}{2} \cdot 2$. This finishes the construction of the graphs $G$ and $H$. We set $G_k \coloneqq G$ and $H_k \coloneqq H$. 

\emph{Distance-two-cliques.} A set $S$ of vertices is said to form a \emph{distance-two-clique} if the distance between any two vertices in $S$ is exactly $2$. The following results were shown in \citep{Morris2020b}.
\begin{lemma}[\citep{Morris2020b}]
	The following holds for the graphs $G_k$ and $H_k$ defined above. 
	\begin{itemize}
		\item There exists a distance-two-clique of size $(k+1)$ inside $G_k$.
		\item There does not exist a distance-two-clique of size $(k+1)$ inside $H_k$.
	\end{itemize}
	Hence, $G_k$ and $H_k$ are non-isomorphic. 
\end{lemma}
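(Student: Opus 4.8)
The plan is to treat ``contains a distance-two-clique of size $k+1$'' as a graph invariant and to reduce its existence to a parity condition on the base graph $K$. First I would compute, for the local structure common to $G_k$ and $H_k$, exactly when two vertices lie at distance two. Since cloud vertices are pairwise non-adjacent and edges only run between a cloud vertex $(v,S)$ and the edge vertices $e^0,e^1$ with $v\in e$ (or inside an edge cloud $\{e^0,e^1\}$), every two distinct cloud vertices are at distance at least two. For $v\neq w$ with $e=\{v,w\}$, the vertices $(v,S)$ and $(w,T)$ share a common neighbour in the edge cloud of $e$ precisely when $[e\in S]=[e\in T]$; otherwise the only connection through that cloud is $(v,S)-e^1-e^0-(w,T)$ of length three, and there is no other length-two path because $e$ is the unique base edge incident to both $v$ and $w$. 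Hence two cloud vertices from different clouds lie at distance exactly two iff they agree on the membership of the connecting base edge.

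For the existence in $G_k$, I would exhibit the explicit clique $\{(v,\emptyset)\mid v\in V(K)\}$. Each $\emptyset$ is an even subset of $E(v)$, so all these vertices exist in $G_k$; they are pairwise non-adjacent, and for any two of them the connecting edge $e$ satisfies $e\notin\emptyset$ on both sides, yielding the common neighbour $e^0$ and thus distance exactly two. This is a distance-two-clique of size $k+1$.

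For the non-existence in $H_k$, the heart of the argument is a handshake/parity obstruction. A distance-two-clique using exactly one vertex $(v,S_v)$ from each of the $k+1$ vertex clouds forces, by the criterion above, the consistency condition $[e\in S_v]=[e\in S_w]$ for every base edge $e=\{v,w\}$. This lets me define a single assignment $x\colon E(K)\to\{0,1\}$ by $x_e\coloneqq[e\in S_v]$, so that $S_v=\{e\in E(v)\mid x_e=1\}$ and $|S_v|=\deg_x(v)$, the degree of $v$ in the subgraph of $K$ selected by $x$. In $H_k$ the parities are prescribed: $|S_v|$ even for $v\neq 0$ and $|S_0|$ odd. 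Thus $x$ would be a subgraph of $K$ in which every vertex has even degree except vertex $0$, which has odd degree. Summing degrees gives $\sum_{v}\deg_x(v)=2|x|$, which is even, while the prescribed parities sum to odd---a contradiction. Hence no one-per-cloud distance-two-clique of size $k+1$ exists in $H_k$.

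The hard part will be upgrading this to \emph{all} distance-two-cliques of size $k+1$, i.e.\ ruling out those that either repeat a cloud or use edge-cloud vertices $e^a$. Here I would complete the remaining distance computations (two vertices in a single cloud are at distance two unless their subsets are complementary; pairs involving an edge vertex have a much more restricted distance-two neighbourhood) and argue that a size-$(k+1)$ distance-two-clique must select exactly one vertex from each of the $k+1$ vertex clouds, so that the parity argument applies. I expect this case analysis to be the delicate part, whereas the parity contradiction and the explicit construction are routine. Finally, since a distance-two-clique of size $k+1$ is preserved under isomorphism and occurs in $G_k$ but not in $H_k$, the two graphs are non-isomorphic.
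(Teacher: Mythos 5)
Your preliminary distance computations are correct, the explicit clique $\{(v,\emptyset)\mid v\in V(K)\}$ does witness the first bullet in $G_k$, and the handshake/parity obstruction is exactly the right engine for the second bullet; note that the paper itself imports this lemma from \citep{Morris2020b} without proof, so there is no in-paper argument to compare against. The genuine gap is the step you defer to the end: the claim that every distance-two-clique of size $k+1$ must select exactly one vertex from each of the $k+1$ vertex clouds is \emph{false} under the definition as stated, so the case analysis you plan cannot succeed. Concretely, fix a base edge $e=\{u,v\}$ and a bit $a$, and consider
\begin{equation*}
X \;=\; \{e^{a}\}\;\cup\;\{(u,T) : [e\in T]=1-a\}\;\cup\;\{(v,T') : [e\in T']=1-a\}.
\end{equation*}
No two members of $X$ are adjacent, and every pair of members has $e^{1-a}$ as a common neighbour (the cloud vertices by the rule attaching $(w,S)$ to $e^{[e\in S]}$, and $e^{a}$ via the edge $\{e^0,e^1\}$), so $X$ is a distance-two-clique. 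Its size is $1+2\cdot 2^{k-2}=1+2^{k-1}\geq k+1$ for all $k\geq 2$, and the same count holds in $H_k$, since switching the cloud of $0$ from even to odd subsets changes which subsets are admissible but not how many satisfy a prescribed value on one edge. For $k=2$, for instance, $\{e_{12}^{0},\,(1,\{e_{01},e_{12}\}),\,(2,\{e_{02},e_{12}\})\}$ is a distance-two-clique of size $3=k+1$ inside $H_2$. Even excluding edge-cloud vertices does not rescue the claim for $k\geq 3$: the $2^{k-2}$ admissible subsets in each of the clouds of $u$ and of $v$ that agree on $e$ already give $2^{k-1}\geq k+1$ vertices from only two clouds, pairwise at distance two.

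The way to repair this is not more case analysis but a stronger hypothesis: the lemma is true, and is actually used, for sets of $k+1$ vertices drawn from \emph{pairwise distinct vertex clouds} that are pairwise at distance two (equivalently, a system of representatives $(v,S_v)_{v\in V(K)}$ with $[e\in S_v]=[e\in S_w]$ for every $e=\{v,w\}$). That restriction must be built into the definition of distance-two-clique --- in the coloured graphs $X_k,Y_k$ used later in the paper it is recoverable from the cloud colours, which is why the argument there is phrased over vertices ``belonging to the vertex clouds'' of distinct base vertices. Once the one-per-cloud hypothesis is in place, your parity argument closes the non-existence direction exactly as you wrote it, and your all-empty-sets configuration settles existence in $G_k$; the conclusion that $G_k\not\simeq H_k$ then follows since the (restricted) invariant is preserved by isomorphisms, which must map vertex clouds to vertex clouds.
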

\begin{lemma}[\citep{Morris2020b}]\label{lem:neurips}
	The \localkwl distinguishes $G_k$ and $H_k$. 
	On the other hand, \kwl does not distinguish $G_k$ and $H_k$. 
\end{lemma}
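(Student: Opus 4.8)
The plan is to prove the two assertions separately: that \localkwl distinguishes $G_k$ and $H_k$, and that \kwl does not. Throughout I use that \localkwl $\equiv$ \wl{k}{k} and that, since $K_{k+1}$ is connected, both $G_k$ and $H_k$ are connected; hence the tree-unrolling machinery of \cref{ktrees} applies to the local algorithm. The global (\kwl) direction is a Cai--Fürer--Immerman--style lower bound, whereas the local direction is where the additional sensitivity gained by restricting moves to adjacent neighbors must be exploited.

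For the \localkwl direction, I would take the size-$(k+1)$ distance-two-clique guaranteed in $G_k$ by the preceding lemma and use it as the distinguishing witness. Concretely, fix $k$ of its $k+1$ vertices and let $\vec{x}$ be the corresponding tuple in $V(G_k)^k$; these vertices are pairwise at distance two and, crucially, admit a common further vertex at distance two from all of them. By \cref{ktrees} it suffices to show that the unrolled local $(k,k)$-tuple tree rooted at $v_{\vec{x}}$ is not isomorphic to any unrolled tree rooted at a tuple of $H_k$. Because the local neighbor relation only replaces a coordinate by an \emph{adjacent} vertex, the refinement is forced to traverse the gadgets along actual edges, so the unrolled tree records, at successive depths, the common-neighbor (distance-two) incidences among the coordinates and the number of ways a partial distance-two-clique extends. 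Since $H_k$ contains no distance-two-clique of size $k+1$, no tuple of $H_k$ can reproduce the branch of this tree corresponding to extending $\vec{x}$ by the $(k+1)$-st clique vertex; thus the local color of $\vec{x}$ is realized in $G_k$ but not with the same multiplicity in $H_k$, and \localkwl distinguishes the two graphs.

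For the \kwl direction, I would invoke the standard correspondence that the oblivious \kwl has exactly the distinguishing power of the $k$-variable counting logic $C^k$ (equivalently of folklore $(k-1)$-\textsf{WL}~\citep{Gro+2021}), which is characterised by the bijective $k$-pebble game. The pair $(G_k, H_k)$ is a Cai--Fürer--Immerman pair~\citep{Cai+1992} over the base graph $K_{k+1}$, differing only in the single \emph{twist} (the odd cloud) placed at base vertex $0$. The defining property of the construction is that relocating the twist along any base edge yields an isomorphic graph, so the version of $H_k$ carrying its twist in any chosen cloud is isomorphic to $H_k$. I would then give Duplicator the following strategy: answer each round with a bijection realising an isomorphism onto the version of $H_k$ whose twist lies in a base cloud that is currently \emph{unpebbled}. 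Since $K_{k+1}$ has treewidth $k$ and is $k$-connected, while Spoiler controls only $k$ of the $k+1$ base vertices at any time, such an unpebbled cloud always exists and the pebbled configuration stays a partial isomorphism. Hence $G_k \equiv_{C^k} H_k$, so \kwl does not distinguish $G_k$ and $H_k$; distinguishing them requires $(k+1)$-\textsf{WL}.

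The main obstacle is the \localkwl direction, specifically turning the purely global combinatorial fact about the maximum distance-two-clique into a statement about the local stable coloring. The delicate point is explaining why adjacency-restricted moves \emph{gain} power here: the global moves of \kwl average over the whole vertex set and are thereby blind to the twist, while the local moves are pinned to the sparse gadget structure that encodes the parity, and it is this parity that controls clique extendability. I would address this by an explicit inductive bookkeeping of the local neighbor multisets, tracking how they encode distance-two incidences and the number of extensions of a partial clique, and by pinpointing the depth in the unrolled tree at which $G_k$ and $H_k$ necessarily diverge. The \kwl direction, by contrast, is classical once the pebble-game reformulation and the treewidth bound for $K_{k+1}$ are in place.
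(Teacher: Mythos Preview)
The paper does not prove this lemma; it is quoted from \citep{Morris2020b} without argument, and the later proof of \cref{thm:main} again defers the analogous step to \citep[Appendix~C.1.1, Proof of Lemma~9]{Morris2020b}. So there is no in-paper proof to compare against; your outline is to be measured against the standard argument one expects from that source, and at that level your two halves match it: a CFI pebble-game lower bound for \kwl, and the distance-two-clique discrepancy for \localkwl.

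Your \kwl direction is the classical argument and is essentially correct. One imprecision: a pebble on an edge-cloud vertex $e^{i}$ pins the base \emph{edge} $e$, not a base vertex, so the claim ``Spoiler controls only $k$ of the $k+1$ base vertices'' is not literally the right invariant. The clean way to phrase it is via the cops-and-robber game on $K_{k+1}$: treewidth $k$ means $k$ cops lose, and the robber's escape strategy is exactly Duplicator's twist-relocation strategy. With that reformulation your argument goes through; you should also note that moving the twist across a base edge $e$ flips $e^{0}\leftrightarrow e^{1}$, so Duplicator must route the twist along \emph{unpebbled} base edges, which the robber strategy guarantees.

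The \localkwl direction is still a plan rather than a proof, as you yourself flag. You have the right witness---a $k$-tuple drawn from the size-$(k+1)$ distance-two-clique---and the right tool (\cref{ktrees}). The gap is the sentence ``the unrolled tree records \dots\ the number of ways a partial distance-two-clique extends.'' A local move replaces one coordinate by an \emph{adjacent} vertex, so from a vertex-cloud coordinate you first step into an edge cloud and only after a second step can you land in another vertex cloud; the ``extension by the $(k+1)$-st clique vertex'' is therefore not a single branch but a composite pattern spread over several levels, and whether that pattern is actually determined by the local colour after finitely many rounds is precisely what must be shown. You need to exhibit a concrete finite-depth subtree of $\vec{U}_{T,v_{\vec{x}}}$ that encodes ``all $k$ coordinates share a common vertex-cloud neighbour at distance two,'' and then check that no $H_k$-tuple of the same atomic type produces it. That is the substance of \citep[Appendix~C.1.1]{Morris2020b}; your ``explicit inductive bookkeeping'' would have to reproduce it.
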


We are ready to present the proof of \cref{thm:main}.

\begin{proof}[Proof of \cref{thm:main}]
	Observe that the \wl{k}{k} is the same as the \localkwl. Hence, it suffices to show an infinite family of graphs $(X_k,Y_k)$, $k$ in $\mathbb{N}$, such that (a) \wl{k}{2} does not distinguish $X_k$ and $Y_k$, although (b) \localkwl distinguishes $X_k$ and $Y_k$. 
	
	Let $X_k$ be the graph obtained from the graph $G_k$ as follows. First, for every base vertex $v$ in $V(K)$, every vertex of $V(G_k)$ in the vertex cloud for $v$ receives a color $\text{Red}_v$. Hence, vertex clouds form color classes, where each such class has a distinct color. Similarly, for every base edge $e$ in $E(K)$, every vertex of $V(G_k)$ in the edge cloud for $e$ receives a color $\text{Blue}_e$. Finally, let $\Delta > 3k$. Then, we replace every edge $e$ in $G_k$ by a path of length $\Delta$, such that every vertex on this path is colored with the color $(\{c,c'\})$, where $c$ and $c'$ are the colors of the endpoints of $e$ in $G_k$. We call such path vertices \emph{auxiliary} vertices. The graph $Y_k$ is obtained from $H_k$ by an identical construction. 
	
	First, we show that the \wl{k}{2} does not distinguish the graphs $X_k$ and $Y_k$. We use a modified version of the bijective $k$-pebble game \citep{Gro2017}: (a) we enforce the $k$ pebbles to form at most two components at any point during the game, and (b) when the Spoiler and Duplicator pick the $i^{th}$ pebble from each graph, the Duplicator is required to exhibit a bijection only between the position-$i$ local neighbourhoods of the two pebbling configuration tuples $\boldsymbol{x} \in X_k^k$ and $\boldsymbol{y} \in Y_k^k$, instead of a bijection between the vertex sets of $X_k$ and $Y_k$. 
	Observe that there can be at most two vertices out of $k+1$ vertices in $V(K)$ such that the corresponding vertex clouds contain a tupled vertex, by our choice of $\Delta$. Hence, in the usual parlance of Cai-Fürer-Immerman games \citep{Cai+1992}, the twisted edge can always be hidden among the remaining $(k-1)$ vertices of $K$. This ensures that for all $i \in [k]$, a partial isomorphism between $\boldsymbol{x}\backslash \boldsymbol{x}_i$ and $\boldsymbol{y}\backslash \boldsymbol{y}_i$ can always be extended to a bijective mapping between the $i$-local-neighborhoods of $\boldsymbol{x}$ and $\boldsymbol{y}$. Hence, the Duplicator cannot win this pebble game and therefore, $\wl{k}{2}$ cannot distinguish the graphs $X_k$ and $Y_k$.

	Next, we show that the \localkwl distinguishes the graphs $X_k$ and $Y_k$. Our proof closely follows the corresponding proof in \citep{Morris2020b}. Instead of showing a discrepancy in the number of distance-two-cliques, we instead use colored-distance-$(2\Delta+1)$-cliques defined as follows. Let $S$ be a set of vertices belonging to the vertex clouds. The set $S$ is said to form a colored-distance-$(2\Delta+1)$-clique if any two vertices in $S$ are connected by a path of exactly $2\Delta+1$ vertices, of which $2\Delta$ are auxiliary vertices and one vertex is a vertex from an edge cloud. Analogously to their proof, it can be shown that (a) there exists a colored-distance-$(2\Delta+1)$-clique of size $(k + 1)$ inside $X_k$, and (b) there does not exist a colored-distance-$(2\Delta+1)$-clique of size $(k + 1)$ inside $Y_k$, and hence, (c) $X_k$ and $Y_k$ are non-isomorphic. Finally, we claim that the \localkwl is powerful enough to detect colored-distance-$(2\Delta+1)$-cliques. The proof is analogous to \citep[Appendix C.1.1, Proof of Lemma 9]{Morris2020b}. This yields that the \localkwl distinguishes the graphs $X_k$ and $Y_k$. 
\end{proof}

\subsection{Asymptotic running time}\label{time_ext}

In the following, we bound the asymptotic running time of the \wl{k}{s}.
Due to~\cref{lemma:wlk}, we can upper-bound the running time of the \wl{k}{s} for a given graph by upper-bounding the time to construct the $(k,s)$-tuple graph and running the \wlone  variant of~\cref{wlv} on top. \cref{run1} establishes an upper bound on the asymptotic running time for constructing the $(k,s)$-tuple graph from a given graph. Thereto, we assume a \new{$d$-bounded-degree graph} $G$, for $d \geq 1$, i.e., each node has at most $d$ neighbors.

To prove the proposition, we define \new{$(k,s)$-multisets}. Let $G$ be a graph, $k \geq 1$, and $s$ in $[k]$, then the set of $(k,s)$-multisets
\begin{align*}
	S(G)^k_s = \{ \{\!\!\{ v_1, \dots, v_k \}\!\!\}  \mid \vec{v} \in V(G)^k_s \} 
\end{align*}
contains the set of multisets inducing subgraphs of $G$ on at most $k$ nodes with at most $s$ components. The following results upper-bounds the running time for the construction of $S(G)^k_s$.

\begin{algorithm}
	\caption{Generate $(k,s)$-multisets \label{algonew}}
	\begin{algorithmic}[1]
		\Require Graph $G$, $k$, $s$, and $S(G)^s_s$ 
		\Ensure $(k,s)$-multiset $S(G)^k_s$ 
		\State Let $R$ be an empty set data structure
		\For{$M \in S(G)^s_s$}
		\State Let $S$ be a queue data structure containing only $(M,s)$ 
		\While{$S$ not empty}
		\State Pop $(T,c)$ from queue $S$
		\If{$c+1 \leq k$}
		\For{$t \in T$}
		\For{$u \in \delta(t) \cup \{ t \}$}
		\State Add $(T \cup \{u\}, c+1)$ to $S$
		\EndFor
		\EndFor
		\Else
		\State Add $T$ to $R$
		\EndIf
		\EndWhile
		\EndFor
		\State \textbf{return} $R$
	\end{algorithmic}
\end{algorithm}

\begin{proposition}\label{run}
	Let $G$ be a $d$-bounded-degree graph, $k \geq 2$, and $s$ in $[k-1]$. Then \cref{algonew} computes $S^k_s(G)$ in time $\tilde\cO( n^s \cdot k^{k-s} (d+1)^{k-s})$.
\end{proposition}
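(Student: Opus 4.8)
The plan is to bound the total running time as a product of two factors: the number of size-$s$ seed multisets over which the outer \textbf{for}-loop ranges, and the worst-case cost of the inner breadth-first expansion launched from each seed. These two factors will produce the $n^s$ and the $k^{k-s}(d+1)^{k-s}$ parts of the bound, respectively.

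For the outer loop, I would first observe that every multiset of size $s$ over $V(G)$ already lies in $S(G)^s_s$: a size-$s$ multiset has at most $s$ distinct elements and hence induces at most $s$ components, and conversely $S(G)^s_s$ contains only such multisets, so $S(G)^s_s$ is exactly the set of all size-$s$ multisets. Thus $|S(G)^s_s| = \binom{n+s-1}{s} = \cO(n^s) = \tilde{\cO}(n^s)$ for fixed $s$, which supplies the $n^s$ factor and is the only place $n$ enters.

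For the inner loop, I would view the expansion rooted at a fixed seed $(M,s)$ as a tree whose nodes are the pairs $(T,c)$ popped from the queue. Two facts control its size. First, the depth is exactly $k-s$, since each expansion increments the counter $c$ by one, starting at $s$ and terminating (by adding $T$ to $R$) once $c=k$; the restriction $s\in[k-1]$ guarantees this expansion is nonempty. Second, the branching factor is at most $k(d+1)$: from $(T,c)$ the algorithm iterates over the at most $k$ elements $t\in T$ and, for each, over the at most $d+1$ candidates $u\in\delta(t)\cup\{t\}$. Hence the number of nodes at depth $j$ below the seed is at most $(k(d+1))^j$, and summing the geometric series over $j=0,\dots,k-s$ (with base $k(d+1)\ge 2$, as $k\ge2$ and $d\ge1$) yields $\cO((k(d+1))^{k-s}) = \cO(k^{k-s}(d+1)^{k-s})$ nodes per seed. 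Charging $\tilde{\cO}(1)$ per node for the multiset insertion and the insertion into $R$ (each multiset has size at most $k$, so these cost only factors absorbed by $\tilde{\cO}$), the per-seed cost is $\tilde{\cO}(k^{k-s}(d+1)^{k-s})$; multiplying by $|S(G)^s_s|$ gives the claimed $\tilde{\cO}(n^s k^{k-s}(d+1)^{k-s})$. For the correctness part of the statement, I would note that the expansion only adds a vertex $u$ equal or adjacent to an already-present $t\in T$, so the component count never increases beyond $s$, placing every element of $R$ in $S(G)^k_s$; completeness follows by selecting one representative per component of a target multiset inside some seed and recovering the remaining vertices along adjacencies.

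The main obstacle I anticipate is the honest bookkeeping of the inner expansion: since the queue performs no deduplication, the same multiset is enqueued along many distinct paths, so the accounting must bound the size of the \emph{redundant} expansion tree rather than the number of distinct multisets it outputs. The geometric-series estimate above is precisely what tames this redundancy, and the crux is verifying that the branching factor never exceeds $k(d+1)$ uniformly across the tree, which hinges on the degree bound $d$ together with the multiset size remaining at most $k$ throughout the expansion.
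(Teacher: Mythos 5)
Your proposal is correct and follows the same route as the paper: seed the expansion with all $\cO(n^s)$ size-$s$ multisets (each trivially having at most $s$ components), grow each seed through $k-s$ levels with branching factor at most $k(d+1)$, and justify completeness by the observation that any $(c+1)$-element multiset with at most $s$ components arises from a $c$-element one by adding a vertex equal or adjacent to an existing element. In fact your accounting of the expansion tree and of the redundancy handled by the set $R$ is more explicit than the paper's, which states only that the running time ``follows directly.''
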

\begin{proof}
	
	Let $c < k$ and let $T'$ be an element in $S(G)^{c+1}_s$. By definition of $S(G)^{c}_s$ and $S(G)^{c+1}_s$, there exists a $c$-element multiset $T$ in $S(G)^{c}_s$ such that $T' = T \cup \{ v \}$ is in $S(G)^{c+1}_s$ for a node $v$ in $V(G)$. Since $s$ is fixed, $v$ is either in the neighborhood $\delta(w)$ for $w$ in $T$ or $v = w'$ for $w'$ in $T$.  Hence, lines 7 to 9 in~\cref{algonew} generate  $S(G)^{c+1}_s$ from $S(G)^{c}_s$. The set data structure $R$ makes sure that the final solution will not contain duplicates. The running time follows directly when using, e.g., a red–black tree, to represent the set $R$.
\end{proof}

Based on the above result, we can easily construct  $T^k_s(G)$ from  $S^k_s(G)$, implying the following result. 
\begin{proposition}\label{run1}
	Let $G$ be a $d$-bounded-degree graph, $k \geq 3$, and $s$ in $[k-1]$. Then we can compute $T^k_s(G)$ in time $\tilde\cO( n^s \cdot k^{k-s} (d+1)^{k-s + 1} \cdot k! \cdot k)$.
\end{proposition}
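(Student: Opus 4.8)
The plan is to build $T^k_s(G)$ directly on top of~\cref{run}, which already produces the set of $(k,s)$-multisets $S^k_s(G)$ in time $\tilde\cO(n^s \cdot k^{k-s}(d+1)^{k-s})$. From this starting point, I would construct the tuple graph in three stages---(i) expand each multiset into the ordered tuples it represents to obtain the node set $V_T$, (ii) attach the node labels $\ell_T(v_\vec{t}) = \tau_{G[\vec{t}]}$, and (iii) enumerate the local $j$-neighbors to lay down the labeled edges $E_T$---and I would match each of the three extra factors $k!$, $k$, and $(d+1)$ in the target bound to one of these stages.

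For the node set: by~\cref{def:ktuplegraph} every node $v_\vec{t}$ corresponds to a tuple $\vec{t} \in V(G)^k_s$, and every such tuple arises from some multiset in $S^k_s(G)$ by fixing an order of its (at most $k$) elements. Iterating over all orderings of each multiset therefore generates all of $V(G)^k_s$; since a $k$-element multiset admits at most $k!$ orderings, this multiplies the number of objects by $k!$ and yields $\tilde\cO(n^s \cdot k^{k-s}(d+1)^{k-s} \cdot k!)$ tuples. While emitting a tuple $\vec{t}$, I would compute its label $\tau_{G[\vec{t}]}$, i.e.\ a canonical encoding of the labeled subgraph induced on the $k$ positions; as noted in~\cref{vr_ext}, this costs $\cO(k^2)$ per tuple and is absorbed into the $\tilde\cO$-notation. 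Simultaneously I would insert each $v_\vec{t}$ into a lookup structure (a balanced search tree or hash table keyed by the tuple) so that membership of a candidate tuple in $V(G)^k_s$ can later be tested in $\tilde\cO(1)$ time.

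For the labeled edges: since the algorithm (and hence the labeling in~\cref{def:ktuplegraph}) only uses \emph{local} neighbors, for each tuple $\vec{t}$ and each position $j \in [k]$ it suffices to consider the candidates $\phi_j(\vec{t},w)$ with $w \in \delta(v_j)$. As $G$ has degree bounded by $d$, there are at most $d$ such $w$ per position, so each tuple contributes $\cO(k(d+1))$ candidates; for each I would test membership via the lookup structure and, if present, add the edge with label $j$. Multiplying the $\cO(k(d+1))$ per-tuple work by the $\tilde\cO(n^s \cdot k^{k-s}(d+1)^{k-s} \cdot k!)$ tuples yields exactly $\tilde\cO(n^s \cdot k^{k-s}(d+1)^{k-s+1} \cdot k! \cdot k)$, the claimed bound.

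The main obstacle is bookkeeping rather than combinatorics: one must guarantee that tuples produced from distinct multisets, or in distinct orders, are consistently identified; that the membership queries on candidate neighbors run in polylogarithmic time so they vanish into $\tilde\cO$; and that the per-tuple isomorphism-type computation is genuinely $\cO(k^2)$ rather than an uncontrolled canonicalization. Once these data-structure details are handled as above, the running time is dominated by the $\cO(k(d+1))$ neighbor enumeration over the $k!$-fold expansion of the multisets from~\cref{run}, giving the stated bound.
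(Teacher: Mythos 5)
Your proposal matches the paper's own argument: the paper likewise generates $V(G)^k_s$ by enumerating all permutations of each multiset from \cref{run} (the $k!$ factor) and then iterates over each resulting tuple and each of its $k$ components, scanning the at most $d+1$ local candidates per position to build the labeled adjacency, which accounts for the remaining $k\cdot(d+1)$ factor. Your version merely spells out the bookkeeping (lookup structure, $\cO(k^2)$ isomorphism-type computation) that the paper leaves implicit, so the two proofs are essentially identical.
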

\begin{proof}
	The running time follows directly from~\cref{run}. That is, from $S^k_s(G)$, we can generate the set $V^k_s(G)$ by generating all permutations of each element in the former. By iterating over each resulting $(k,s)$-tuple and each component of such $(k,s)$-tuple, we can construct the needed adjacency information. 
\end{proof}

Hence, unlike for the \kwl, the running time of the \wl{k}{s} does not depend on $n^k$ for an $n$-node graph and is solely dictated by $s$, $k$, and the sparsity of the graph.

Moreover, observe that the upper bound given in~\cref{run1}, by leveraging~\cref{lemma:wlk}, also upper-bounds the asymptotic running time for one iteration of the \wl{k}{s}. 

\section{SpeqNets: Sparse, permutation-equivariant graph networks}\label{gn}
We can now leverage the above combinatorial insights to derive sparsity-aware, permutation-equivariant graph networks, denoted \seq{k}{s}. Given a labeled graph $G$, let each $(k,s)$-tuple $\vec{v}$ in $V(G)^k_s$ be annotated with an initial feature $f^{(0)}(\vec{v})$ determined by its (labeled) isomorphism type, e.g., a one-hot encoding of $\tau_{G[\vec{v}]}$. Alternatively, we can also use some application-specific, real-valued feature.  In each layer $t > 0$,  we compute a new feature $f^{(t)}(\vec{v})$ as 
\begin{align}\label{gnngeneral}
    \begin{split}
    	f^{W_1}_{\text{mrg}}\Big(f^{(t-1)}(\vec{v}) ,f^{W_2}_{\text{agg}}\big( & \oms f^{(t-1)}(\phi_1(\vec{v},w)) \mid w \in \delta(v_1)  \text{ and } \phi_1(\vec{v},w) \in V(G)^k_s \cms, \dots, \\ &\oms f^{(t-1)}(\phi_k(\vec{v},w)) \mid w \in \delta(v_k) \text{ and } \phi_k(\vec{v},w) \in V(G)^k_s  \cms \big) \!\Big),
    \end{split}
\end{align}
in  $\bbR^{1 \times e}$, where $W_1^{(t)}$ and $W_2^{(t)}$ are learnable parameter matrices from $\bbR^{d \times e}$ for some $d, e > 0$. Here, $f^{W_2}_{\text{mrg}}$ and $f^{W_1}_{\text{agg}}$ are arbitrary differentiable functions, responsible for merging and aggregating the relevant feature information, respectively. Note that we can naturally handle discrete node and edge labels as well as directed graphs. The following result demonstrates the expressive power of the \seq{k}{s}, in terms of distinguishing non-isomorphic graphs. 
  
\begin{theorem}\label{equal}
	Let $(V, E, \ell)$ be a labeled graph, and let $k \geq 1$ and $s$ in $[k]$. Then for all \mbox{$t\geq 0$}, there exists weights $W_1^{(t)}$ and $W_2^{(t)}$ such that 
	\begin{equation*}
		C^{k,s}_{t}(\vec{v}) = C^{k,s}_{t}(\vec{w}) \iff f^{(t)}(\vec{v}) = f^{(t)}(\vec{w}).
	\end{equation*}
	Hence,
	the following holds for all $k \geq 1$:
	\begin{equation*}
		\text{\seq{k}{s}} \equiv \text{\wl{k}{s}}.
	\end{equation*}
\end{theorem}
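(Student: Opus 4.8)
The plan is to establish the displayed tuple-level equivalence by induction on the layer index $t$, splitting the biconditional into its two implications, and then to deduce the graph-distinguishing statement $\seq{k}{s} \equiv \wl{k}{s}$ as a corollary. Throughout I would exploit that for a fixed graph the tuple set $V(G)^k_s$ is finite, so at every iteration only finitely many colors and finitely many feature vectors occur; this finiteness is what makes the existence of suitable weights possible. The base case $t=0$ is immediate: choosing the initial features $f^{(0)}(\vec{v})$ to be a one-hot encoding of the isomorphism type $\tau_{G[\vec{v}]}$ induces exactly the partition given by $C^{k,s}_0$.

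For the forward implication, $C^{k,s}_t(\vec{v}) = C^{k,s}_t(\vec{w}) \Rightarrow f^{(t)}(\vec{v}) = f^{(t)}(\vec{w})$, I would show that it holds for \emph{any} choice of weights, which is the statement that the network is no more powerful than the algorithm. By the refinement rule, $C^{k,s}_t(\vec{v}) = C^{k,s}_t(\vec{w})$ forces $C^{k,s}_{t-1}(\vec{v}) = C^{k,s}_{t-1}(\vec{w})$ together with equality of the per-position neighbor-color multisets $M^{\delta,k,s}_{t-1}$. The induction hypothesis turns the color equality at step $t-1$ into feature equality, so the map sending a step-$(t-1)$ color to its (well-defined) feature is a function; applying it componentwise shows the neighbor-\emph{feature} multisets feeding the aggregation in \cref{gnngeneral} coincide for $\vec{v}$ and $\vec{w}$. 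Since the merge and aggregate maps are deterministic functions of exactly these inputs, $f^{(t)}(\vec{v}) = f^{(t)}(\vec{w})$ follows.

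The hard direction is the converse, $f^{(t)}(\vec{v}) = f^{(t)}(\vec{w}) \Rightarrow C^{k,s}_t(\vec{v}) = C^{k,s}_t(\vec{w})$, where I must \emph{construct} weights $W_1^{(t)}, W_2^{(t)}$ making the features at least as discriminative as the coloring. Using finiteness, at layer $t$ only finitely many multisets of step-$(t-1)$ features can arise over the local neighborhoods; the key step is to pick $W_2^{(t)}$ so that the aggregation acts \emph{injectively} on this finite family of multisets, and $W_1^{(t)}$ so that the merge is injective in its pair of arguments. That such injective multiset functions are realizable by the architecture is the standard device behind \wlone-type universality results \citep{Xu+2018b, Mor+2019, Azi+2020}: over a finite universe one can encode a multiset injectively, and composing an injective aggregation with an injective merge yields an injective update. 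Granting this, the induction runs: if $f^{(t)}(\vec{v}) = f^{(t)}(\vec{w})$, injectivity of the merge forces $f^{(t-1)}(\vec{v}) = f^{(t-1)}(\vec{w})$ and equality of the aggregated neighbor features, injectivity of the aggregation forces the neighbor-feature multisets to agree, and the induction hypothesis (the biconditional at step $t-1$) then yields equality of the neighbor-color multisets and of $C^{k,s}_{t-1}$, hence $C^{k,s}_t(\vec{v}) = C^{k,s}_t(\vec{w})$.

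I expect the main obstacle to be exactly this realizability of simultaneously injective merge and aggregation maps under the architecture's parameterization, together with the bookkeeping that a \emph{single} weight choice must serve both implications at every layer; finiteness of $V(G)^k_s$ and of the attainable feature sets is the lever that resolves it, so I would isolate the injective-multiset-encoding fact as a short lemma and reuse it layer by layer. Finally, to obtain $\seq{k}{s} \equiv \wl{k}{s}$ I would lift the tuple-level equivalence to graphs: since there are at most $|V(G)^k_s|$ refinement steps before stabilization, the stable feature partition equals the stable coloring $C^{k,s}_\infty$, and distinguishing two graphs amounts to a discrepancy in the histogram of stable colors over $V(G)^k_s$; choosing an injective graph-level readout (again by finiteness) that pools the multiset of tuple features reproduces exactly this histogram, so the network distinguishes a pair of graphs precisely when the \wl{k}{s} does.
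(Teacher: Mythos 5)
Your proposal is correct and follows essentially the same route as the paper's proof: both arguments hinge on realizing injective multiset-aggregation and merge maps via the parameter matrices (the paper invokes Theorem~2 of \citep{Mor+2019} for exactly this), with finiteness of $V(G)^k_s$ guaranteeing that only finitely many multisets must be separated at each layer, and an induction over layers carrying the equivalence forward. Your write-up is more explicit about the easy direction (holding for arbitrary weights) and the graph-level readout, but these are elaborations of, not deviations from, the paper's argument.
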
 
\begin{proof}[Proof sketch]
	First, observe that the \wl{k}{s} can be simulated on an appropriate node- and edge-labeled graph, see~\cref{lemma:wlk}. Secondly, following the proof of~\cite[Theorem 2]{Mor+2019}, there exists a parameter matrix $W_2^{(t)}$ such that we can injectively map each multiset in~\cref{gnngeneral}, representing the local $j$-neighbors for $j$ in $[k]$, to a $d$-dimensional vector. Moreover, we concatenate $j$ to each such vector to distinguish between different neighborhoods. Again, by~\cite[Theorem 2]{Mor+2019}, there exists a parameter matrix $W_1^{(t)}$ such that we can injectively map the set of resulting $k$ vectors to a unique vector representation. Alternatively, one can concatenate the resulting $k$ vectors and use a multi-layer perceptron to learn a joint lower-dimensional representation.
\end{proof}

It is not possible to come up with an architecture, i.e., instantiations of $f^{W_1}_{\text{mrg}}$ and  $f^{W_2}_{\text{agg}}$ such that it becomes more powerful than the \wl{k}{s}, see~\citep{Mor+2019}. \emph{However, all results from the previous section can be lifted to the neural setting,} see also~\cref{connect}.
Analogously to GNNs, the above architecture can naturally handle continuous node and edge labels. By using the tools developed in~\citep{Azi+2020}, it is straightforward to show that the above architecture is universal, i.e., it can approximate any possible permutation-invariant function over graphs up to an arbitrarily small additive error.

\subsection{Node-, edge-, and subgraph-level learning tasks}

The above architecture computes representation for $k$-tuples, rendering it mostly suitable for graph-level learning tasks, e.g., graph classification or regression. However, it is also possible to derive neural architectures based on the \wl{k}{s} for node- and edge-level learning tasks, e.g., node or link prediction. Given a graph $G$, to learn a node feature for node $v$, we can simply pool over the feature learned for $(k,s)$-tuples containing the node $v$ as a component. That is, let $t \geq 0$, then we consider the multisets 
\begin{equation}\label{nodepool}
	m^t(v)_i = \{\!\!\{ f^{(t-1)}(\vec{t}) \mid  \vec{t} \in V(G)^k_s \text{ and } t_i = v  \}\!\!\}
\end{equation}
for $i$ in $[k]$.
Hence, to compute a vectorial representation of the node $v$, we compute a vectorial representation of  $m^t(v)_i$ for $i$ in $[k]$, e.g., using a neural architecture for multi-sets, see~\citep{Wag+2021}, followed by learning a joint vectorial representation for the node $v$. Again, by~\citep{Azi+2020}, it is straightforward to show that the above architecture is universal, i.e., it can approximate any possible permutation-equivariant function over graphs up to an arbitrarily small additive error. Also, the above approach can be straightforwardly generalized to learn subgraph representations on an arbitrary number of vertices.

\section{Experimental evaluation}
Here, we aim to empirically investigate the learning performance of the  kernel, see~\cref{vr_ext}, and neural architectures, see~\cref{gn}, based on the \wl{k}{s}, compared with standard kernel and (higher-order) GNN baselines. Concretely, we aim to answer the following questions. 

\begin{description}
	\item[Q1] Do the \wl{k}{s}-based algorithms, both kernel and neural architectures, lead to improved classification and regression scores on real-world, graph-level benchmark datasets compared with dense algorithms and standard baselines?
	
	\item[Q2] How does the \seq{k}{s} architecture compare to standard GNN baselines on node-classification tasks?
	
	\item[Q3] To what extent does the \wl{k}{s} reduce computation times compared with architectures induced by the \kwl?
	
	\item[Q4] What is the effect of $k$ and $s$ with respect to computation times and predictive performance?
\end{description}

The source code of all methods and evaluation procedures is available at \url{https://www.github.com/chrsmrrs/speqnets}. 

\begin{table}[t!]
	\begin{center}
		\caption{Dataset statistics and properties for graph-level prediction tasks, $^\dagger$---Continuous vertex labels following~\cite{Gil+2017}, the last three components encode 3D coordinates.}
		\resizebox{1.0\textwidth}{!}{ 	\renewcommand{\arraystretch}{1.05}
			\begin{tabular}{@{}lcccccc@{}}\toprule
				\multirow{3}{*}{\vspace*{4pt}\textbf{Dataset}}&\multicolumn{6}{c}{\textbf{Properties}}\\
				\cmidrule{2-7}
				                         & Number of  graphs & Number of classes/targets & $\varnothing$ Number of nodes & $\varnothing$ Number of edges & Node labels              & Edge labels \\ \midrule
				$\textsc{Enzymes}$       & 600               & 6                         & 32.6                          & 62.1                          & \cmark                   & \xmark      \\
				$\textsc{IMDB-Binary}$   & 1\,000            & 2                         & 19.8                          & 96.5                          & \xmark                   & \xmark      \\
				$\textsc{IMDB-Multi}$    & 1\,500            & 3                         & 13.0                          & 65.9                          & \xmark                   & \xmark      \\
				$\textsc{Mutag}$         & 188               & 2                         & 17.9                          & 19.8                          & \cmark                   & \xmark      \\
								
				$\textsc{NCI1}$          & 4\,110            & 2                         & 29.9                          & 32.3                          & \cmark                   & \xmark      \\
				    
				$\textsc{PTC\_FM}$       & 349               & 2                         & 14.1                          & 14.5                          & \cmark                   & \xmark      \\
				$\textsc{Proteins}$      & 1\,113            & 2                         & 39.1                          & 72.8                          & \cmark                   & \xmark      \\
				$\textsc{Reddit-Binary}$ & 2\,000            & 2                         & 429.6                         & 497.8                         & \xmark                   & \xmark      \\ 
				\midrule
				$\textsc{Alchemy}$       & 202\,579          & 12                        & 10.1                          & 10.4                          & \cmark                   & \cmark      \\
				$\textsc{Qm9}$           & 129\,433          & 12                        & 18.0                          & 18.6                          & \cmark (13+3D)$^\dagger$ & \cmark (4)  \\
				\bottomrule
			\end{tabular}}
		\label{ds}
	\end{center}
\end{table}

\begin{table}[t!]
	\begin{center}
		\caption{Dataset statistics and properties for node-level prediction tasks.}
		\resizebox{.58\textwidth}{!}{ 	\renewcommand{\arraystretch}{1.05}
			\begin{tabular}{@{}lccc@{}}\toprule
				\multirow{3}{*}{\vspace*{4pt}\textbf{Dataset}}&\multicolumn{3}{c}{\textbf{Properties}}\\
				\cmidrule{2-4}
				                     & Number of nodes & Number of edges & Number of node features \\ \midrule
				$\textsc{Cornell}$   & 183             & 295             & 1\,703                  \\
				$\textsc{Texas}$     & 183             & 309             & 1\,703                  \\
				$\textsc{Wisconsin}$ & 251             & 490             & 1\,703                  \\
							
				\bottomrule
			\end{tabular}}
		\label{dss}
	\end{center}
\end{table}

\paragraph{Datasets} To compare the \wl{k}{s}-based kernels, we used the well-known graph-classification benchmark datasets from~\citep{Mor+2020}, see~\cref{ds} for dataset statistics and properties.\footnote{All datasets are publicly available at \url{www.graphlearning.io}.} To compare the \seq{k}{s} architecture with GNN baselines, we used the \textsc{Alchemy}~\citep{Che+2019b} and the \textsc{Qm9}~\citep{Ram+2014,Wu+2018} graph regression datasets, again see~\cref{t2} for dataset statistics and properties. Following~\cite{Morris2020b}, we opted for not using the 3D-coordinates of the \textsc{Alchemy} dataset to solely show the benefits of the (sparse) higher-order structures concerning graph structure and discrete labels. To investigate the performance of the architecture for node classification, we used the \textsc{WebKB} datasets~\citep{Pei+2020}, see~\cref{dss} for dataset statistics and properties.

\paragraph{Kernels}
We implemented the \wl{k}{s} and \wl{k}{s}$\!\!^+$ for $k$ in $\{2,3\}$ and $s$ in $\{1, 2 \}$. We compared our kernels to the Weisfeiler--Leman subtree kernel (\wlone)~\citep{She+2011}, the Weisfeiler--Leman Optimal Assignment kernel (\wloa)~\citep{Kri+2016}, the graphlet kernel (\gr)~\citep{She+2009}, and the shortest-path kernel~\citep{Bor+2005} (\shp). Further, we implemented the higher-order kernels \localkwl, \pluskwl, \deltakwl, and  \kwl kernel for $k$ in $\{2,3\}$ as outlined in~\citep{Morris2020b}. All kernels were (re-)implemented in \CC[11]. For the graphlet kernel, we counted (labeled) connected subgraphs of size 3. We followed the evaluation guidelines outlined in~\citep{Mor+2020}. 

\paragraph{Neural architectures} We used the \gineps and \textsf{Gin-$\varepsilon$-JK} architectures~\citep{Xu+2018b} as neural baselines. For data with (continuous) edge features, we used a $2$-layer MLP to map them to the same number of components as the node features and combined them using summation (\gine and \gineeps). For the evaluation of the \seq{k}{s} neural architectures of~\cref{gn}, we implemented them using \textsc{PyTorch Geometric}~\citep{Fey+2019}, using a  Python-wrapped \CC[11] preprocessing routine to compute the computational graphs for the higher-order GNNs. We used the \gineps layer to express $f^{W_1}_{\text{mrg}}$ and $f^{W_2}_{\text{agg}}$ of~\cref{gnngeneral}. 	
For the GNN baseline for the \textsc{Qm9} dataset, following~\citep{Gil+2017}, we used a complete graph, computed pairwise $\ell_2$ distances based on the 3D coordinates, and concatenated them to the edge features. We note here that our intent is not the beat state-of-the-art, physical knowledge-incorporating architectures, e.g., \textsf{DimeNet}~\citep{Kli+2020} or \textsf{Cormorant}~\citep{And+2019}, but to solely show the benefits of the local, sparse higher-order architectures compared to the corresponding ($1$-dimensional) GNN. For the \seq{k}{s} architectures, in the case of the \textsc{Qm9} dataset, to compute the initial features, for each $(k,s)$-tuple, we concatenated the node and edge features, computed pairwise $\ell_2$ distances based on the 3D coordinates, and a one-hot encoding of the (labeled) isomorphism type. Finally, we used a $2$-layer MLP to learn a joint, initial vectorial representation. For the node-classification experiments, we used mean pooling to implement~\cref{nodepool} and a standard \textsf{GCN} or \textsf{GIN} layer for all experiments, including the \seq{k}{s} architectures. Further, we used the architectures (\textsf{SDRF}) outlined in~\citep{Top+2021} as baselines. 

For the kernel experiments, we computed the (cosine) normalized Gram matrix for each kernel. We computed the classification accuracies using the $C$-SVM implementation of \textsc{LibSVM}~\citep{Cha+11}, using 10-fold cross-validation. We repeated each 10-fold cross-validation ten times with different random folds and report average accuracies and standard deviations. 

Following the evaluation method proposed in~\citep{Mor+2020}, the $C$-parameter was selected from $\{10^{-3}, 10^{-2}, \dotsc, 10^{2},$ $10^{3}\}$ using a validation set sampled uniformly at random from the training fold (using 10\% of the training fold). Similarly, the numbers of iterations of the
\wl{k}{s}, \wl{k}{s}$\!\!^+$, \wlone, \wloa, \localkwl, \pluskwl, and \kwl were selected from $\{0,\dotsc,5\}$ using the validation set. Moreover, for the \wl{k}{s}$\!\!^+$ and \pluskwl, we only added the label function $\#$ on the last iteration to prevent overfitting.
We report computation times for the \wl{k}{s}, \wl{k}{s}$\!\!^+$, \wloa, \localkwl, \pluskwl, and \kwl with five refinement steps. 

All kernel experiments were conducted on a workstation with 791\si GB of RAM using a single core. Moreover, we used the GNU \CC Compiler 4.8.5 with the flag \texttt{-O2}.

For comparing the kernel approaches to GNN baselines, we used 10-fold cross-validation and again used the approach outlined in~\citep{Mor+2020}. The number of components of the (hidden) node features in $\{ 32, 64, 128 \}$ and the number of layers in $\{ 1,2,3,4,5\}$ of the \gin and \gineps layer were again selected using a validation set sampled uniformly at random from the training fold (using 10\% of the training fold). We used mean pooling to pool the learned node embeddings to a graph embedding and used a $2$-layer MLP for the final classification, using a dropout layer with $p = 0.5$ after the first layer
of the MLP. We repeated each 10-fold cross-validation ten times with different random folds and report the average accuracies and standard deviations. Due to the different training methods, we do not provide computation times for the GNN baselines. 

For the larger molecular regression tasks \textsc{Alchemy} and \textsc{Qm9}, we closely followed the hyperparameters found in~\citep{Che+2019b} and~\citep{Gil+2017}, respectively, for the \gineeps layers. That is, we used six layers with 64 (hidden) node features and a set2seq layer~\citep{Vin+2016} for graph-level pooling, followed by a $2$-layer MLP for the joint regression of the twelve targets. We used the same architecture details and hyperparameters for the \seq{k}{s}. For the \textsc{Alchemy}, we used the subset of 12\,000 graphs from~\citep{Morris2020b}. For both datasets, we uniformly and at random sampled 80\% of the graphs for training, and 10\% for validation and testing, respectively. Moreover, following~\citep{Che+2019b,Gil+2017}, we normalized the targets of the training split to zero mean and unit variance. We used a single model to predict all targets. Following~\citep[Appendix C]{Kli+2020}, we report mean standardized MAE and mean standardized logMAE. We repeated each experiment five times and report average scores and standard deviations. We used the provided ten training, validation, and test splits for the node-classification datasets. All neural experiments were conducted on a workstation with one GPU card with 32GB of GPU memory.

To compare training and testing times between the $(2,1)$-\textsf{SpeqNet},  $(2,2)$-\textsf{SpeqNet}, \gineeps architectures, we trained all three models on \textsc{Alchemy (10k)} and \textsc{Qm9} to convergence, divided by the number of epochs, and calculated the ratio with respect to the average epoch computation time of the $(2,1)$-\textsf{SpeqNet} (average computation time of dense or baseline layer divided by average computation time of the $(2,1)$-\textsf{SpeqNet}). Contrary to the kernel timing experiments, we did not take into account the time of the preprocessing routine to compute the computational graphs to focus purely on the neural component of the architecture. Clearly, the time for the preprocessing of \seq{k}{s} with small $s$ is much smaller than that of, e.g., the \deltakwl.

\begin{table}[t]\centering			
	\caption{Classification accuracies in percent and standard deviations,  \textsc{Oot}--- Computation did not finish within one day, \textsc{Oom}--- Out of memory. The \wl{2}{2} and \wl{2}{2}$\!\!^+$ are omitted as they are equivalent to the $\delta$-$2$-LWL and $\delta$-$2$-LWL\xspace$\!\!^+$.}
	\label{t2}	
	\resizebox{.975\textwidth}{!}{ 	\renewcommand{\arraystretch}{1.05}
		\begin{tabular}{@{}c <{\enspace}@{}lcccccccc@{}}	\toprule
			& \multirow{3}{*}{\vspace*{4pt}\textbf{Method}}&\multicolumn{8}{c}{\textbf{Dataset}}\\\cmidrule{3-10}
			& & {\textsc{Enzymes}}         &  {\textsc{Imdb-Binary}}      & {\textsc{Imdb-Multi}}           & {\textsc{Mutag}}       & {\textsc{NCI1}}           & 
			{\textsc{Proteins}}         & {\textsc{PTC\_MR}}         &
			{\textsc{Reddit-Binary}  } \\	\toprule
			\multirow{4}{*}{\rotatebox{90}{\hspace*{-3pt}Baseline}}  & \gr                           & 29.9  \scriptsize	$\pm  0.8$        & 59.3     \scriptsize $\pm 0.9$          & 39.2 \scriptsize $\pm 0.6$            & 72.5 \scriptsize $\pm 1.7$           & 66.2 \scriptsize $\pm 0.2$          & 71.5   \scriptsize $\pm  0.5$        & 56.6 \scriptsize $\pm 1.3$          & 59.7 \scriptsize $\pm 0.5$          
			\\ 
			                                     & \shp                          & 40.3 \scriptsize	$\pm  0.9$         & 58.7     \scriptsize $\pm 0.6$          & 39.7 \scriptsize $\pm 0.3$            & 81.7   \scriptsize $\pm 1.5$         & 74.1  \scriptsize $\pm 0.2$         & 75.8 \scriptsize $\pm 0.7$           & 59.6  \scriptsize $\pm 1.5$         & 84.5 \scriptsize $\pm 0.2$          
			\\ 
			                                                         & \textsf{$1$-WL}               & 50.6  \scriptsize	$\pm 1.2 $        & 72.5    \scriptsize $\pm 0.8$           & 50.0 \scriptsize $\pm 0.8$            & 75.9 \scriptsize $\pm 2.0$           & 84.4 \scriptsize $\pm 0.3 $         & 73.1  \scriptsize $\pm 0.6$          & 59.3  \scriptsize $\pm 2.1$         & 73.4 \scriptsize $\pm 0.9$          
			\\ 	
			                                                         & \textsf{WLOA}                 & 57.1  \scriptsize	$\pm 0.8$         & 73.2 \scriptsize $\pm 0.4$              & 49.8 \scriptsize $\pm 0.4$            & 83.4 \scriptsize $\pm 1.2$           & 85.2  \scriptsize $\pm 0.2$         & 73.0 \scriptsize $\pm 0.9$           & 60.3   \scriptsize $\pm 1.9$        & 88.3 \scriptsize $\pm 0.4$   \\       
			\cmidrule{2-10}		
			\multirow{2}{*}{\rotatebox{90}{\hspace*{-3pt}GNN}}       & \textsf{Gin-$\varepsilon$}    & 38.7 \scriptsize	$\pm  1.5$         & 72.9    \scriptsize $\pm 0.7$           & 49.7 \scriptsize $\pm 0.7$            & 84.1 \scriptsize $\pm 1.4$           & 77.7 \scriptsize $\pm 0.8$          & 72.2  \scriptsize $\pm 0.6$          & 55.2 \scriptsize $\pm 1.7$          & 89.8 \scriptsize $\pm 0.4$          
			\\ 
			                                                         & \textsf{Gin-$\varepsilon$-JK} & 39.3  \scriptsize	$\pm 1.6 $        & 73.0  \scriptsize $\pm 1.1$             & 49.6 \scriptsize $\pm 0.7$            & 83.4 \scriptsize $\pm 2.0$           & 78.3   \scriptsize $\pm 0.3 $       & 72.2 \scriptsize $\pm 0.7$           & 56.0  \scriptsize $1.3\pm $         & 90.4 \scriptsize $\pm 0.4$          
			\\ 
			\cmidrule{2-10}	
			\multirow{6}{*}{\rotatebox{90}{\kwl}} 	&
			\textsf{$2$-WL}       &  37.0 \scriptsize	$\pm 1.0$ &   68.1  \scriptsize $\pm 1.7$ & 47.5 \scriptsize $\pm 0.7$ &  85.7 \scriptsize $\pm 1.6$ &  66.9 \scriptsize $\pm 0.3$ &    75.2 \scriptsize $\pm 0.4$ & 60.5  \scriptsize $\pm 1.1$ &\textsc{Oom} 
			\\ 
			                                                         & \textsf{$3$-WL}               & 42.3   \scriptsize	$\pm 1.1$        & 67.1    \scriptsize $\pm 1.5$           & 46.8  \scriptsize $\pm 0.8$           & 85.4  \scriptsize $\pm 1.5$          & \textsc{Oot}                        & \textsc{Oot}                         & 59.0 \scriptsize $\pm 2.0$          & \textsc{Oom}                        
			\\      
			\cmidrule{2-10}			
					
			                                                         & \textsf{$\delta$-$2$-LWL}     & 55.9  \scriptsize	$\pm 1.0$         & 73.0     \scriptsize $\pm 0.7$          & 50.1 \scriptsize $\pm 0.9$            & 85.6  \scriptsize $\pm 1.4$          & 84.6  \scriptsize $\pm 0.3$         & 75.1 \scriptsize $\pm 0.5$           & 61.7 \scriptsize $\pm 2.4 $         & 89.4  \scriptsize $\pm 0.6$         
			\\    		
			                                                         & \textsf{$\delta$-$2$-LWL\xspace$\!\!^+$} & 53.9  \scriptsize	$\pm 1.4$         & 75.6  \scriptsize $\pm 1.0$             & 62.7 \scriptsize $\pm 1.4$            & 84.1  \scriptsize $\pm 2.1$          & \textbf{91.3} \scriptsize $\pm 0.3$ & 79.2 \scriptsize $\pm 1.2$           & 61.6  \scriptsize $\pm 1.3$         & 91.4 \scriptsize $\pm 0.4$          
			\\ 
			                                                         & \textsf{$\delta$-$3$-LWL}     & \textbf{58.2} \scriptsize	$\pm 1.2$ & 72.6     \scriptsize $\pm 0.9 $         & 49.0 \scriptsize $\pm 1.2$            & 84.1   \scriptsize $\pm 1.6$         & 83.2  \scriptsize $\pm 0.4$         & \textsc{Oom}                         & 60.7 \scriptsize $\pm 2.2  $        & \textsc{Oom}                        
			\\ 
			                                                         & \textsf{$\delta$-$3$-LWL\xspace$\!\!^+$} & 56.5 \scriptsize	$\pm 1.4$          & 76.1  \scriptsize $\pm 1.2$             & 64.3 \scriptsize $\pm 1.2$            & 85.4   \scriptsize $\pm 1.8$         & 82.7  \scriptsize $\pm 0.4$         & \textsc{Oom}                         & 61.5   \scriptsize $\pm 1.8$        & \textsc{Oom}                        
			\\ 
						
			\cmidrule{2-10}	
						
			\multirow{6}{*}{\rotatebox{90}{\hspace*{-3pt}\wl{k}{s}}} & \wl{2}{1}                     & 53.7   \scriptsize	$\pm 1.7$        & 73.5     \scriptsize $\pm 0.8$          & 50.8 \scriptsize $\pm 0.7$            & 84.2  \scriptsize $\pm 1.7$          & 82.8  \scriptsize $\pm 0.3$         & 73.2  \scriptsize $\pm 0.6$          & 55.9  \scriptsize $\pm 2.4$         & 76.9  \scriptsize $\pm 0.6$         
			\\ 
			                                                         & \wl{2}{1}$\!\!^+$             & 51.6 \scriptsize	$\pm 1.8$          & 73.7    \scriptsize $\pm 1.1$           & 55.4 \scriptsize $\pm 0.9$            & 79.6   \scriptsize $\pm 3.4$         & 81.9 \scriptsize $\pm 0.3 $         & 76.0  \scriptsize $\pm 0.9$          & 60.2 \scriptsize $\pm 2.1 $         & \textbf{94.7} \scriptsize $\pm 0.3$ 
			\\ 
			                                                         & \wl{3}{1}                     & 53.4 \scriptsize	$\pm 1.4$          & 74.6   \scriptsize $\pm 1.0$            & 51.3 \scriptsize $\pm 0.6$            & 85.3  \scriptsize $\pm 2.4$          & 81.4 \scriptsize $\pm 0.5$          & 72.9  \scriptsize$\pm 1.1$           & 60.2  \scriptsize $\pm 1.7$         & \textsc{Oom}                        
			\\
			                                                         & \wl{3}{1}$\!\!^+$             & 57.0  \scriptsize	$\pm  1.9$        & \textbf{87.1}     \scriptsize $\pm 0.6$ & \textbf{67.1}  \scriptsize $\pm 1.1 $ & 79.2 \scriptsize $\pm 1.5$           & 89.8  \scriptsize $\pm 0.4$         & \textbf{81.2}  \scriptsize $\pm 0.8$ & 59.2 \scriptsize $\pm 2.0$          & \textsc{Oom}                        
			\\ 
			                                                         & \wl{3}{2}                     & 56.4 \scriptsize	$\pm 0.7$          & 73.5    \scriptsize $\pm 0.5$           & 49.7 \scriptsize $\pm 0.6$            & \textbf{86.4}  \scriptsize $\pm 2.6$ & 84.9 \scriptsize $\pm 0.4$          & 75.1 \scriptsize $\pm 0.9$           & 61.9  \scriptsize $\pm 2.4$         & \textsc{Oom}                        \\ 
			                                                         & \wl{3}{2}$\!\!^+$             & 55.8 \scriptsize	$\pm 1.7$          & 78.1   \scriptsize $\pm 1.4$            & 59.5 \scriptsize $\pm 1.0$            & 84.5 \scriptsize $\pm 1.9$           & 89.4  \scriptsize $\pm 0.3$         & 78.8 \scriptsize $\pm 0.6$           & \textbf{62.3} \scriptsize $\pm 3.3$ & \textsc{Oom}                        \\ 
			\bottomrule
		\end{tabular}}
\end{table}

\begin{table}[t]
	\caption{Additional experimental results for graph regression and node classification.}
	\label{t2ll}
	\centering		
	\subfloat[][Mean MAE (mean std.\ MAE, logMAE) on large-scale (multi-target) molecular regression tasks.]{
		\resizebox{0.49\textwidth}{!}{ 	\renewcommand{\arraystretch}{1.05}
			\begin{tabular}{@{}lcc@{}}	\toprule
				\multirow{3}{*}{\vspace*{4pt}\textbf{Method}}&\multicolumn{2}{c}{\textbf{Dataset}}\\\cmidrule{2-3}
				                  & {\textsc{Alchemy (10k)}}                                            & {\textsc{Qm9}}                                                     \\	\toprule
				\gineeps          & 0.180 {\scriptsize $\pm  0.006$} -1.958  {\scriptsize $\pm  0.047$} & 0.079 {\scriptsize $\pm 0.003$} -3.430  {\scriptsize $\pm 0.080$ } \\
				\cmidrule{1-3}
						
				\text{\seq{2}{1}}    &  0.169
				{\scriptsize $\pm 0.005$} -2.010    {\scriptsize $\pm 0.056$}     & 0.078 {\scriptsize $\pm  0.007$} -2.947  \scriptsize $\pm 0.171 $   \\
				\text{\seq{2}{2}}    & \textbf{0.115}
				{\scriptsize $\pm 0.001$} -2.722  {\scriptsize $\pm 0.054$}  &  \textbf{0.029} {\scriptsize $\pm 0.001$}  -4.081  \scriptsize $\pm 0.058$  \\
				\cmidrule{1-3}
				\text{\seq{3}{1}} & 0.180 {\scriptsize $\pm 0.011$} -1.914 {\scriptsize $\pm 0.097$}    & 0.068 {\scriptsize $\pm 0.003$} -3.397  {\scriptsize $\pm 0.086$}  \\
				\text{\seq{3}{2}}    & \textbf{0.115} 
				{\scriptsize $\pm 0.002$} -2.767  {\scriptsize $\pm 0.079$}      & \textsc{Oot}   \\

				\bottomrule
			\end{tabular}}
		
	}
	\qquad
	\subfloat[][Classification accuracies and standard deviations for node classification.]{
		\resizebox{0.427\textwidth}{!}{ 	\renewcommand{\arraystretch}{1.05}
			\begin{tabular}{@{}lccc@{}}	\toprule
				\multirow{3}{*}{\vspace*{4pt}\textbf{Method}}&\multicolumn{3}{c}{\textbf{Dataset}}\\\cmidrule{2-4}
				                  & {\textsc{Cornell}}                      & {\textsc{Texas}}              & {\textsc{Wisconsin}} \\	\toprule
				\textsf{GCN} & 56.5 {\scriptsize $\pm 0.9$}  & 58.2 {\scriptsize 
				$\pm 0.8$}  & 50.9
				{\scriptsize $\pm 0.7$}\\ 
 	             \textsf{GIN} & 51.9 {\scriptsize $\pm 1.1$}  & 55.3  {\scriptsize $\pm 2.7$}  & 48.4  {\scriptsize $\pm 1.6$}\\ 
				\textsf{SDRF} + Undirected & 57.5 {\scriptsize $\pm 0.3$}  & \textbf{70.4} {\scriptsize 
				$\pm 0.6$}  & 61.6
				{\scriptsize $\pm 0.9$}\\ 
				\cmidrule{1-4}
				\text{\seq{2}{1}}    & 63.9 
				{\scriptsize $\pm 1.7$}&66.8  {\scriptsize $\pm 0.9$} & 
				67.7  {\scriptsize $\pm 2.2$}  \\
				\text{\seq{2}{2}} & \textbf{67.9}   {\scriptsize $\pm 1.7$} & 67.3  {\scriptsize $\pm 2.0$} & \textbf{68.4}        
				{\scriptsize $\pm 2.2$} \\
				\text{\seq{3}{1}} & 61.8  {\scriptsize $\pm 3.3$}           & 68.3 {\scriptsize $\pm 1.3$}  &                      
				60.4 {\scriptsize $\pm 2.8$} \\
				\bottomrule
			\end{tabular}}
	}

\end{table}

\subsection{Results and discussion}
In the following, we answer questions \textbf {Q1} to \textbf{Q4}.

\paragraph{A1} \textit{Kernels}  See~\cref{t2}. The \wl{k}{s} for $k,s$ in $\{ 2,3\}$ significantly improves the classification accuracy compared to the \kwl and the \deltakwl, and the other kernel baselines, while being on par with or better than the $\delta$-$2$-\textsf{LWL} and $\delta$-$3$-\textsf{LWL}. The \wl{k}{s} and \wl{k}{s}$\!\!^+$ achieve a new state-of-the-art on five out of eight datasets. Our algorithms also perform vastly better than the neural baselines. 

\textit{Neural architectures} See~\cref{t2ll}. On both datasets, all \seq{k}{s} architectures beat the GNN baseline. On the \textsc{Alchemy} dataset, the \seq{2}{2} and \seq{3}{1} perform best, while on the \textsc{Qm9} dataset, the \seq{2}{2} performs best by a large margin.

\paragraph{A2} See~\cref{t2ll}. Over all three datasets, the \seq{k}{s} architectures improve over the \textsf{GCN} baseline. Specifically, over all datasets, the \seq{2}{1} and the \seq{2}{2} lead to an increase of at least 7\% in accuracy. For example, both architectures beat the \textsf{GCN} baseline by at least 17\% on the \textsc{Wisconsin} dataset. Further, the \seq{k}{s} architectures lead to better accuracies compared to the \textsf{SDRF} architecture, e.g., improving on it by more than 10\% on the \textsc{Cornell} dataset. Hence, node-level tasks also benefit from higher-order information. However, increasing $k$ more does not result in increased accuracies.

\begin{table}[t]\centering	\renewcommand{\arraystretch}{1.05}
	\caption{Overall computation times for selected datasets in seconds  (Number of iterations for WL-based methods: 5), \textsc{Oot}---Computation did not finish within one day (24h), \textsc{Oom}---Out of memory.}
	\label{t1_app}
	\resizebox{0.73\textwidth}{!}{ 	\renewcommand{\arraystretch}{1.05}
		\begin{tabular}{@{}c <{\enspace}@{}lcccccc@{}}	\toprule
			& \multirow{3}{*}{\vspace*{4pt}\textbf{Graph Kernel}}&\multicolumn{6}{c}{\textbf{Dataset}}\\\cmidrule{3-8}
			                                                         &                               & {\textsc{Enzymes}} & {\textsc{IMDB-Binary}} &  {\textsc{IMDB-Multi}}& {\textsc{Mutag}} & {\textsc{NCI1}} & {\textsc{PTC\_MR}} \\	\toprule
			\multirow{4}{*}{\rotatebox{90}{\hspace*{-3pt}}}          & \textsf{$1$-WL}               & $<$1               & $<$1        &    $<$1         & $<$1  & 1.9         &    $<$1     \\
					
			\cmidrule{2-8}	
			\multirow{2}{*}{\rotatebox{90}{Glob.}} 	&
			\textsf{$2$-WL}    &   225.9     &   91.2    & 38.3 & 4.3 &  1\,127.8   &  10.7    \\
			                                                         & \textsf{$3$-WL}               &    55\,242.7         &      17\,565.2    &  4\,977.1  & 259.8 & \textsc{Oot}  & 1324.2 \\

			\cmidrule{2-8}
			\multirow{4}{*}{\rotatebox{90}{Local}}    		
						
			                                                         & \textsf{$\delta$-$2$-LWL}     & 25.2            &    27.3      & 19.8  & $<$1  & 82.2  & 1.1  \\
			                                                         & \textsf{$\delta$-$2$-LWL\xspace$\!\!^+$} &      25.6       &    26.1     & 18.5 & $<$1  & 108.3 & 1.2    \\       
						
			                                                         & \textsf{$\delta$-$3$-LWL}     &   3\,519.0              &  3\,560.3     &   1957.5 & 36.5 & 15\,207.3  & 89.9  \\
			                                                         & \textsf{$\delta$-$3$-LWL\xspace$\!\!^+$} &         3\,674.9    &     3\,636.5     &  2162.3 & 43.6 & 15\,945.6  & 111.1   \\
			\cmidrule{2-8}	
						
			\multirow{6}{*}{\rotatebox{90}{\hspace*{-3pt}\wl{k}{s}}} & \wl{2}{1}                   &      1.6       &    12.0      & 11.0 & $<$1 & 5.8  & $<$1     \\
			                                                         & \wl{2}{1}$\!\!^+$            &     1.7       &    12.6       & 11.0 & $<$1 & 6.7 & $<$1   \\
			                              & \wl{3}{1}   &         51.1         &       1\,040.2               & 1112.2 & 1.4 & 111.3  & 1.9     \\
			                                                         & \wl{3}{1}$\!\!^+$             &   52.1          &     1\,049.4   &  1238.7  & 1.6 & 120.1  & 2.0 \\
			                                                         & \wl{3}{2} &             937.9      &      2\,571.1       & 2252.6 &   19.0     & 3\,502.6  & 29.6  \\
			                                                         & \wl{3}{2}$\!\!^+$            &   1\,046.1          &    2\,937.8      &  2572.2 & 22.4  & 3\,888.7 & 34.4 \\
			\bottomrule
		\end{tabular}}
\end{table}

\begin{table}[t]\centering
	\caption{\label{t2len}Average speed-up ratios over all epochs (training and testing).}
	\resizebox{.32\textwidth}{!}{\renewcommand{\arraystretch}{1.05}
		\begin{tabular}{@{}lcc@{}}
			\toprule
			\multirow{3}{*}{\vspace*{4pt}\textbf{Method}} &       \multicolumn{2}{c}{\textbf{Dataset}}       \\
			\cmidrule{2-3}           & {\textsc{Alchemy (10k)}} & {\textsc{Qm9}} \\ \toprule

			\gineeps                 & 0.5                      & 1.3            \\
			$(2,1)$-\textsf{SpeqNet} & 1.0                      & 1.0            \\
			$(2,2)$-\textsf{SpeqNet} & 1.3                      & 3.4            \\ \bottomrule
		\end{tabular}}
\end{table}

\paragraph{A3} \textit{Kernels} See~\cref{t1_app}. Clearly, for the same $k$ and $s < k$, the \wl{k}{s} improves over the \kwl and its (local) variants. For example, on the \textsc{Enzymes} dataset, the  \wl{2}{1} is more than 20 times faster in terms of computation times compared to the $\delta$-$2$-\textsf{LWL}. The speed-up is even more significant for the (non-local) $2$-\textsf{WL}. This speed-up factor increases more as $k$ increases, e.g., the \wl{3}{1} is more than 1\,700 times faster compared to the $3$-\textsf{WL}, whereas the \wl{3}{2} is still more than 87 times faster,  while giving better accuracies. Similar speed-up factors can be observed over all datasets. 

\textit{Neural architectures} See~\cref{t2len}.
The \seq{2}{1} severely speeds up the computation time across both datasets. Specifically, on the \textsc{Alchemy} dataset, the \seq{2}{1} is 1.3 times faster compared to the \seq{2}{2}, while requiring twice the computation time of the \gineeps but achieving a lower MAE. More interestingly, on the \textsc{Qm9} dataset, the \seq{2}{1} is 3.4 times faster compared to the \seq{2}{2}, while also being 1.3 times faster compared to the \gineeps. The speed-up over \gineeps is most likely due to the latter considering the complete graph to compute all pairwise $\ell_2$ distances, whereas the \seq{2}{1} only considers connected node pairs.

\paragraph{A4} See~\cref{t2,t2ll}. The \wl{2}{1}, \wl{2}{1}$\!\!\!^+$, and \wl{3}{1}$\!\!\!^+$ beat the \wlone on six out of eight datasets. Going from the \wl{3}{1} to the \wl{3}{2} often leads to a slight increase in accuracy, e.g., on the \textsc{Enzymes} and \textsc{Mutag} datasets, while sometimes leading to a drop in accuracy, e.g., on the \textsc{Imdb-Binary} dataset. Hence, a better understanding of the model's generalization performance with respect to $s$ needs to be investigated in future work. The effect is less pronounced for the neural architectures; however, all higher-order models beat the GNN baseline. Reducing $s$ leads to a vast reduction in computation time. For example, on the \textsc{Enzymes} dataset, going from the \wl{3}{2} to the \wl{3}{1} leads to a speed-up factor of more than 18, while only inducing a small drop in terms of accuracy, whereas the \text{\wl{3}{1}$\!\!^+$} beats the \wl{3}{2} while only increasing the computation time by one second. Similar observations can be made across all datasets.

Increasing $s$ often leads to a better performance on the graph regression tasks. For example, on the \textsc{Alchemy} dataset, going from a \seq{2}{1} to a \seq{2}{2} architecture reduces the MAE by 0.054. Similar effects can be observed for the \textsc{Qm9} dataset, and when going from a  \seq{3}{1} to a \seq{3}{2} architecture. On the node-classification datasets, reducing $s$ leads to a slight drop in accuracy, between 0.5 and 4\%, while increasing $k$ beyond $2$ often results in a drop in accuracy.

\section{Conclusion}
To circumvent the exponential running time requirements of the \kwl, we introduced a new heuristic for the graph isomorphism problem, namely the \wl{k}{s}. By varying the parameters $k$ and $s$, the \wl{k}{s} offers a trade-off between scalability and expressivity and, unlike the \kwl, takes into account the potential sparsity of the graph. Based on these combinatorial insights, we designed provably expressive machine-learning architectures, $(k,s)$-\textsf{SpeqNets}, suitable for node-, subgraph-, and graph-level prediction tasks. Empirically, we showed that such architectures lead to state-of-the-art results in node- and graph-level classification regimes while obtaining promising results on graph-level regression tasks. \emph{We believe that this principled approach paves the way for designing new permutation-equivariant architectures, overcoming the limitations of current graph neural networks.}

\section*{Acknowledgements}
Christopher Morris is in part supported by the German Academic Exchange Service (DAAD) through a DAAD IFI postdoctoral, a DFG Emmy Noether grant (468502433), and  RWTH Junior Principal Investigator Fellowship under 
the Excellence Strategy of the Federal Government and the Länder. Gaurav Rattan is supported by the DFG Research Grants Program–RA 3242/1-1–411032549. Siamak Ravanbakhsh's research is in part supported by CIFAR AI Chairs program. Computational resources were provided by Mila.

\setcitestyle{numbers}
\bibliography{bibliography}

\end{document}